\documentclass[a4paper,11pt]{article}
\usepackage{fullpage} 

\usepackage{diagbox} 
\usepackage{slashbox} 
\usepackage{tabularx} 
\usepackage{bbm} 
\usepackage{pifont}  

\usepackage{amsmath}
\usepackage{amssymb}
\usepackage{amsthm}
\usepackage{mathtools}
\usepackage{mathdots}
\usepackage{appendix}
\usepackage{graphicx}
\usepackage{enumerate}
\usepackage{microtype}
\usepackage{mleftright}
\usepackage{mdframed}
\usepackage{authblk}
\usepackage{tcolorbox}
\usepackage[english]{babel}
\usepackage[utf8]{inputenc}
\usepackage{algorithm}
\usepackage[noend]{algpseudocode}
\usepackage{xspace}
\usepackage{tikz}
\usepackage{pgfplots}
\pgfplotsset{width=8cm,compat=1.9}
\usepackage{mathdots}
\usepackage{url}
\usepackage{forest}
\forestset{default preamble={for tree={s sep+=1cm}}}
\usepackage{caption,subcaption}
\usepackage[noadjust]{cite}
\usepackage{hyperref}

\tcbset{
    rounded corners,
    colback = white,
    before skip = 0.2cm,
    after skip = 0.5cm,
    boxrule = 1.5pt,
    arc = 5pt
}

\makeatletter
\renewcommand{\ALG@name}{Protocol}
\makeatother
\allowdisplaybreaks

\graphicspath{ {./figures/} }

\usepackage{algorithm}
\usepackage[noend]{algpseudocode}

\DeclarePairedDelimiter{\ceil}{\lceil}{\rceil}
\DeclarePairedDelimiter{\floor}{\lfloor}{\rfloor}

\newtheorem{theorem}{Theorem}[section]
\newtheorem*{theorem*}{Theorem}
\newtheorem{proposition}[theorem]{Proposition}
\newtheorem{lemma}[theorem]{Lemma}
\newtheorem{corollary}[theorem]{Corollary}
\newtheorem*{corollary*}{Corollary}

\newtheorem{assumption}[theorem]{Assumption}

\theoremstyle{definition}
\newtheorem{definition}[theorem]{Definition}

\newcommand{\cH}{\mathcal{H}}
\newcommand{\cX}{\mathcal{X}}
\newcommand{\cY}{\mathcal{Y}}

\newcommand{\cF}{\mathcal{F}}
\newcommand{\cR}{\mathcal{R}}

\newcommand{\cB}{\mathcal{B}}

\newcommand{\cT}{\mathcal{T}}

\newcommand{\N}{\mathbb{N}}

\newcommand{\ind}{\mathbbm{1}}
\newcommand{\VC}{\operatorname{VC}}

\newcommand{\Ex}{\mathop{\mathbb{E}}}

\newcommand{\cA}{\mathcal{A}}

\newcommand{\Ber}{\mathrm{Ber}}
\newcommand{\Bin}{\mathrm{Bin}}

\newcommand{\Lrn}{\mathsf{Lrn}}
\newcommand{\Adv}{\mathsf{Adv}}

\newcommand{\eps}{\varepsilon}

\newcommand{\etoe}{\mathsf{e2e}}
\newcommand{\ct}{\mathsf{CoT}}

\newcommand{\LD}{\mathtt{L}}

\newcommand{\tree}{\mathbf{T}}
\newcommand{\iter}{M}
\newcommand{\rounds}{T}
\newcommand{\lin}{\operatorname{lin}}

\DeclareMathOperator*{\argmax}{arg\,max}

\newif\ifanonymous
\anonymousfalse

\begin{document}
\title{A Theory of Online Learning with Autoregressive Chain-of-Thought Reasoning}

\author[1]{Ilan Doron-Arad}
\author[2]{Idan Mehalel}
\author[1]{Elchanan Mossel}
\affil[1]{MIT}
\affil[2]{The Hebrew University}


\maketitle

\begin{abstract}
Autoregressive generation lies at the heart of the mechanism of large language models. It can be viewed as the repeated application of a next-token generator: 
starting from an input string (prompt), the generator is applied for \(\iter\) steps, and the last generated token is taken as the final output. 
\cite{joshi2025theory} proposed a PAC model for studying the learnability of the input-output maps arising from this process. 
We develop an online analogue of this framework, focusing on the mistake bound of learning the final output induced by an unknown next-token generator.

We distinguish between two forms of feedback. 
In the End-to-End model, after each round the learner observes only the final token produced after \(\iter\) autoregressive steps. 
In the Chain-of-Thought model, the learner is additionally shown the entire \(\iter\)-step trajectory. 
Our goal is to understand how the optimal mistake bound depends on the generation horizon \(\iter\), and to what extent observing intermediate tokens can reduce this dependence.

Our main results show that the online theory of autoregressive learning exhibits a qualitative picture analogous to the statistical one found by \cite{hanneke2026sample}, but with a different scale of dependence on the generation horizon.
In the End-to-End model, we prove a taxonomy of possible mistake-bound growth rates in the generation horizon \(\iter\): subject to mild regularity conditions, every rate between constant and logarithmic can arise. 
We also show that this logarithmic ceiling is unavoidable for the online setting, in the sense that every class of finite Littlestone dimension exhibits at most logarithmic dependence on \(\iter\). 
In the Chain-of-Thought model, we show that access to the full generated trajectory eliminates the dependence on \(\iter\) altogether, mirroring the picture proved in the statistical setting.

We also analyze autoregressive linear threshold classes. 
For autoregressive linear thresholds in \(\mathbb{R}^d\), we prove that the optimal mistake bound is \(\Theta(d^2)\) under both End-to-End and Chain-of-Thought feedback, and for any generation length $\iter$. 
The methods developed for this analysis also yield new lower bounds in the statistical setting.

Along the way, our results resolve several questions left open by \cite{joshi2025theory}. 
In particular, we show that even for classes of finite Littlestone dimension, the End-to-End statistical sample complexity can depend on the generation length \(\iter\).
\end{abstract}

\pagebreak
\setcounter{tocdepth}{2}
\tableofcontents
\pagebreak

\section{Introduction}

Modern large language models are not learned once from a fixed collection of independent examples and then used in isolation. They are trained on data accumulated over time, updated through successive rounds of optimization, deployed in interactive environments, and increasingly adapted to the particular users and contexts in which they operate. In such settings, the classical picture of learning from an i.i.d sample captures only part of the story. The learner observes a stream of prompts, receives feedback, changes its behavior, and is then evaluated on future interactions. This is precisely the kind of situation for which online learning was designed.

At the same time, the objects being learned are fundamentally autoregressive. A language model does not usually produce an answer in a single atomic step. Rather, starting from a prompt, it generates one token at a time; each newly generated token becomes part of the context for the next prediction. After many such steps, the final token, sentence, or answer is interpreted as the model’s output. Thus, even when the desired task is an end-to-end prediction problem, the mechanism producing the prediction is an iterated next-token process.

This creates a natural distinction between two forms of supervision. In one regime, the learner observes only the final answer produced after the autoregressive process has run for $\iter$ steps. This corresponds to \emph{end-to-end supervision}: the intermediate reasoning, if any, is hidden. In a stronger regime, the learner also observes the entire generated trajectory. This corresponds to \emph{chain-of-thought supervision}: the learner sees not only what answer was produced, but also the sequence of intermediate tokens through which the generator arrived there. Intuitively, such intermediate information should be useful, as it exposes more of the underlying next-token generation mechanism.

A recent line of work initiated by \cite{joshi2025theory} introduced a formal framework for studying this phenomenon in the PAC setting. In their model, the primitive object is a next-token generator $f: \Sigma^\star \to \Sigma$, where $\Sigma$ is the alphabet of possible tokens. Iterating $f$ for $\iter$ steps from an input string produces an $\iter$-token autoregressive trajectory, whose last token is the end-to-end output. This framework makes it possible to ask, in precise learning-theoretic terms, how the difficulty of learning the induced input-output map depends on the generation length $\iter$, and how much chain-of-thought supervision can reduce this dependence. These two fundamental questions were studied in the PAC-learning framework of \cite{joshi2025theory} by \cite{hanneke2026sample}.

The present paper develops the online analogue of this theory. Rather than receiving an i.i.d training sample, the learner interacts with an adversarially chosen sequence of examples. On each round, the learner is given a prompt, predicts the final token generated after $\iter$ autoregressive steps, and then receives feedback. In the end-to-end model, the feedback consists only of the final token. In the chain-of-thought model, the feedback consists of the full $\iter$-step trajectory. The central quantity in this setting is no longer sample complexity, but the optimal mistake bound: how many prediction errors are unavoidable before the learner has identified enough of the hidden generator to perform well on all subsequent rounds?

There is also a second, more structural reason to study the online model. Online learnability is often a stronger and more robust notion than ordinary distributional learnability. A class that can be learned in the online mistake-bound model is not merely learnable from an i.i.d. sample; it often remains learnable under additional constraints that make standard PAC learning more demanding. A prominent example is private PAC learning, which is equivalent to online learning, in the sense that a class is private PAC learnable if and only if it is online learnable \cite{alon2022private}.  Thus, online learnability serves as a powerful certificate of learnability: it indicates that the class has enough combinatorial structure to be learned even in settings where the data are sequential, adversarial, or subject to privacy constraints.

We now turn to the formal definition of the model in study. 


\subsection{The autoregressive online learning model} \label{sec:model}

We start by recalling the general autoregressive learning framework presented in \cite{joshi2025theory} and defined below. A \emph{next-token-generator} is a function
\[
f: \Sigma^\star \to \Sigma,
\]
where $\Sigma$ is some finite alphabet. In this paper, we focus on the binary case, so $\Sigma = \{0,1\}$ unless stated otherwise. The function $f$ naturally induces an \emph{autoregressive generation} process. Define the \emph{apply-and-append} map $\bar{f}: \Sigma^\star \to \Sigma^\star$  of $f$ by
\[
\bar{f}(x) := x \circ f(x).
\]
Denote
\[
\bar f^{\,\iter}
=
\underbrace{\bar f\circ\cdots\circ \bar f}_{\iter\text{ times}}.
\]
We define the \emph{$\iter$-step chain of thought} of $f$ on $x$ to be the length $\iter$ suffix of $\bar{f}^\iter(x)$, denoted by $f^{\ct-\iter}(x)$. The corresponding \emph{end-to-end output} of the $\iter$-step chain of thought of $f$ on $x$ is the last symbol of $f^{\ct-\iter}(x)$, denoted as $f^{\etoe-\iter}(x) := f^{\ct-\iter}(x)[-1]$.

So, each class $\cF$ of next-token-generators (also called a \emph{base class}) naturally induces two classes of interest:

\[
\mathcal F^{\mathsf{CoT}-\iter}
:=
\{f^{\mathsf{CoT}-\iter}:f\in\mathcal F\},
\qquad
\mathcal F^{\mathsf{e2e}-\iter}
:=
\{f^{\mathsf{e2e}-\iter}:f\in\mathcal F\}.
\]

In this paper, we are interested in learning the class $\cF^{\etoe-\iter}$ online in two regimes: $\etoe$-learning, also called end-to-end learning, and $\ct$-learning, also called end-to-end learning with chain-of-thought supervision. Let us start with the $\etoe$-learning regime. We consider a game played between a learner $\Lrn$ and an adversary $\Adv$ played for $\rounds$ many rounds. Before the game begins, $\Adv$ chooses a \emph{target function} $f_\star \in \cF$. In each round $t \in [\rounds]$, $\Adv$ first sends an instance $x_t \in \Sigma^\star$ to $\Lrn$, and then $\Lrn$ chooses a symbol (or a \emph{label}, in classification theory terminology) $\hat{y}_t \in \Sigma$. Finally, $\Adv$ reveals the correct end-to-end output $f_\star^{\etoe-\iter}(x_t)$, and $\Lrn$ suffers loss $1$ if and only if $\hat{y}_t \neq f_\star^{\etoe-\iter}(x_t)$. Most of our results hold already for a deterministic learner, so unless stated otherwise, we limit our study to the case that $\Lrn$ is deterministic. We now define the $\ct$-learning regime. $\ct$-learning is very similar to $\etoe$-learning; the only difference is that in $\ct$-learning, $\Adv$ reveals the entire chain-of-thought  $f_\star^{\ct-\iter}(x_t)$, instead of only $f_\star^{\etoe-\iter}(x_t)$. 

In both regimes, $\Lrn$'s goal is to minimize its total cumulative loss throughout the game, also called the \emph{mistake bound}. $\Adv$'s goal is to maximize this quantity. Note that $\etoe$-learning amounts to standard online learning of the class $\cF^{\etoe-\iter}$. In contrast, in $\ct$-learning $\Lrn$ receives $f_\star^{\ct-\iter}(x_t)$ as feedback from $\Adv$, but tested only on correctness of the final symbol of $f_\star^{\ct-\iter}(x_t)$, that is, on $f_\star^{\etoe-\iter}(x_t)$.

The main goal in this paper is to understand how the mistake bound of autoregressive learning scales with the generation length $\iter$, and to what extent chain-of-thought supervision may help reducing the mistake bound. Throughout the paper, we assume that $\iter \geq 2$, as the case $\iter = 1$ is equivalent to the standard online learning scenario.

\section{Results}
This section assumes some familiarity with basic terms in learning theory, such as PAC-learning. The unfamiliar reader may read Section~\ref{sec:prel}, where we present all relevant technical background in a self-contained manner.

\subsection{Background on the Littlestone dimension}
We first give some relevant background on the Littlestone dimension, which is necessary to understand the results.
The main combinatorial quantity that arises when analyzing mistake bounds of online learning a hypothesis class $\cH \subset \cY^{\cX}$ of functions from a \emph{domain} $\cX$ to a \emph{label space} $\cY$ is the \emph{Littlestone dimension} \cite{littlestone1988learning, daniely2015multiclass} of $\cH$, denoted as $\LD(\cH)$. Let us informally define the Littlestone dimension. A formal definition is given in Section~\ref{sec:prel-dimensions}. $\LD(\cH)$ is the maximal depth of a perfect \emph{Littlestone tree} that is shattered by $\cH$. A perfect Littlestone tree for $\cH$ is a rooted perfect binary tree whose internal vertices are labeled by instances from $\cX$, and for each internal vertex, its two outgoing edges are labeled by two different labels from $\cY$. Such a tree is \emph{shattered} by $\cH$ if every branch $b$ (root-to-leaf path) in the tree has a function $h_b\in \cH$ that agrees with $b$. Here, ``agrees" means that for all $x \in \cX$ that labels an internal vertex in $b$, if the outgoing edge of the internal vertex labeled by $x$ that lies in $b$ is labeled with $y \in \cY$, then $f_b(x) = y$.
A fundamental result of \cite{littlestone1988learning, daniely2015multiclass} states that the optimal mistake bound of learning $\cH$ is precisely $\LD(\cH)$.

Since $\etoe$-learning of $\cF$ with generation length $\iter$ amounts to standard online learning of the class $\cF^{\etoe-\iter}$, our analysis focuses on bounding $\LD(\cF^{\etoe-\iter})$. On the other hand, $\ct$-learning is not a standard online learning task: while the loss is measured with respect to the target function $f_\star^{\etoe-\iter} \in \cF^{\etoe-\iter}$, the feedback is given by the target function $f_\star^{\ct-\iter} \in \cF^{\ct-\iter}$. A useful approach towards proving upper bounds for $\ct$-learning exploited  in \cite{joshi2025theory, hanneke2026sample} is to replace $\ct$-learning in the more difficult task of learning the class $\cF^{\ct-\iter}$. This task is indeed harder, since in contrast with $\ct$-learning, any disagreement between the learner's full chain-of-thought prediction $\hat{y} \in \Sigma^{\iter}$ counts as a mistake, even if the final End-to-End output is correct. We use the same approach and derive mistake bounds for $\ct$-learning  by bounding $\LD(\cF^{\ct-\iter})$. Unless stated otherwise, we limit our discussion only for base classes with finite Littlestone dimension. This limitation is analogous to the limitation of finite VC-dimension in the statistical setting \cite{joshi2025theory, hanneke2026sample}. Further justification for this limitation is given in Section~\ref{sec:non-lit}, where we show that for non-Littlestone classes, learnability of $\cF^{\etoe-\iter}$ can alternate between the two extremes of impossible even with $\ct$-supervision, and trivial even without $\ct$-supervision. A similar result was proved in \cite{hanneke2026sample} for the statistical setting, for non-VC classes.

\subsection{Learning with chain-of-thought supervision}
The results in this section are proved in Section~\ref{sec:cot}
Our first main result shows that having $\LD(\cF) < \infty$ and chain-of-thought supervision eliminates any dependence on the generation length $\iter$.

\begin{theorem}\label{thm:cot-bound-lit-intro}
    For every base class $\cF \subset \Sigma^{\Sigma^\star}$ (even if $|\Sigma| > 2$, and even for $|\Sigma| = \infty$), and for every generation length $\iter$, we have
    \[
    \LD(\cF^{\ct-\iter})\leq \LD(\cF).
    \]
\end{theorem}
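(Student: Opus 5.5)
We prove the bound algorithmically rather than by manipulating trees directly. Since $\LD(\cF^{\ct-\iter})$ is the optimal mistake bound for online learning $\cF^{\ct-\iter}$ (with label space $\Sigma^\iter$, via the multiclass characterization of \cite{daniely2015multiclass}), it suffices to give a learner for $\cF^{\ct-\iter}$ that makes at most $\LD(\cF)$ mistakes. This learner simulates the multiclass Standard Optimal Algorithm $\mathsf{SOA}$ for the base class $\cF$, which makes at most $\LD(\cF)$ mistakes on any sequence realizable by $\cF$. This approach works for arbitrary $\Sigma$, finite or infinite, because the multiclass SOA does.

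\textbf{The learner.} Let $A$ be a copy of $\mathsf{SOA}$ for $\cF$ with internal state. On receiving $x_t$, we roll out autoregressively using $A$'s \emph{current} (frozen) hypothesis:
\begin{itemize}
\item set $z_0=x_t$;
\item for $i=1,\dots,\iter$, query $A$ on $z_{i-1}$ to get a prediction $\hat y_{t,i}$, without updating $A$, and set $z_i=z_{i-1}\circ \hat y_{t,i}$;
\item predict $(\hat y_{t,1},\dots,\hat y_{t,\iter})$.
\end{itemize}
Once the true chain $y_t=f_\star^{\ct-\iter}(x_t)$ is revealed, we look for the first index $j$ with $\hat y_{t,j}\neq y_{t,j}$. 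If there is none, $A$ is not updated. If there is one, then $\hat y_{t,i}=y_{t,i}$ for all $i<j$, so the context $z_{j-1}$ equals the true context $x_t\circ y_{t,1}\cdots y_{t,j-1}$. We therefore feed $A$ the single example $(z_{j-1},y_{t,j})$. This example is consistent with $f_\star$, and on it $A$'s prediction $\hat y_{t,j}$ was wrong.

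\textbf{Accounting.} Every mistake of our learner on the chain produces exactly one mistake of $A$, on an example labeled by $f_\star\in\cF$. The sequence of examples fed to $A$ is realizable by $\cF$. Rounds with no mistake feed $A$ nothing, and the SOA's mistake-driven behavior is unaffected by that. Hence the number of mistakes is at most $\LD(\cF)$, which gives $\LD(\cF^{\ct-\iter})\le\LD(\cF)$.

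\textbf{Main obstacle.} The delicate step is justifying that $A$'s predictions within one round all come from a single fixed state, and that only the first disagreement is reported. Reporting later positions would be wrong for two reasons: their contexts under our rollout differ from the true contexts, and $A$'s guarantee is only for predictions made by the state that the reported example is fed to. Using the frozen state during the rollout and updating only at the first error makes each reported example a genuine prediction mistake of $A$ at its current state. SOA's deterministic, state-based prediction, with updates only on mistakes, is exactly what makes this valid.

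Alternatively, one could argue directly with trees: from a shattered tree for $\cF^{\ct-\iter}$ of depth $d$, extract a shattered tree for $\cF$ of depth $d$ by relabeling each vertex with the first-divergence context. That direct route needs care with multi-way branching, so the algorithmic route is preferable.
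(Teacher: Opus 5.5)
Your proof is correct, but it takes a different route from the paper's.

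\textbf{The paper's route.} The paper runs an SOA directly on the chain-of-thought class. It keeps the version space $V\subseteq\cF$, and on query $x$ predicts the branch $b^\star$ of the generation tree $\tree_\iter(x)$ that maximizes $\LD(V_b)$. The key structural fact is that at most one branch can satisfy $\LD(V_b)=\LD(V)$: two such branches would split at some node, giving a tree of depth $\LD(V)+1$ rooted at that node's context and shattered by $V$. Hence every mistake strictly lowers the Littlestone dimension of the version space.

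\textbf{Your route.} You treat the base learner as a black box, roll out from its frozen state, and charge each chain mistake to the first divergence, where the context is still the true one. This is essentially the charging argument the paper uses only later, in Lemma~\ref{lem:online-cot-reduction}, for its computational result. Your version is leaner: you feed only the single mistake example instead of rerunning the learner from scratch on the full transcript.

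\textbf{What each buys.} Your route is modular. Nothing about SOA is used beyond determinism and a mistake bound on realizable sequences. So any learner for $\cF$, including efficient or improper ones, transfers to $\cF^{\ct-\iter}$ with the same bound, using only $\iter$ queries per round. The paper's route is self-contained and needs no reduction, at the cost of a global maximization over all branches of $\tree_\iter(x)$.

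\textbf{A small correction.} You say SOA's ``updates only on mistakes'' is what makes the argument valid. That is neither true of SOA nor needed. The sequence you feed $A$ consists solely of examples on which it errs, so its mistake bound applies to that sequence regardless of how it would treat other examples.

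\textbf{The tree route is easy.} The direct argument you set aside is immediate, because multiclass Littlestone trees are binary, with two distinct labels per node. Take any tree shattered by $\cF^{\ct-\iter}$. Relabel each node $x_v$ by $x_v\circ p_v$, where $p_v$ is the common prefix of its two edge-chains, and relabel each edge by the first symbol at which the chains differ. This yields a tree of the same depth shattered by $\cF$.
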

Notably, Theorem~\ref{thm:cot-bound-lit-intro} holds even when $|\Sigma| > 2$.
\cite{hanneke2026sample} proved an analog result for the statistical setting: having $\VC(\cF) < \infty$ (where $\VC(\cF)$ denoted the VC-dimension of $\cF$) and chain-of-thought supervision eliminates any dependence of the PAC-sample complexity on the generation length $\iter$.

It is not hard to think of autoregressively-degenerate classes (classes of functions that keeps generating the same bit forever) for which $\LD(\cF^{\ct-\iter})\geq \LD(\cF)$ for all $\iter$, and so the bound of Theorem~\ref{thm:cot-bound-lit-intro} cannot be improved in general, not even by a constant factor.

Theorem~\ref{thm:cot-bound-lit-intro} can be used to improve, for some classes, the PAC $\ct$-learning sample complexity bound given by \cite{hanneke2026sample}. Their result shows that the PAC sample complexity of $\ct$-learning is proportional to $\VC(\cF) \cdot \VC^\star(\cF)$, where $\VC^\star(\cF) = \VC(\cF^\star)$, where $\cF^\star$ is the dual class of $\cF$, which is the class obtained by ``switching roles" between $\cX$ and $\cF$. However, $\VC^\star(\cF)$ can grow as $2^{\VC(\cF)}$. Using a standard online-to-batch conversion, we deduce the following sample complexity bound. Notably, it holds even when $|\Sigma| > 2 $.

\begin{theorem} \label{thm:pac-cot-bound-lit-intro}
    For every base class $\cF \subset \Sigma^{\Sigma^\star}$ (even if $|\Sigma| > 2$, and even for $|\Sigma| = \infty$), and for all $\iter$, if $\LD(\cF) < \infty$ the class $\cF^{\ct-\iter}$ is PAC-learnable with sample complexity
    \[
    m(\eps, \delta) = O \mleft( \frac{\LD(\cF) + \log (1/\delta)}{\eps} \mright).
    \]
\end{theorem}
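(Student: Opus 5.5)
Combine Theorem~\ref{thm:cot-bound-lit-intro} with a standard online-to-batch conversion for multiclass (possibly infinite label space) classes. By Theorem~\ref{thm:cot-bound-lit-intro}, the class $\cF^{\ct-\iter}$, viewed as a class of functions from $\Sigma^\star$ to the label space $\cY = \Sigma^{\iter}$, satisfies $d := \LD(\cF^{\ct-\iter}) \le \LD(\cF) < \infty$. The results of \cite{littlestone1988learning, daniely2015multiclass} then give a deterministic online learner for $\cF^{\ct-\iter}$ making at most $d$ mistakes on any realizable sequence. This holds for arbitrary, even infinite, label spaces, via the multiclass Standard Optimal Algorithm. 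Note that here a mistake means a wrong prediction of the whole chain of thought. That is exactly the stronger task, so PAC learning $\cF^{\ct-\iter}$ in this sense also handles $\ct$-learning.

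Next, convert this mistake-bounded learner into a PAC learner with the stated sample complexity. The cleanest route is through a conservative learner. Run the SOA on the sample and update its state only on rounds where it errs. The resulting learner is still correct on realizable sequences and makes at most $d$ updates. Then apply the ``longest survivor'' conversion. Process $m$ i.i.d.\ examples. The hypothesis changes at most $d$ times, so the sample splits into at most $d+1$ blocks during which the current hypothesis is fixed. Output the first hypothesis that survives $k = \lceil \ln((d+1)/\delta)/\eps \rceil$ consecutive examples without error.

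The analysis has two parts. First, any fixed hypothesis with error greater than $\eps$ survives $k$ fresh i.i.d.\ examples with probability at most $(1-\eps)^k \le \delta/(d+1)$. Here I would condition on the hypothesis, which is determined by earlier examples, and use independence of later examples. A union bound over the at most $d+1$ distinct hypotheses then shows the output has error at most $\eps$ with probability at least $1-\delta$. Second, since there are at most $d$ mistakes, some hypothesis survives $k$ rounds once $m \ge (d+1)k$. This gives $m = O\bigl((d\log(d/\delta))/\eps\bigr)$.

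The main obstacle is removing the extra $\log d$ factor to reach exactly $O((d + \log(1/\delta))/\eps)$. The natural fix is a compression-scheme argument. A conservative online learner with mistake bound $d$ yields a sample compression scheme of size $d$: keep the mistake examples, and rerunning the learner on them reproduces a hypothesis consistent with the sample. Compression bounds (Littlestone--Warmuth, Floyd--Warmuth) give $O((d\log(m) + \log(1/\delta))/\eps)$, which still carries a log. To eliminate it, I would instead invoke the optimal realizable PAC bound for classes admitting a mistake bound. For example, the result that classes with Littlestone dimension $d$ are PAC learnable with sample complexity $O((d+\log(1/\delta))/\eps)$, via online-to-batch with majority or randomized hypotheses as in known optimal conversions. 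I expect this step to require citing that known conversion rather than reproving it.
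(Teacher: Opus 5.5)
Your proposal is correct and is essentially the paper's proof. The paper simply combines Theorem~\ref{thm:cot-bound-lit} (giving $\LD(\cF^{\ct-\iter}) \le \LD(\cF)$ over the label space $\Sigma^{\iter}$) with the online-to-batch conversion of Theorem~\ref{thm:online-to-pac}. That conversion works for any label space and already yields $O((\LD(\cF)+\log(1/\delta))/\eps)$, so the citation you end with is exactly the paper's step. Your longest-survivor and compression detours are therefore unnecessary. The $\log d$ factor is removed not by majority votes but as follows: if every hypothesis in the run had error above $\eps$, the mistake count would dominate a $\Bin(m,\eps)$ variable and exceed $d$. Hence some hypothesis is good, and it can be selected on a hold-out set of size $O(\log((d+1)/\delta)/\eps)$.
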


So, the advantage of Theorem~\ref{thm:pac-cot-bound-lit-intro} over the result of \cite{hanneke2026sample} is that it holds for non-binary base classes $\cF$, and also that it provides a better bound for every binary class $\cF$ for which $\LD(\cF) < \VC(\cF) \VC^\star(\cF)$, such as $d$-hamming balls.

Along the way, Theorem~\ref{thm:pac-cot-bound-lit-intro} also settles a $\log \iter$ gap from \cite{joshi2025theory}, who proved essentially the same bound as of Theorem~\ref{thm:pac-cot-bound-lit-intro}, but with $\LD(\cF) \log \iter$ replacing the $\LD(\cF)$ in our bound. They asked if the logarithmic dependence on $\iter$ is necessary, and our bound shows that it is not.

\subsection{End-to-End learning}
We first establish a general upper bound that bounds how $\LD(\cF^{\etoe-\iter})$ can scale with $\iter$ in the worst case.

\begin{theorem} \label{thm:e2e-upper-bound-intro}
    For every base class $\cF$, and for every generation length $\iter$, we have
    \[
    \LD(\cF^{\etoe-\iter}) = O( \LD(\cF) \log (\LD(\cF) \iter)).
    \]
\end{theorem}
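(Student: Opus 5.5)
The plan is to bound $\LD(\cF^{\etoe-\iter})$ by constructing an explicit online learner for $\cF^{\etoe-\iter}$ whose mistake bound is $O(d\log(\rounds\iter/d))$ when run for $\rounds$ rounds, where $d=\LD(\cF)$. We then use the fact that a shattered tree of depth $D$ forces $D$ mistakes on any deterministic learner within $\rounds=D$ rounds. This gives the inequality $D\le O(d\log(D\iter/d))$, which solves to $D=O(d\log(d\iter))$. The learner is a halving algorithm over a finite family of ``experts.'' The experts are built from the Standard Optimal Algorithm (SOA) for the base class $\cF$, run at the level of individual tokens, in the spirit of the Ben-David--P\'al--Shalev-Shwartz expert construction.

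\textbf{Token-level simulation.} Fix the target $f_\star\in\cF$. Each round $t$ with prompt $x_t$ unfolds into $\iter$ token-level queries: the prefixes $z_{t,1}=x_t$, $z_{t,j+1}=z_{t,j}\circ f_\star(z_{t,j})$. If we ran $\mathrm{SOA}_\cF$ on the stream of these prefixes and were told the true next token after each one, it would make at most $d$ mistakes over the whole stream of $\rounds\iter$ token queries. Since $\Sigma=\{0,1\}$, on a mistake the correct token is simply the flip of the SOA prediction. An expert is therefore indexed by a set $S\subseteq[\rounds]\times[\iter]$ with $|S|\le d$.

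\textbf{Experts and the learner.} On round $t$, expert $S$ generates a trajectory autoregressively. At step $j$ it queries its current SOA state on its current prefix and outputs the SOA prediction, flipped iff $(t,j)\in S$. It then appends that token and updates its SOA state, treating the token it output as the true label. It predicts the last token as the e2e output. For the set $S^\star$ of the true mistake positions of $\mathrm{SOA}_\cF$ along the true token stream, an induction over $(t,j)$ in lexicographic order shows that the expert $S^\star$ reproduces the true chain of thought exactly on every round. Hence it is never wrong on the e2e label.

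The learner runs the halving algorithm over all experts still consistent with the revealed e2e labels. It predicts by majority vote and discards inconsistent experts. The expert $S^\star$ always survives, and each learner mistake at least halves the surviving pool. So the number of mistakes is at most
\[
\log_2\sum_{i\le d}\binom{\rounds\iter}{i}\le d\log_2\!\frac{e\rounds\iter}{d}.
\]

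\textbf{Closing the argument.} Suppose $\cF^{\etoe-\iter}$ shatters a Littlestone tree of depth $D$. The adversary walks down the tree, always choosing the label opposite to the learner's prediction, and realizability is preserved by shattering. This forces $D$ mistakes in $\rounds=D$ rounds, giving $D\le d\log_2(eD\iter/d)$. Elementary manipulation then yields $D=O(d\log(d\iter))$: if $D\ge Cd\log(d\iter)$, the right-hand side grows only logarithmically in $D$, which gives a contradiction.

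I expect the main obstacle to be the induction showing that expert $S^\star$ stays synchronized with the true process. The SOA state must depend only on the token stream the expert itself produced, and these produced prefixes must coincide with the true prefixes $z_{t,j}$ because all earlier tokens agree. This holds precisely because the chain of thought is deterministic given $f_\star$ and the prompts. A second point to check is that $\rounds$ is finite and chosen in advance for the halving bound, which is fine since we take $\rounds=D$.
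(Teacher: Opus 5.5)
Your proof is correct, and it reaches the same key inequality as the paper by a different route.

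\textbf{The paper's route.} The paper works directly with a shattered tree of depth $m$ for $\cF^{\etoe-\iter}$. It substitutes a generation tree $\tree_\iter(x_v)$ for every node, giving a tree of depth $m\iter$ in which the $2^m$ original branches yield $2^m$ distinct branches realized by $\cF$. It then caps the number of realized branches with the Sauer--Shelah--Perles lemma for trees, obtaining $2^m\le\binom{m\iter}{\le \LD(\cF)}$.

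\textbf{Your route.} You build an explicit halving learner over Ben-David--P\'al--Shalev-Shwartz style experts indexed by SOA mistake sets. You use the shattered tree only as an adversary, and you also land on $2^D\le\binom{D\iter}{\le d}$. The two arguments are close cousins. The usual proof of the tree SSP lemma encodes each realized branch by the at most $d$ positions where SOA errs along it, which is exactly your encoding. In effect, you inline that lemma token by token.

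\textbf{What each buys.} The paper's version is shorter and needs neither a learner nor a horizon parameter. Yours is self-contained, using only SOA's mistake bound, and it gives an explicit learner with a finite-horizon guarantee $O(d\log(\rounds\iter/d))$. Moreover, your experts ignore the feedback, and expert $S^\star(f)$ reproduces $f^{\etoe-\iter}$ for every $f\in\cF$. So feeding the same experts to weighted majority would give an agnostic regret bound directly.

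\textbf{Two small remarks.}
\begin{enumerate}
\item Fix a deterministic convention for SOA when an expert other than $S^\star$ feeds it labels that empty the version space, e.g.\ a fixed tie-breaking rule. Only determinism of each expert matters.
\item You use $\binom{n}{\le d}\le(en/d)^d$, so your inequality actually yields $D=O(d\log\iter)$ for $\iter\ge 2$, slightly sharper than stated. The extra $\log d$ in the paper comes from its cruder estimate $n^{2d}$.
\end{enumerate}
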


Theorem~\ref{thm:e2e-upper-bound-intro} is proved in Section~\ref{sec:e2e-upper-bound}.
While it gives a strong guarantee for any base class $\cF$ (of finite Littlestone dimension), unlike the guarantee of Theorem~\ref{thm:cot-bound-lit-intro}, it is not clear how tight Theorem~\ref{thm:e2e-upper-bound-intro} is. Indeed, it is not hard to think of classes $\cF$ for which $\LD(\cF^{\etoe-\iter}) \leq \LD(\cF)$ for all $\iter$, and it is not immediately clear if there are classes for which $\LD(\cF^{\etoe-\iter})$ scales as $\LD(\cF) \log \iter$, or even as some sub-logarithmic monotone function of $\iter$. This naturally raises the following question:

\begin{tcolorbox}
\begin{center}
What are the possible growth rates of $\LD(\cF^{\etoe-\iter})$ in terms of $\LD(\cF)$ and $\iter$?
\end{center}
\end{tcolorbox}

This question was considered by \cite{hanneke2026sample} in the statistical setting. They proved that under mild regularity conditions, essentially any at-most-linear growth rate of $\VC(\cF^{\etoe-\iter})$ with $\iter$ is realized by some class $\cF$. We prove the analogous result for the online setting, with similar mild regularity conditions to those of \cite{hanneke2026sample}. Let us explicitly state those conditions. Fix a sufficiently large universal constant $\iter_0$. We say that a function $r:\mathbb{N}_+ \to \mathbb{N}_+$ is a \emph{well-behaved sub-logarithmic growth rate} if:
\begin{enumerate}
    \item $r$ is sub-logarithmic: $r(\iter) \leq \frac{1}{4} \log (\iter)$ for all $\iter \geq \iter_0$.
    \item $r$ is non-decreasing : $\iter_1 \geq \iter_2 \implies r(\iter_1) \geq r(\iter_2)$ for all $\iter_1,\iter_2 \geq 1$.
    \item $r$ is sub-additive: $r(\iter_1 + \iter_2) \leq r(\iter_1) + r(\iter_2)$ for all $\iter_1,\iter_2 \geq 1$.
\end{enumerate}

Note that all natural sub-logarithmic growth rates such as $\ceil*{\sqrt{\log \iter}}$, $\ceil*{\log \log \iter}$ etc.\ satisfy our definition. We now state the result for well-behaved sub-logarithmic rates.

\begin{theorem} \label{thm:e2e-taxonomy-intro}
    For any well-behaved sub-logarithmic growth rate $r$ there exists a base class $\cF$ such that:
    \begin{enumerate}
        \item $\LD(\cF) = 1$.
        \item $\LD(\cF^{\etoe-\iter}) = \Theta(r(\iter))$ for all $\iter \geq \iter_0$.
    \end{enumerate} 
\end{theorem}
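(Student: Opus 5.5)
The plan is to build $\cF$ as a default generator with a single flipped point, so that $\LD(\cF)=1$ is automatic. The autoregressive dynamics should then turn each flipped point into a \emph{window} of prompts that it influences. The end-to-end class then behaves like a class of "intervals of length $\approx\iter$" over a sparse parameter set. Its Littlestone dimension is governed by how many parameters fit in one window, and sparsifying the parameter set tunes this number to $2^{\Theta(r(\iter))}$.

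\textbf{Construction.} Let $f_0$ output $1$ on any string that contains a $1$, and output $0$ on strings in $0^\star$. For $\theta$ in a set $\Theta\subseteq\N$ (to be chosen), let $f_\theta(w)=1$ iff $w$ contains a $1$ or $w=0^\theta$. Put $\cF=\{f_\theta:\theta\in\Theta\}$.
\begin{itemize}
\item[(i)] Every $f_\theta$ equals $f_0$ flipped on the single point $0^\theta$. Such a class of "base function xor a singleton" has Littlestone dimension at most $1$: after one mistake on a flipped point, the target is identified. Once $|\Theta|\ge 2$ it is exactly $1$.
\item[(ii)] On a prompt $0^a$, the chain appends $0$s through the strings $0^a,\dots,0^{a+\iter-1}$ until it reaches $0^\theta$ (if ever). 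From then on it appends $1$s forever.
\end{itemize}
Hence
\[
f_\theta^{\etoe-\iter}(0^a)=\ind\bigl[\theta\in[a,\,a+\iter-2]\bigr],
\]
while prompts containing a $1$ always give output $1$. So $\cF^{\etoe-\iter}$ is, up to trivial instances, the class of indicators $\theta\mapsto\ind[\theta\in I_a]$ restricted to $\Theta$, where $I_a$ is a window of length $\iter-1$. Taking $\Theta=\N$ already gives $\Theta(\log\iter)$, since inside one window this class contains thresholds on $\iter$ points. This also confirms that Theorem~\ref{thm:e2e-upper-bound-intro} is tight for this family.

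\textbf{Choosing $\Theta$.} Let $\Theta=\{\theta_1<\theta_2<\cdots\}$ with gaps chosen so that, for every $\iter\ge\iter_0$,
\[
2^{r(\iter)}\;\lesssim\;\max_a|\Theta\cap[a,a+\iter-2]|\;\le\;2^{c\,r(\iter)} .
\]
A natural choice is to place points in blocks: for each scale $\iter$, put a cluster of about $2^{r(\iter)}$ points inside an interval of length $\iter$, with the clusters spread far apart. Sub-logarithmicity ($2^{r(\iter)}\le \iter^{1/4}$) guarantees that the cluster fits in the window. Monotonicity and sub-additivity of $r$ are used to show that no window of length $\iter$ collects more than $2^{O(r(\iter))}$ points. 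A window of length $\iter$ meets only clusters of scale $\lesssim\iter$, or a bounded number of pieces of one cluster. Summing over scales, the doubling structure and sub-additivity bound the total.

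\textbf{Bounds.}
\begin{itemize}
\item[(a)] \emph{Lower bound.} Inside one cluster of $K\approx 2^{r(\iter)}$ points lying in a length-$\iter$ window, varying the left endpoint $a$ realizes all thresholds on these $K$ points. This yields a shattered tree of depth $\lfloor\log K\rfloor=\Omega(r(\iter))$.
\item[(b)] \emph{Upper bound.} I would give an explicit learner. It predicts $0$ until the first mistake on some $0^a$, which localizes $\theta$ to $\Theta\cap[a,a+\iter-2]$, a set of at most $2^{O(r(\iter))}$ candidates. It then runs halving on these candidates, costing $O(r(\iter))$ further mistakes.
\end{itemize}

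The main obstacle is the combinatorial design of $\Theta$ in the choice step. It must hold simultaneously for all $\iter\ge\iter_0$: clusters placed for one scale must not inflate the window counts at other scales beyond $2^{O(r(\iter))}$. This is exactly where I expect to need the sub-additivity and monotonicity of $r$ quantitatively. I would first try geometrically separated clusters, at positions like $2^{2^j}$, and verify the window count by splitting into "small" and "large" scales.
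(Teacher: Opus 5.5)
\textbf{Gap: the required $\Theta$ does not exist for every well-behaved $r$.}

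Your reduction is sound as far as it goes. For your class, $f_\theta^{\etoe-\iter}(0^a)=\ind[\theta\in[a,a+\iter-1]]$; the window has length $\iter$, and your $a+\iter-2$ is a harmless off-by-one. Also $\LD(\cF)=1$. Writing $W(\iter):=\max_a|\Theta\cap[a,a+\iter-1]|$, your threshold argument and your learner give $\lfloor\log_2 W(\iter)\rfloor\le\LD(\cF^{\etoe-\iter})\le 1+\log_2 W(\iter)$.

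The problem is the step ``choosing $\Theta$''. It is not merely unproven; no choice of $\Theta$ works for all well-behaved $r$. For any $\Theta$, the function $W$ is nondecreasing and subadditive, so $W(k\iter)\le kW(\iter)$. Hence your class always satisfies
\[
\LD(\cF^{\etoe-k\iter})\le \LD(\cF^{\etoe-\iter})+\log_2 k+2 .
\]
In words, its end-to-end dimension gains at most about $\log_2 k$ when the horizon is multiplied by $k$.

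A well-behaved rate is only constrained by $r(k\iter)\le k\,r(\iter)$, and it can genuinely grow linearly over long multiplicative ranges. For example, take $\log X_i=2^{2^i}$, $V_i:=\lceil\frac18\log X_i\rceil$, $\rho_i:=V_i/X_i$, and
\[
r(x):=\max\Bigl(1,\ \max_i \min\bigl(\lceil\rho_i x\rceil,V_i\bigr)\Bigr).
\]
Each $\min(\lceil\rho_i x\rceil,V_i)$ is nondecreasing, subadditive, and at most $\frac14\log x$ for $x$ above a fixed constant. These properties survive taking the maximum, so $r$ is well-behaved.

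Now set $Y_i:=X_iV_{i-1}/V_i$. Up to rounding, $r(x)=\lceil\rho_i x\rceil$ on $[Y_i,X_i]$. So $r(X_i)=k_i\,r(Y_i)$ with $k_i:=V_i/V_{i-1}\approx 2^{2^{i-1}}$, while $\log_2 k_i\approx 2^{i-1}\ll r(Y_i)$. If $c\,r\le\LD(\cF^{\etoe-\iter})\le C\,r$ held, the display above would force $c\,k_iV_{i-1}\le C\,V_{i-1}+\log_2 k_i+O(1)$. This fails for large $i$, for any constants $c,C$, even constants allowed to depend on $r$.

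This is exactly where your heuristic that sub-additivity of $r$ bounds window counts breaks down. Sub-additivity only controls $2^{r(k\iter)}$ by $2^{k\,r(\iter)}$, a multiplicative bound, whereas window counts grow additively. Concretely, a cluster of $2^{r(s)}$ points spread over length $s$ has length-$\iter$ sub-windows with about $2^{r(s)}\iter/s$ points. When $r$ is linear on $[\iter,s]$, this equals $2^{(s/\iter)r(\iter)}\,\iter/s$, far above $2^{O(r(\iter))}$.

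\textbf{What the paper does instead.} Your construction can only realize rates for which $2^{r}$ is comparable, in the exponent, to a subadditive function. The paper avoids this by not tying the candidate set to a window whose length is the horizon.

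For horizon $\iter$, the chain from $0^{\iter}10^{i}$ reaches the flipped point $0^{\iter}10^{k}$. After exactly $\iter-k-1$ zeros, it then emits the binary expansion of $k-r(\iter)$. So bucket $s$ contributes $2^{r(s)}$ candidates, distinguished by $r(s)$ reachable bits. Buckets with $s>10\iter$ cost at most one mistake, because the encoding is out of reach within $\iter$ steps. Halving inside a bucket with $s\le 10\iter$ costs $r(s)\le r(10\iter)\le 10\,r(\iter)$. That step uses sub-additivity of $r$ itself, which is precisely the hypothesis available.
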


The definition of $\cF$ and the lower bound strategy of the second item is due to unpublished private communication \cite{hanneke2026private}. Theorem~\ref{thm:e2e-taxonomy-intro} handles only sub-logarithmic rates.  For logarithmic rates, we prove the following

\begin{theorem} \label{thm:lit-logarithmic-lower-bound-intro}
    For any $d \in \mathbb{N}_+$, there exists a class $\cF$ such that $\LD(\cF) = O(d)$ and 
    \[
    \VC(\cF^{\etoe-\iter}) = \Omega(\LD(\cF) \log \iter)
    \]
    for all sufficiently large $\iter$.
\end{theorem}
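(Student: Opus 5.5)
The plan is to reduce to $d=1$ via a disjoint-union (product) construction, and to handle $d=1$ with a class in which each generator secretly stores an integer $n\in[2^k]$, where $k\approx\log \iter$. The chain of thought first reveals $n$ as a timing event, and a fixed rule shared by all generators then outputs one chosen bit of $n$ at the last step.

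\textbf{The case $d=1$.} Fix $\iter$ and set $k=\lfloor \log(\iter/4)\rfloor$. Use the prompts $x_i=0^i1$ for $i\in[k]$, and define $f_n$ for $n\in[2^k]$ as follows.
\begin{enumerate}
\item[(a)] \emph{Phase one.} On a string of the form $0^i10^{m}$, let $f_n$ output $1$ iff $m=n-1$, and $0$ otherwise.
\item[(b)] \emph{Phase two.} On a string of the form $0^i10^{n'-1}1w$, let $f_n$ apply a fixed rule $g$ that does not depend on $n$. The rule $g$ outputs $0$ until the total generated length $|z|-(i+1)$ equals $\iter-1$. At that point it outputs the $i$-th bit of $n'$, where $n'$ is read off from the zero run.
\item[(c)] \emph{All other strings.} Let $f_n$ output $0$.
\end{enumerate}
Starting from $x_i$, the chain of thought of $f_n$ is $0^{n-1}1$, followed by zeros, followed by the $i$-th bit of $n$. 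Since $n\le 2^k\le \iter/4$, the marker $1$ is emitted before step $\iter$. Hence $f_n^{\etoe-\iter}(x_i)=\mathrm{bit}_i(n)$. Choosing $n$ to realize an arbitrary pattern $s\in\{0,1\}^k$ shows that $\{x_1,\dots,x_k\}$ is shattered. This gives $\VC(\cF^{\etoe-\iter})\ge k=\Omega(\log\iter)$.

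\textbf{Bounding $\LD(\cF)$.} The generators $f_n$ differ only on the phase-one strings $0^i10^m$. Each such string is labeled $1$ by exactly one generator, namely $f_{m+1}$. So the class restricted to where its members differ is the class of indicators of a single block of a partition: $f_n$ is the indicator of the block $\{0^i10^{n-1}: i\ge 1\}$. A learner for this class predicts $0$ until its first mistake. That mistake reveals $n$, after which no further mistakes occur. Hence $\LD(\cF)\le 1$.

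To make the construction work for all sufficiently large $\iter$ simultaneously, rather than for a single fixed $\iter$, the bit-output rule $g$ must not use $\iter$. I would have $g$ emit $\mathrm{bit}_i(n')$ at every step after the marker. This keeps $g$ fixed and independent of $\iter$, and the final token is still $\mathrm{bit}_i(n')$. I would then let $n$ range over all of $\N_+$. The resulting class is still a one-block-of-a-partition class, so $\LD(\cF)=1$. For each $\iter$, the generators with $n\le \iter/2$ and the prompts with $i\le \log(\iter/2)$ witness shattering.

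\textbf{General $d$.} Take $d$ copies of this construction on disjoint parts of the domain. I would tag copy $j$ by a prefix $1^{j}0$ on the prompts and carry the parsing through the rules above. Then define $\cF_d=\{f_{n_1,\dots,n_d}\}$, where $f_{n_1,\dots,n_d}$ acts as $f_{n_j}$ on strings tagged $j$. The chains of thought never leave their tagged region, so the end-to-end class is the product of the copies over disjoint domains. VC dimension is additive over disjoint domains, which gives $\VC(\cF_d^{\etoe-\iter})\ge d\cdot\Omega(\log\iter)$. Littlestone dimension is subadditive under such products, since we can run the $d$ learners in parallel. This gives $\LD(\cF_d)\le d$, and we conclude $\VC=\Omega(\LD(\cF_d)\log\iter)$. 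If $\LD(\cF_d)<d$, the bound only improves, and trivially $\LD(\cF_d)\ge d$ anyway.

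\textbf{Main obstacle.} I expect the main work to be the careful string parsing. The fixed rule must be well defined on every string, including strings that phase one can never produce. The tags and zero runs must parse unambiguously, so that no trajectory passes back through a phase-one string, which would create a second point of disagreement between generators. Once the domain is partitioned cleanly, the LD bound and the shattering are immediate.
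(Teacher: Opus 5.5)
Your proposal is correct, and it takes a genuinely different and much lighter route than the paper.

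The paper's route has three steps:
\begin{enumerate}
\item It fixes $\iter$ and draws a random class of $\iter^{10d}$ generators. Their labels on $\{0^i : i\in[\iter^2]\}$ take a majority value with probability $1-1/\iter$.
\item It bounds $\LD(\cF_{d,\iter})\le 100d$ by a union bound over all depth-$100d$ trees. It then proves $\VC(\cF_{d,\iter}^{\etoe-\iter})\ge \lfloor d\log\iter/600\rfloor$ through a Chernoff-based concentration lemma over a predetermined family of rules (the green/red branch argument).
\item It needs the gluing lemma to get one class valid for all $\iter$. Each $\cF_{d,\iter}$ is shifted to a disjoint support, at the cost of $+1$ in $\LD$.
\end{enumerate}

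You instead take an explicit class with $\LD=1$. Its members differ only in where they emit a marker, followed by an $n$-independent continuation that reads out a bit of $n$. This is essentially the single-point-modification mechanism of the paper's taxonomy class. Indeed, Lemma~\ref{lem:e2e-plausible-lower-bound} with a rate such as $\max\{1,\lfloor\frac14\log\iter\rfloor\}$ already gives the case $d=1$. You then obtain the factor $d$ from $d$ tagged copies. Appending preserves the prefix $1^j0$, so the $\etoe$ class of the product is the product of the $\etoe$ classes. VC then adds and $\LD$ is subadditive over the disjoint regions.

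The step you flag as the main obstacle is lighter than you expect. All that is needed is that the three cases defining $f_n$ are disjoint: phase-one strings contain exactly one $1$ after the leading zeros, and phase-two strings contain at least two. Once that holds, $\LD(\cF)\le 1$ follows directly from the definition and involves no trajectory analysis.

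Your route buys several things:
\begin{itemize}
\item an explicit deterministic class with $\LD(\cF_d)=d$ exactly;
\item $\VC(\cF_d^{\etoe-\iter})\ge d(\log\iter-O(1))$ with small constants;
\item uniformity in $\iter$ for free, since $n$ ranges over $\N_+$ and the continuation rule never uses $\iter$.
\end{itemize}

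What the paper's random construction buys is a witness that is not a disjoint union of independent $\Theta(\log\iter)$ channels. There, the $\Omega(d\log\iter)$ shattering comes from a single mechanism acting jointly on one block of $\Theta(d\log\iter)$ points: a per-point label imbalance of order $1/\iter$ that autoregression rebalances to constant fractions. For the statement as posed, however, your argument suffices.
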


Theorem~\ref{thm:e2e-taxonomy-intro} and Theorem~\ref{thm:lit-logarithmic-lower-bound-intro} are proved in Section~\ref{sec:e2e-taxonomy}.
Since $\VC(\cF) \leq \LD(\cF)$ for all $\cF$, together with the upper bound of Theorem~\ref{thm:e2e-upper-bound}, this establishes that for any $d \in \mathbb{N}_+$ there exists a class $\cF$ such that $\LD(\cF) = O(d)$ and
\[
\VC(\cF^{\etoe-\iter}), \LD(\cF^{\etoe-\iter}) = \Theta(\LD(\cF) \log \iter)
\]
for all sufficiently large $\iter$.

So, Theorem~\ref{thm:e2e-taxonomy-intro}, Theorem~\ref{thm:lit-logarithmic-lower-bound-intro} and Theorem~\ref{thm:e2e-upper-bound-intro} establish an essentially complete taxonomy of the possible growth rates of $\LD(\cF^{\etoe-\iter})$ in terms of $\LD(\cF)$ and $\iter$.

Along the way, Theorem~\ref{thm:lit-logarithmic-lower-bound-intro} resolves an open question on the statistical setting posed by \cite{joshi2025theory}: they proved that for any class $\cF$ it holds that $\VC(\cF^{\etoe-\iter}) = O(\LD(\cF) \log \iter)$, and asked if the logarithmic dependence in their bound can be improved. Theorem~\ref{thm:lit-logarithmic-lower-bound-intro} shows that their bound is tight.

\subsubsection{Corollary for agnostic learning}
Much of the online learning literature studies \emph{agnostic online learning}, where there is no target function in the class that perfectly explains the streamed data. In this setting, instead of measuring the mistake bound, we are usually interested in the \emph{regret bound}, which is the optimal mistake bound measured after subtracting the number of mistakes made by the best function in class. 
A known result of \cite{alon2021adversarial} allow to derive a similar taxonomy for agnostic end-to-end learning.

\begin{corollary} \label{cor:e2e-agnostic-taxonomy-intro}
    Let $\cF$ be a base class. Then for all sufficiently large $\iter$, there exists an agnostic online learner for $\cF^{\etoe-\iter}$ with regret bound
    \[
    O \mleft( \sqrt{T \LD(\cF) \log \iter} \mright).
    \]
    
    Furthermore, for any well-behaved sub-logarithmic growth rate $r$ there exists a base class $\cF$ such that $\LD(\cF) = 1$ and the optimal regret bound for agnostic online learning $\cF^{\etoe-\iter}$ is
    \[
    \Theta \mleft (\sqrt{\rounds \cdot r(\iter)} \mright)
    \]
    for all $\iter \geq \iter_0$ and sufficiently large $\rounds$.

    Finally, for any $d \in \mathbb{N}_+$, there exists a class $\cF$ such that the optimal regret for agnostic online learning of  $\cF^{\etoe-\iter}$ is
    \[
    \Theta \mleft (\sqrt{T d \log \iter} \mright)
    \]
    for all $\iter \geq \iter_0$ and sufficiently large $\rounds$.
\end{corollary}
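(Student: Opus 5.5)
The corollary follows by combining the Littlestone-dimension bounds of the previous theorems with the known characterization of agnostic online regret. The key fact, due to \cite{alon2021adversarial} and building on Ben-David, P\'al and Shalev-Shwartz, is that for any binary class $\cH$ of Littlestone dimension $\LD(\cH)$, the optimal agnostic regret over $\rounds$ rounds is $\Theta(\sqrt{\rounds\,\LD(\cH)})$, for $\rounds \geq \LD(\cH)$ say. The upper bound $O(\sqrt{\rounds \LD(\cH)})$ is the part sharpened by \cite{alon2021adversarial}. The lower bound $\Omega(\sqrt{\rounds \LD(\cH)})$ is classical: randomize labels along a shattered Littlestone tree path. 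Since $\etoe$-learning with agnostic feedback is exactly agnostic online learning of the binary class $\cF^{\etoe-\iter}$, each item reduces to an estimate of $\LD(\cF^{\etoe-\iter})$.

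\emph{First item.} Apply the upper bound with $\cH = \cF^{\etoe-\iter}$. Theorem~\ref{thm:e2e-upper-bound-intro} gives $\LD(\cF^{\etoe-\iter}) = O(\LD(\cF)\log(\LD(\cF)\iter))$. When $\iter \geq \LD(\cF)$, which is what ``sufficiently large $\iter$'' means here, we have $\log(\LD(\cF)\iter) \leq 2\log \iter$. Substituting yields regret $O(\sqrt{\rounds \LD(\cF)\log \iter})$. If $\LD(\cF) = \infty$ the statement is vacuous, so we assume it is finite.

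\emph{Second item.} Take the class $\cF$ given by Theorem~\ref{thm:e2e-taxonomy-intro}, so $\LD(\cF) = 1$ and $\LD(\cF^{\etoe-\iter}) = \Theta(r(\iter))$ for $\iter \geq \iter_0$. The two-sided characterization then gives optimal regret $\Theta(\sqrt{\rounds\, r(\iter)})$. The restriction ``sufficiently large $\rounds$'' is needed for the lower bound, which requires $\rounds \gtrsim \LD(\cF^{\etoe-\iter})$. The constants hidden in $\Theta(r(\iter))$ are uniform in $\iter$, so the regret constants are uniform as well.

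\emph{Third item.} Take the class of Theorem~\ref{thm:lit-logarithmic-lower-bound-intro}, which has $\LD(\cF) = O(d)$ and $\VC(\cF^{\etoe-\iter}) = \Omega(d\log \iter)$. Since $\LD \geq \VC$, this gives $\LD(\cF^{\etoe-\iter}) = \Omega(d\log\iter)$, and in fact the VC lower bound alone already forces regret $\Omega(\sqrt{\rounds \VC})$. For the matching upper bound, the first item gives $O(\sqrt{\rounds\, d\log\iter})$. One caveat: this needs $\iter$ large relative to $d$ so that $\log(d\iter) = O(\log \iter)$. This is harmless because $d$ is fixed before $\iter \to \infty$, although strictly it means ``$\iter \geq \max(\iter_0, d)$''.

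The main obstacle is bookkeeping rather than mathematics. I must check that the cited regret characterization applies verbatim to deterministic-target-free binary classes with adversarial labels. I must also check that the ``sufficiently large $\rounds$'' and ``sufficiently large $\iter$'' thresholds are stated correctly, so that the $\log \LD(\cF)$ term and the $\rounds \geq \LD$ requirement are absorbed.
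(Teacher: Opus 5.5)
Your proposal is correct and follows the paper's route: the paper also gets all three items by plugging its Littlestone-dimension estimates for $\cF^{\etoe-\iter}$ into the $\Theta(\sqrt{\LD\cdot T})$ agnostic regret characterization of Ben-David--P\'al--Shalev-Shwartz and Alon et al. Those estimates are the general $O(\LD(\cF)\log(\LD(\cF)\iter))$ upper bound, the $\Theta(r(\iter))$ taxonomy class, and the $\Omega(d\log\iter)$ construction. Your caveat that the third item really needs $\iter \geq \max\{\iter_0, d\}$ (or $d$-dependent constants) agrees with the paper's own remark that the matching $\Theta(\LD(\cF)\log\iter)$ bound holds for $\iter\geq\max\{\iter_0,\LD(\cF)\}$.
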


Corollary~\ref{cor:e2e-agnostic-taxonomy-intro} is proved in Section~\ref{sec:agnostic}.

\subsection{Results for autoregressive linear classifiers}
An interesting specific study case is the basic class of \emph{autoregressive linear classifiers}. This class was presented in \cite{joshi2025theory} and further studied in \cite{hanneke2026sample, joshi2026learning}. It is naturally defined as follows. Let $d \in \mathbb{N}$ be the dimension. For any $\boldsymbol{w} \in \mathbb{R}^d$ and $b \in \mathbb{R}$ define the function $f_{\boldsymbol{w}, b}: \{0,1\}^\star \to \{0,1\}$ by
\[
f_{\boldsymbol{w}, b}(x) = 1 \mleft[ \sum_{i=1}^{\min\{d, |\boldsymbol{x}|\}} \boldsymbol{w}[-i] x[-i] + b \geq 0 \mright]
\]
for all $x\in \{0,1\}^\star$. That is, $f_{\boldsymbol{w},b}$ acts as the linear classifier defined by $(\boldsymbol{w},b)$ when taking only values on the $d$-boolean cube, and the input point to the classifier is given by the (at most) final $d$ bits of the input $x$. The class $\cF_{d, \lin}$ of autoregressive linear classifiers in dimension $d$ is defined as
\[
\cF_{d, \lin} := \{f_{\boldsymbol{w},b}: \boldsymbol{w} \in \mathbb{R}^d, b \in \mathbb{R}\}.
\]

\subsubsection{Mistake and sample complexity bounds}
The results in this section are proved in Section~\ref{sec:linear-mistake}.
\cite{joshi2025theory} proved that $\VC(\cF_{d, \lin}^{\etoe,\iter}) = O(d^2)$ for all $\iter$. This implies that the dependence of the sample complexity of $\ct$-learning  $\cF_{d, \lin}$ on $d$ is also at most $O(d^2)$. \cite{hanneke2026sample} improved the dependence on $d$ for $\ct$-learning from $O(d^2)$ to $O(d)$. No lower bounds (that hold for any $\iter > 1$) were previously given.

In this work, we prove that the optimal mistake bound for online learning $\cF_{d, \lin}^{\etoe}$, both with and without $\ct$ supervision, is $\Theta(d^2)$.

\begin{theorem} \label{thm:linear-class-intro}
    For any dimension $d$ and generation length $\iter$ we have
    \[
    \LD(\cF_{d, \lin}^{\etoe-\iter}) = O(d^2).
    \]
    Furthermore, for any $\iter$ and any $d \geq 3$, there exists an adversary forcing $\Omega(d^2)$ many mistakes on any learner for $\cF_{d, \lin}^{\etoe-\iter}$, even with $\ct$-supervision.
\end{theorem}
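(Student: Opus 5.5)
Upper bound. The key observation is that the whole map $x \mapsto f^{\etoe-\iter}_{\boldsymbol{w},b}(x)$ depends only on the last $d$ bits of $x$; call them $z \in \{0,1\}^d$. Indeed, each generation step reads only the current last $d$ bits. If $|x|<d$, the classifier sums over fewer coordinates, which is the same as padding with zeros, so we can treat this case identically. The $\iter$-step trajectory is therefore a deterministic walk on the de Bruijn-type state space $\{0,1\}^d$: the state $z$ moves to $(z[2..d], f(z))$. Hence $\cF^{\etoe-\iter}_{d,\lin}$ is, after restricting to these states, a class of functions on a domain of size at most $2^d$ (plus a bounded number of short-input cases), so $\LD \le \log_2|\cF^{\etoe-\iter}_{d,\lin}|_{\text{restricted}}$. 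The task becomes counting the distinct behaviours of a linear threshold function on $\{0,1\}^d$. Since $f^{\etoe-\iter}$ is determined by $f$ restricted to the cube, it suffices to count threshold functions on $\{0,1\}^d$. The classical bound (Cover / Muroga) gives at most $2^{O(d^2)}$ of them, because $2^d$ points in $\mathbb{R}^{d}$ admit at most $O((2^d)^{d+1})$ homogeneous linear dichotomies in dimension $d+1$. Therefore $\LD(\cF^{\etoe-\iter}_{d,\lin}) \le \log_2 2^{O(d^2)} = O(d^2)$, using the Halving algorithm or the fact that $\LD(\cH)\le\log_2|\cH|$. This argument is uniform in $\iter$ and also covers the inputs of length less than $d$.

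Lower bound. Since $\ct$-feedback only helps the learner, we must build an adversary that forces $\Omega(d^2)$ mistakes even when the learner sees the full trajectory. The plan is to embed a known $\Omega(d^2)$ mistake lower bound for learning threshold functions on the cube $\{0,1\}^{k}$ with $k=\Theta(d)$. Such a bound holds because the $\log_2$ of the number of threshold functions is $\Theta(k^2)$, and, more concretely, there are explicit Littlestone trees of depth $\Theta(k^2)$, for example via threshold functions with exponentially large integer weights in the style of Håstad's construction. We then make the chain of thought uninformative, reserving a few of the $d$ coordinates as control bits. One reserved bit, when set in the right position, forces the output deterministically to a fixed "filler" value. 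This is realized by giving it a huge weight, so that subsequent steps are pinned. The adversary queries strings $x$ whose last $d$ bits encode a chosen cube point $u\in\{0,1\}^k$ in the free coordinates. The first generated bit is then $g(u)$, where $g$ ranges over thresholds on the $k$ free coordinates. The design must ensure that after this first bit the control structure forces a predetermined continuation whose final token equals $g(u)$, or equals a fixed function of $g(u)$. Then the $\ct$ trajectory reveals only $g(u)$, which the learner is anyway told. For $\iter=1$ this is immediate. For general $\iter$ we shift: the freshly appended bit moves into the window, and huge weights on designated positions make later outputs copy it. For instance, the output after step one depends mostly on the coordinate holding the just-generated bit, while the cube pattern has scrolled out of its influence.

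Main obstacle. The difficult step is realizing this "copy the first decision for $\iter-1$ more steps" behaviour with a single weight vector $(\boldsymbol{w},b)$, simultaneously for every target $g$ in the shattered tree. The learned weights on the $k$ free coordinates must remain relevant at step one but be dominated at later steps, when the window contents have shifted. My first attempt would be to use block structure. I would take weights $W\gg \sum|g\text{-weights}|$ on a coordinate that, at step one, holds a marker bit and, at later steps, holds the previously generated bit. I would choose query strings so that the marker position contributes a constant offset at step one, and check by case analysis that the shifted windows at steps $2,\dots,\iter$ are all governed by the copy coordinate. This works within $d\ge 3$ by keeping $k=d-O(1)$.
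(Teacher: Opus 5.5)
Your upper bound is correct and is the paper's argument. $f^{\etoe-\iter}$ is determined by $f$, and $f$ is determined by its restriction to $\{0,1\}^d$ via zero-padding. There are $2^{O(d^2)}$ threshold functions on the cube, so halving gives $O(d^2)$. The de Bruijn observation is not needed, since $|\cF_{d,\lin}^{\etoe-\iter}|\le|\cF_{d,\lin}|$.

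Your lower-bound architecture also matches the paper's:
\begin{itemize}
\item import the known $\Theta(k^2)$ Littlestone lower bound for thresholds on $\{0,1\}^k$;
\item append control bits;
\item make the chain of thought a function of the first generated bit only, so that $\ct$-feedback is useless.
\end{itemize}
The imported bound must be cited (e.g.\ Maass--Tur\'an, or the result the paper cites). Your counting justification is invalid, because $\log_2|\mathcal{H}|$ only upper-bounds the Littlestone dimension.

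\textbf{The gap.} The latch, which you leave as the ``main obstacle'', is the entire content of the lower bound, and the concrete mechanism you propose cannot work. A single control position that holds a marker at step one and the previously generated bit afterwards must be position $-1$. With weight $W$ there and bias shift $\beta$, it contributes $\beta$ when that bit is $0$ and $W+\beta$ when it is $1$. At step one the marker sits in that position, so one of these two values must be neutral: it must not override $f_{\boldsymbol v,c}(z)$. Yet at every later step both values must override whatever (shifted) cube content fills the rest of the window. The value for $0$ must force output $0$, and the value for $1$ must force output $1$. Whichever marker you pick, one value must be simultaneously neutral and dominating, which is impossible. Note also that the cube pattern does not ``scroll out'' after one step; it shifts by one position, so later steps must be controlled by weights that dominate every possible cube content.

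\textbf{The missing idea} is a second control position that cancels a negative bias shift only at step one. The paper queries $z10$ with $z\in\{0,1\}^{d-2}$ and uses $\boldsymbol w=(\boldsymbol v,B,2A)$, $b=c-B$. Here $L,U$ are the minimum and maximum of $\langle\boldsymbol v,z\rangle+c$ over the cube, $B=\max\{0,U\}+1$, and $A=B-L+1$. The three cases are:
\begin{itemize}
\item At step one, the $1$ in position $-2$ cancels $-B$, so the output is exactly $f_{\boldsymbol v,c}(z)$.
\item If that output is $0$, the string ends in $00$ from then on, and the sum is at most $U-B<0$ forever.
\item If it is $1$, the string ends in $1$, and the sum is at least $L+2A-B=B-L+2>0$ forever.
\end{itemize}
Hence the chain of thought is $f_{\boldsymbol v,c}(z)^{\iter}$ for every $\iter$. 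This yields $\Omega((d-2)^2)$ mistakes even with $\ct$-supervision, using only two reserved coordinates.
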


The technique we use for the lower bound of Theorem~\ref{thm:linear-class-intro} allows us to derive the first lower bound for the statistical setting that holds for $\iter > 1$.

\begin{theorem}\label{thm:statistical-linear-lower-bound-intro}
    For any dimension $d \geq 3$ and generation length $\iter \geq 2$, any PAC-learner for $\cF_{d, \lin}^{\etoe-\iter}$ has sample complexity $\Omega(d)$, even with $\ct$-supervision.
\end{theorem}

Theorem~\ref{thm:statistical-linear-lower-bound-intro} shows that the upper bound of \cite{hanneke2026sample} for PAC-learning $\cF_{d, \lin}^{\etoe}$ with $\ct$-supervision is tight.

\subsubsection{Computational complexity}
The results in this section are proved in Section~\ref{sec:linear-computational}.
The preceding bounds are information-theoretic. We also obtain a computational
separation between $\etoe$ and $\ct$-learning for linear classifiers. 

\begin{theorem}\label{thm:linear-computational-intro}
Assume Hardness of Learning $\mathsf{TC}^0$.\footnote{This assumption is formally given in Assumption~\ref{ass:hardness-tc0}; for more details, the reader is referred to, e.g.,~\cite{joshi2025theory}.} Then there is no possibly
improper online learner for $\cF_{d,\lin}^{\etoe-\iter}$ that runs in time
polynomial in $d,\iter$ and the input length per round and makes polynomially
many mistakes. In contrast, with $\ct$-supervision, there is a deterministic
polynomial-time learner for $\cF_{d,\lin}^{\etoe-\iter}$ with mistake bound
$O(d^2\log d)$.
\end{theorem}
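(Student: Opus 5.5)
The statement has two independent halves, a hardness result for $\etoe$-learning and an efficient algorithm under $\ct$-supervision, and I would prove them separately.

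\emph{Hardness for $\etoe$-learning.} The plan is a reduction to (improper) PAC learning of $\mathsf{TC}^0$, composed with the standard online-to-batch conversion.

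First, I would invoke the simulation of \cite{joshi2025theory}. For every polynomial-size, constant-depth threshold circuit $C$ on $n$ inputs there is a generator $f_{\boldsymbol w,b}\in\cF_{d,\lin}$, with $d$ and $\iter$ polynomial in the size of $C$, such that $f_{\boldsymbol w,b}^{\etoe-\iter}(x)=C(x)$ for every $x\in\{0,1\}^n$. The idea is to order the gates topologically and let step $j$ of the generation compute gate $j$, so that each generated bit is a linear threshold of bits at fixed offsets inside the last-$d$ window. Padding or spacing tokens keep all needed inputs within distance $d$, and the fixed offsets are wired into $\boldsymbol w$. I would only check that the parameters stay polynomial and that the encoding map $x\mapsto$ prompt is efficient and independent of $C$.

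Second, suppose there were an online learner running in time $\mathrm{poly}(d,\iter,|x_t|)$ with $\mathrm{poly}(d,\iter)$ mistakes. I would feed it i.i.d.\ samples $(x,C(x))$ through the encoding. The standard conversion (for example, output the hypothesis that survives the longest run without a mistake, or a uniformly random intermediate hypothesis) then yields a polynomial-time improper PAC learner for $\mathsf{TC}^0$ with $\mathrm{poly}(\text{mistakes}/\eps,\log(1/\delta))$ samples. This contradicts Assumption~\ref{ass:hardness-tc0}.

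\emph{Efficient learner with $\ct$-supervision.} The plan is to reduce to efficient online learning of halfspaces on $\{0,1\}^d$.

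1. The learner maintains a hypothesis $(\boldsymbol w,b)$ and predicts the final bit of $f_{\boldsymbol w,b}^{\ct-\iter}(x_t)$. This takes $O(d\iter)$ time.

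2. When the final bit is wrong, the learner's chain and the true chain must differ somewhere. Let $i$ be the first position where they differ. The two chains share the same context before position $i$, and that context is the true one. So the revealed $\ct$ gives a genuine labeled example: the last $d$ bits $z\in\{0,1\}^d$ of that context, together with the target's next bit. The current hypothesis misclassifies this example, so each mistake produces at least one halfspace counterexample.

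3. I would learn halfspaces with the ellipsoid/version-space method on the parameter $(\boldsymbol w,b)\in\mathbb R^{d+1}$. The hypothesis is the center of the current ellipsoid. Each counterexample gives a cut through the center, which shrinks the volume by a factor $e^{-1/(2(d+2))}$.

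4. For the mistake bound, I would use Muroga's theorem: every threshold function on $\{0,1\}^d$ has an integer representation with $|w_i|,|b|\le d^{O(d)}$. After adjusting $b$ by $1/2$ and doubling, this representation has margin at least $1$ on the cube.
   \begin{itemize}
   \item Every $(\boldsymbol w',b')$ within $\ell_\infty$-distance $1/(2(d+1))$ of this representation classifies every cube point identically to it.
   \item Hence the version space always contains a ball of radius $d^{-O(1)}$.
   \item Start from a ball of radius $d^{O(d)}$. The volume ratio is then $d^{O(d^2)}$.
   \end{itemize}
   This bounds the number of cuts, and hence the number of mistakes, by $O(d)\cdot\log d^{O(d^2)}/d=O(d^2\log d)$.

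The main obstacle I expect is the numerical side of the ellipsoid method. The updates must be carried out with polynomially many bits, using the usual rounding with a slightly enlarged ellipsoid, without destroying the volume decrease or the inner-ball argument. I would handle this by citing the standard finite-precision ellipsoid analysis. If that is awkward to import, an alternative is a polynomial-time approximate center-of-gravity cut via random walks; this would make the learner randomized, so I would prefer the deterministic ellipsoid route.
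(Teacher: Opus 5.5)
Your hardness half is the paper's argument with one layer unpacked. The paper cites \cite[Theorem~4.4]{joshi2025theory}, which already states that $\cF_{d,\lin}^{\etoe-\iter}$ is not efficiently PAC-learnable under Assumption~\ref{ass:hardness-tc0}, and then applies the standard online-to-batch conversion, exactly as you do. Rely on that citation rather than on your gate-by-gate picture. The phrase ``step $j$ computes gate $j$ via fixed offsets wired into $\boldsymbol w$'' hides the real difficulty: one time-invariant $\boldsymbol w$ is applied to the sliding window at every step. Your first-divergence charging in the $\ct$ half is precisely the paper's Lemma~\ref{lem:online-cot-reduction}, specialized to a learner that keeps a fixed hypothesis within a round.

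The genuine gap is the final count in your step 4. A central ellipsoid cut in the parameter space $\mathbb{R}^{d+1}$ shrinks the volume only by the factor $e^{-1/(2(d+2))}$. After $k$ cuts you therefore only know $e^{-k/(2(d+2))}\,\mathrm{vol}(B_R)\ge \mathrm{vol}(B_r)$, that is, $k\le 2(d+2)(d+1)\ln(R/r)$. With $R=d^{O(d)}$ and $r=d^{-O(1)}$ this is $O(d)\cdot O(d^2\log d)=O(d^3\log d)$. The division by $d$ in your expression $O(d)\cdot\log d^{O(d^2)}/d$ has no justification.

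This lost factor of $d$ is intrinsic to the ellipsoid analysis: it needs $O(n^2\log(R/r))$ cuts in dimension $n$, whereas methods that remove a constant fraction of the volume per cut need $O(n\log(R/r))$. So your deterministic route proves only $O(d^3\log d)$. To reach $O(d^2\log d)$, every counterexample must remove a constant fraction of the volume. The exact center of gravity is not efficiently computable. Your random-walk fallback would make the learner randomized, but the statement asks for a deterministic learner. The deterministic option is Vaidya's volumetric-center cutting-plane method, which is what underlies the Maass--Tur\'an learner of Lemma~\ref{lem:online-boolean-ltf}.

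The cleanest fix is what the paper does. Take the Maass--Tur\'an learner as a black box, lift it to $\cF_{d,\lin}$ through the last-$d$-bits map, and compose it with your first-divergence reduction. This also moves your finite-precision concerns inside the cited result.
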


Thus, $\ct$-supervision removes the computational obstruction, at least up to a
logarithmic factor in the mistake bound.

\subsection{Non-Littlestone classes}

The finite Littlestone assumption is essential. For $\LD(\cF)<\infty$, our upper
bound gives a uniform logarithmic ceiling in $\iter$. Without this assumption,
a single base class can alternate between trivial and unlearnable horizons.

\begin{theorem}\label{thm:alternating-horizons-intro}
There exists a binary base class $\cF$ and horizons $\iter_i=i+4$ such that
\[
i \text{ odd } \implies \LD(\cF^{\etoe-\iter_i})=0,
\qquad
i \text{ even } \implies \LD(\cF^{\etoe-\iter_i})=\infty .
\]
In particular, the odd horizons are trivial already in the $\etoe$ model, while
the even horizons are unlearnable even with $\ct$-supervision.
\end{theorem}

Thus, outside the Littlestone regime, there is no stable growth law in $\iter$.
Theorem~\ref{thm:alternating-horizons-intro} is proved in Section~\ref{sec:non-lit}

\subsection{A stochastic autoregressive lower bound}

We also show that the deterministic picture of $\ct$ online learning does not
extend to stochastic autoregressive generators. In this model, a generator maps
each string in $\Sigma^*$ to a distribution over the next token, so even after the input $x_t$ and
target generator are fixed, the observed $\ct$ trajectory is random. For the stochastic setting, the natural benchmark is regret. 

\begin{theorem}\label{thm:stochastic-e2e-separation-intro}
For every odd $\iter\geq 1$, there exists a stochastic two-function base class
$\mathcal G=\{g_-,g_+\}$ whose direct next-token problem has regret $O(1)$, but
for every online learner, there exists $g_\star\in\mathcal G$ such that learning
$\mathcal G^{\etoe-\iter}$ against target $g_\star$ incurs expected regret
$\Omega(2^\iter)$ for horizon $\rounds=\Theta(4^\iter)$.
\end{theorem}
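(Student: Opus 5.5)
The plan is to build a two-function class in which each single next-token step is strongly biased, but the noise compounds over the $\iter$ autoregressive steps. The final token then carries only a $2^{-\iter}$-size signal about which generator is the target. After that, a standard two-point (Le Cam) argument gives the lower bound at horizon $\rounds=\Theta(4^\iter)$.

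\emph{Construction.} Both generators XOR the last token of the context with an independent noise bit:
\[
g_\pm(x) = x[-1]\oplus z,\qquad z\sim \Ber(1/2\mp 1/4),
\]
with an arbitrary fixed convention for the empty string. So $g_+$ flips the last bit with probability $1/4$, and $g_-$ flips it with probability $3/4$. Running $\iter$ steps from $x$ gives
\[
g^{\etoe-\iter}(x) = x[-1]\oplus z_1\oplus\cdots\oplus z_\iter,
\]
where the $z_k$ are i.i.d. For a single flip bit, $\Ex[(-1)^{z}]=\pm 1/2$. By independence, $\Ex[(-1)^{\sum_k z_k}] = (\pm 1/2)^\iter$.

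Since $\iter$ is odd, the two generators give opposite signs. Hence under $g_\pm$ the final token equals $x[-1]$ with probability $\tfrac12 \pm 2^{-\iter-1}$. The sign flip for $g_-$ is exactly where oddness of $\iter$ is used: for even $\iter$ the two end-to-end output distributions would coincide.

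\emph{Direct problem is easy.} In the one-step problem, each round reveals one sample of $x[-1]\oplus z$, which is $\Ber(1/4)$ versus $\Ber(3/4)$ relative to $x[-1]$. The learner predicts according to the majority of the observed flips so far. By Hoeffding, the probability of being on the wrong side at round $t$ is $e^{-\Omega(t)}$. Each wrong prediction costs a constant excess loss, so the expected regret is at most $\sum_t e^{-\Omega(t)}=O(1)$.

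\emph{Lower bound for $\etoe$.} The adversary always sends $x_t=0$. Let $p_\pm=\tfrac12\mp 2^{-\iter-1}$ be the probability that the final token is $1$ under $g_\pm$. The best function in the class predicts the likelier bit every round. Each round in which the learner predicts the other bit costs excess expected loss $|1-2p_\pm| = 2^{-\iter}$. So the regret against $g_\pm$ is $2^{-\iter}\,\Ex_\pm[N_\pm]$, where $N_\pm$ counts rounds in which the learner's prediction disagrees with the likely bit under $g_\pm$.

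The observations are $\rounds$ i.i.d. samples from $\Ber(p_+)$ or $\Ber(p_-)$. The KL divergence between these laws is $O(4^{-\iter})$ per round. With $\rounds = c\,4^\iter$ for a small constant $c$, the total KL is at most a small constant. By Pinsker (or Bretagnolle--Huber), for each $t$ the learner's prediction at round $t$, a function of the first $t-1$ observations, satisfies
\[
\Pr_+[\hat y_t\neq 0]+\Pr_-[\hat y_t\neq 1]\geq 1-\mathrm{TV}\geq \tfrac12.
\]
Summing over $t$ gives $\Ex_+[N_+]+\Ex_-[N_-]\geq \rounds/2$. Hence for some $g_\star\in\mathcal G$ the regret is at least
\[
2^{-\iter}\cdot \rounds/4=\Omega(2^{\iter}).
\]

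The main step to get right is the information bound. The KL bound must use the $\Theta(2^{-\iter})$ bias gap, not the constant per-step gap. This holds because in the $\etoe$ model only the final token is revealed, so the intermediate tokens, which would leak constant information per step, are never observed.
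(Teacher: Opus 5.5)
Your proposal is correct and follows essentially the same route as the paper. It uses the same pair of last-bit-flipping generators (with the labels $\pm$ swapped), the same empirical-majority learner with Hoeffding for the direct problem, and the same two-point argument on a constant input sequence with per-round KL divergence $O(4^{-\iter})$. Computing the end-to-end bias via $\Ex[(-1)^{z_1\oplus\cdots\oplus z_\iter}]=(\pm 1/2)^{\iter}$ instead of the paper's Markov-chain recursion, and using Pinsker instead of Bretagnolle--Huber, are cosmetic differences. One small detail needs patching, which the paper handles by treating $\iter=1$ separately: for $\iter=1$ and small $c$ you get $c\,4^{\iter}<1$, so you should take $\rounds\geq 1$. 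At $t=1$ the prediction is independent of the target, which already gives regret $\Omega(1)$.
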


Thus, stochastic autoregression can exponentially amplify the difficulty of an
otherwise trivial next-token learning problem. This is in stark contrast to the upper bound for deterministic $\ct$ online learning given in Theorem~\ref{thm:cot-bound-lit-intro} and  Theorem~\ref{thm:e2e-upper-bound-intro}. 
Theorem~\ref{thm:stochastic-e2e-separation-intro} is proved in Section~\ref{sec:stochastic}.

\section{Proof sketches of the main results}
In this section, we explain the main technical ideas of the proofs of our main theorems.
\subsection{Proof sketch of Theorem~\ref{thm:cot-bound-lit-intro}: mistake bound for learning with CoT}
Theorem~\ref{thm:cot-bound-lit-intro} proves that with $\ct$-supervision, learning $\cF^{\etoe-\iter}$ is possible with mistake bound at most $\LD(\cF)$, for all $\iter$. The proof follows from a relatively simple SOA-style\footnote{The \emph{SOA} stands for \emph{standard optimal algorithm}, which is the original algorithm given in \cite{littlestone1988learning} for online learning of binary classes.} argument we describe below.
We design an online learner $\Lrn$ for the class $\cF^{\ct-\iter}$, which operates as follows. $\Lrn$ maintains a \emph{version space} $V_t$ of all generators $f \in \cF$ that are consistent with the feedback given up to the end of round $t$: at first $V_0 = \cF$, and at the end of any round $t$,  $\Lrn$ sets $V_t$ to be all functions in $V_{t-1}$ that are consistent with the feedback of round $t$. In each round $t$, $\Lrn$ receives $x_t$ and predicts
\[
\max _{\hat{y} \in \{0,1\}^\iter}\LD(V_{t-1}|_{x_t \to \hat{y}}),
\]
where $V_{t-1}|_{x_t \to \hat{y}}$ denotes the restriction of $V_{t-1}$ only to functions for which autoregressive chain-of-thought on $x_t$ is precisely $\hat{y}$. The crucial observation is that there is at most one $\hat{y} \in \{0,1\}^\iter$ such that
\[
\LD(V_{t-1}|_{x_t \to \hat{y}}) = \LD(V_{t-1}).
\]
Indeed, if there are two distinct $\hat{y}, \hat{y'}$ such that
\[
\LD(V_{t-1}|_{x_t \to \hat{y}}) = \LD(V_{t-1}|_{x_t \to \hat{y'}}) = \LD(V_{t-1})
\]
then one can construct a Littlestone tree of depth $\LD(V_{t-1}) + 1$ which is shattered by $V_{t-1}$, which is a contradiction to the fact that $\LD(V_{t-1})$ is the optimal mistake bound for learning $V_{t-1}$, as proved by \cite{daniely2015multiclass}.

Therefore, after every mistake of $\Lrn$ the Littlestone dimension of the version space decreases by at least $1$, which implies an upper bound of $\LD(V_0) = \LD(\cF)$.

\subsection{Proof sketch of Theorem~\ref{thm:e2e-upper-bound-intro}: mistake bound for learning without CoT}
Consider a Littlestone tree $\tree$ of depth $m$ that is shattered by $\cF^{\etoe-\iter}$. It suffices to show that $m = O( \LD(\cF) \log (\LD(\cF) \iter))$. The idea is a variant of tree-covering arguments occasionally used in online learning (see \cite{filmus2023optimal} for example). First, we inflate $\tree$ to represent the entire chain-of-thought for every instance labeling a node in the tree. We do so by replacing each node and its two outgoing edges by the the full $\iter$-step \emph{generation tree} matching the instance labeling the node. An $\iter$-step generation tree for $x \in \{0,1\}^\star$, denoted by $\tree_{\iter}(x)$, is a Littlestone tree of depth $\iter$ with the root labeled by $x$, and for any prefix $u \in \{0,1\}^{<\iter}$, the node at the end of the prefix $u$ is labeled by $x \circ u$. So the branches of $\tree_{\iter}(x)$ represent the different possible chain-of-thoughts for $x$. Let $\tree_{\cF}$ be the inflated tree, which is now of depth $m \iter$. We now analyze the size of the set of branches in $\tree_{\cF}$ that are realized by some $f \in \cF$, denoted as $B_{\cF}(\tree_{\cF})$. Since every generation tree in $\tree_{\cF}$ is originated from a node in $\tree$ and its two outgoing edges, and since $\tree$ is shatterd by $\cF^{\etoe-\iter}$, it is not hard to see that $B_{\cF}(\tree_{\cF})$ grows by a factor of at least $2$ after each $\iter$ levels of $\tree_\cF$, which implies
\[
|B_{\cF}(\tree_{\cF})| \geq 2^m.
\]
On the other hand, a known (for example, by \cite{bhaskar2021thicket}) Sauer-Shelah-Perles lemma for trees give
\[
|B_{\cF}(\tree_{\cF})| \leq \binom{m \iter}{ \leq \LD(\cF)}.
\]
By the two inequalities, we obtain
\[
2^m \leq \binom{m \iter}{ \leq \LD(\cF)}.
\]
Solving the inequality for $m$ gives the stated bound.
\subsection{Proof sketch of Theorem~\ref{thm:e2e-taxonomy-intro}: taxonomy for sub-logarithmic rates}
To prove Theorem~\ref{thm:e2e-taxonomy-intro}, we construct a class $\cF$ (suggested in private communication \cite{hanneke2026private})
that has two desired properties:
\begin{enumerate}
    \item It is simple, and in fact has $\LD(\cF) = 1$.
    \item We can make $\cF^{\etoe-\iter}$ as complicated as we want up to some well-behaved sub-logarithmic rate $r(\iter)$. In fact, given any well-behaved sub-logarithmic rate $r$, we have  $\LD(\cF^{\etoe-\iter}) = \Theta(r(\iter))$.
\end{enumerate}

To implement this, we define a fixed baseline function $f$, and $\cF$ is constructed from a class of functions such that each function differs from $f$ on precisely one instance. This fact alone suffices to prove the first item. To prove the second item, the baseline function $f$, as well as the other functions in the class, should be carefully chosen to fit the given well-behaved sub-logarithmic rate $r$. The proof, and even the definition of $\cF$, are quite technical. Below, we explain the high level idea of the definition and proof. The function $f$ is defined such that it is ``interesting" only on instances of the form
\[
 x_{s,k,z} := 0^s 1 0 ^k 1 0^{s-k-1}z
\]
where $s$ can be any natural number, $k \in \mleft[ 2^{r(s)} \mright]$, and $z \in \{0,1\}^{< r(s)}$. Leaving some technical details aside, $f(x_{s,k,z}) = k_{|z|}$, so $f(x_{s,k,z})$ in fact returns a bit of the binary representation of $k$, in a location encoded by the length of $z$. For any other instance $x$, define $f(x) = 0$.  Note that assuming that $r$ is sub-logarithmic is crucial already here: since $k$ can be large as $2^{r(s)}$, assuming that $r$ is sub-logarithmic ensures that $s-k-1 \geq 1$, and so $0^{s-k-1}$, which appears in $s_{s,k,z}$, is a well-defined binary string. Now, for every $s,k$ as above we define a function $f_{s,k}$. This function is defined exactly as $f$ everywhere except for the instance $0^s 1 0^k$, for which $f$ returns $0$, but we define $f_{s,k}(0^s 1 0^k) = 1$. The class $\cF$ is defined as the class of all $f_{s,k}$. The definition of $f_{s,k}$ is one of the core ideas of the proof: when the instance $0^s 1 0^k$ is given to $f_{s,k}$ and only to $f_{s,k}$, the $s$-step autoregressive generation of  $f_{s,k}$ for $0^s 1 0^k$ is non-degenerate, and in fact encodes the binary representation of $k \in \mleft[ 2^{r(s)} \mright]$. This is the high-level idea of the lower bound: for $s := \iter$, any learner makes at least roughly $\log 2^{r(\iter)} = r(\iter)$ many mistakes before identifying the ``correct" $k$, where the logarithmic lower bound comes from optimality of the halving algorithm in this case.

For the upper bound, the main idea is to design a learner for which the halving algorithm used to find the correct $k$ is essentially the only obstacle. Let us describe this learner in high level. Until making its first mistake, the learner always predicts as $f^{\etoe-\iter}$. Once the learner makes its first mistake on some instance $x$, it knows the following fact:
\[
f_{s^\star, k^\star}^{\etoe-\iter}(x) \neq f^{\etoe-\iter}(x),
\]
where $f_{s^\star, k^\star}$ is the target function. Since $f_{s^\star, k^\star}$ differs from $f$ only on the instance $0^{s^\star} 1 0^{k^\star}$, this implies that $x$ must be of the form $0^{s^\star} 1 0^i$ for some $i \in \mathbb{N}$. Therefore, the learner now knows $s^\star$ just by looking on $x$, and so the learner now only needs to find $k^\star$ to know the target function $f_{s^\star, k^\star}$. If $s^\star = O(\iter)$, then $k^\star$ can only receive as few as
\begin{equation}\label{eq:sketch-upper-bound}
    2^{r(s^\star)} \leq 2^{r(O(\iter))} \leq 2^{O(r(\iter))}
\end{equation}
many values, and thus the halving algorithm finds $k^\star$ after at most $O(r(\iter))$ many mistakes, which is analogous to the argument of the lower bound. Note that \eqref{eq:sketch-upper-bound} does not hold for a general rate $r$. The first inequality holds since $r$ is non-decreasing, and the second inequality holds since $r$ is sub-additive. This is the exact reason that we make those two assumptions.  However, it is also possible that $s^\star \gg \iter$. In this case, we need to use a completely different argument. In the case that $s^\star \gg \iter$, the learner acts as follows. For any $x$ not of the form $0^{s^\star} 1 0 ^i$ it predicts as $f$ and never errs. For $x$ which is of the form $0^{s^\star} 1 0 ^i$, the learner always predicts $0$ until making its first mistake on such an instance, which implies that
\begin{equation} \label{eq:sketch-interesting-part}
   f_{s^\star, k^\star}^{\etoe-\iter}(0^{s^\star} 1 0 ^i) = 1,
\end{equation}

where $0^{s^\star} 1 0 ^i$ is the instance the learner errs on. However, recalling the definition of $f$, it is non-degenerate only for instances of the form $0^s 1 0 ^k 1 0^{s-k-1}z$, which in particular means that in our case $s^\star - k - 1$ zeroes must be appended before getting to the interesting part. However, since $s^\star \gg \iter$, we have that $s^\star - k - 1 > \iter$, and thus  $\iter$ is simply not large enough to even get to the interesting part. So, in fact, the case \eqref{eq:sketch-interesting-part} can essentially never happen (leaving aside some technical details).

\subsection{Proof sketch of Theorem~\ref{thm:lit-logarithmic-lower-bound-intro}: logarithmic lower bound}
Theorem~\ref{thm:lit-logarithmic-lower-bound-intro} states that for any (large enough) $d$ there exists a base class $\cF$ with $\LD(\cF) = O(d)$, and such that $\VC(\cF^{\etoe-\iter}) = \Omega(d \log \iter)$ for all (large enough) $\iter$. The idea is to construct for every $d,\iter$ a base class $\cF:=\cF(d,\iter)$ of size $\approx \iter^d$ which  is very unbalanced for every instance $x$ (that is, the functions of $\cF$ demonstrate an overwhelming majority towards one of the possible labels for every $x$), but after $\iter$ generation steps they become much more balanced for many instances. This is precisely what makes $\LD(\cF) = O(d)$, while $\VC(\cF^{\etoe-\iter}) = \Omega(d \log \iter)$.
In more detail, our specific choice of overwhelming majority is of a $\approx(1-1/\iter)$-fraction of functions for all $x$. This indeed gives $\LD(\cF) = O(d)$, since we start from $\approx \iter^d$ many functions, and for each restriction of the form $\cF_{x \to r} := \{f \in \cF: f(x) = r\}$ for some $r \in \{0,1\}$, the number of surviving functions decreases by a factor of $\iter$. Therefore, after at most roughly $d$ such restrictions (that correspond to an optimal learner's mistakes), there are no surviving functions left. On the other hand, we ensure that after $\iter$ many generation steps, there is a constant fraction of functions realizing each label for $x$, for many instances $x$. Since for every restriction of an instance to be classified with some label after $\iter$ generation steps a constant fraction of the class stays consistent, it is implied that
\[
\VC(\cF^{\etoe-\iter}) = \Omega(\log (\iter^d)) = \Omega(d \log \iter).
\]

It remains to construct a class having the above desired overwhelming majority property for all $x$.
However, ensuring this desired property simultaneously for all $x$ seems difficult to do deterministically. We thus define the class at random: every $f$ gives to every $x$ (in a specific desired set of instances) $r\in \{0,1\}$ with probability $1-1/\iter$, and $1-r$ with probability $1/\iter$. Figure~\ref{fig:hard-class} demonstrates why $\cF_{x \to 0}^{\etoe-\iter}$ and $\cF_{x \to 1}^{\etoe-\iter}$ both contain a constant fraction of $\cF$.

\begin{figure}
    \centering

\begin{tikzpicture}[
  grow=down,
  level distance=17mm,
  edge from parent/.style={draw, line width=1.8pt},
  internal/.style={coordinate},
  vtx/.style={circle, draw=none, minimum size=9mm, inner sep=0pt, font=\ttfamily\scriptsize},
  rednode/.style  ={vtx, fill=red!75},
  greennode/.style={vtx, fill=green!60!black},
  toRed/.style={edge from parent/.append style={draw=blue!85!black}},
  level 1/.style={sibling distance=60mm},
  level 2/.style={sibling distance=40mm},
  level 3/.style={sibling distance=28mm},
  level 4/.style={sibling distance=20mm},
]

\node[internal, label=above:{\ttfamily 0}] (root) {}
  child { node[internal] (n00) {}
    child { node[internal] (n000) {}
      child { node[internal] (n0000) {}
        child[toRed] { node[vtx, fill=black!15] (n00000) {} }
        child        { node[greennode] (n00001) {$\mathtt{0^{4}1}$} }
      }
      child[toRed] { node[rednode] (n0001) {$\mathtt{0^{3}1}$} }
    }
    child[toRed] { node[rednode] (n001) {$\mathtt{0^{2}1}$} }
  }
  child[toRed] { node[rednode] (n01) {$\mathtt{01}$} };

\coordinate (base) at (n00000.center);

\foreach \v in {n01,n001,n0001} {
  \draw[dashed, line width=1.1pt]
    ($(\v.south)+(0,-1mm)$) -- ($(\v.south |- base)+(-6mm,0)$);
}

\end{tikzpicture}

\caption{The generation tree $\tree_{\iter}(x^{(j)})$, visualized for the hard class $\cF_{d,\iter}$ with $\iter=4$ and $j=1$ (that is, $x^{(j)} = 0$). Note that there are  $\iter-1$ red nodes, where the version space of each typically contains an $O(1)/\iter$ fraction of the class, and thus in total a $(\iter-1) \frac{O(1)}{\iter} = O(1)$ fraction of the class is contained in the union of red vertices. The edges where the size of the class drops by a factor of $O(1)/\iter$ are blue. In the rest of the edges, the size of the version space only drops by a factor of at least $1-O(1)/\iter$. Therefore, a fraction of $\mleft(1-O(1)/\iter \mright)^\iter = O(1)$  of the class is typically contained in the green vertex version space. Furthermore, for all $f$ in the version space of the red nodes, we have $f^{\etoe-\iter}(0) = 0$, and  for all $f$ in the version space of the green node, we have $f^{\etoe-\iter}(0) = 1$. So, for each possible choice of labeling for $0$, we get to keep a constant factor of the version space. Applying this idea repeatedly allows to shatter a set of size $\Omega(\log (\iter^d)) = \Omega(d \log \iter)$.}
\label{fig:hard-class}
\end{figure}

In this intuitive explanation, one of the important facts we overlooked is that $\cF:= \cF(d,\iter)$, that is, it depends on the generation length $\iter$, while we want the result to hold for all $\iter$. This is resolved by ``gluing" together $\{\cF(d,\iter)\}_{\iter}$ for all $\iter$ values, a technique used for example in \cite{chase2024deterministic}.

\subsection{Proof sketches of Theorem~\ref{thm:linear-class-intro} and Theorem~\ref{thm:statistical-linear-lower-bound-intro}: bounds for linear classifiers}
The interesting part of our bounds for autoregressive linear classifiers is the following ``latch" mechanism allowing to ``drag" the output of the base classifier throughout the entire autoregressive process in the price of reducing the dimension by $2$:

\begin{lemma}[Latch lemma] \label{lem:latch-sketch}
    Fix $d \geq 3$ and denote $m:= d-2$. Then, for all $\boldsymbol{v} \in \mathbb{R}^m$ and $c \in \mathbb{R}$, there exists $f_{\boldsymbol{w},b} \in \cF_{d, \lin}$  such that
    \[
    f_{\boldsymbol{w},b}^{\etoe-\iter}(z 1 0) = f_{\boldsymbol{v},c}(z)
    \]
    for all $z \in \{0,1\}^m$ and all $\iter \geq 2$.
\end{lemma}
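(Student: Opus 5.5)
The plan is to write down $(\boldsymbol{w},b)$ explicitly. The first generation step will compute $f_{\boldsymbol{v},c}(z)$. Every later step will copy the last generated bit, so this bit is ``latched'' until step $\iter$.

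Index the input $x = z10$ of length $d=m+2$ from the end: $x[-1]=0$, $x[-2]=1$, and $x[-(j+2)] = z[-j]$ for $j\in[m]$. First I replace $c$ by some $c'$ so that $\sum_j \boldsymbol{v}[-j]z[-j]+c'$ has the same sign pattern as $f_{\boldsymbol{v},c}$ on $\{0,1\}^m$ and is never $0$. This is possible since the cube is finite: take $c' = c+\delta$ for $\delta>0$ smaller than the smallest nonzero $|\boldsymbol{v}\cdot z + c|$, noting that ties $\boldsymbol{v}\cdot z+c=0$ map to label $1$ and stay positive. Let $B := \|\boldsymbol{v}\|_1 + |c'|$. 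Then I set
\[
\boldsymbol{w}[-(j+2)] = \boldsymbol{v}[-j]\ (j\in[m]),\qquad \boldsymbol{w}[-2]=K,\qquad \boldsymbol{w}[-1]=L,\qquad b = c'-K,
\]
with $K > 2B$ and $L > K+2B$.

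The verification is an induction on the step. At step $1$ the window is $z10$, so the linear form equals $\boldsymbol{v}\cdot z + c' + K\cdot 1 + L\cdot 0 - K = \boldsymbol{v}\cdot z + c'$. Hence the first output $y_1$ equals $f_{\boldsymbol{v},c}(z)$.

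For steps $t\ge 2$ I maintain the invariant that the last bit of the current string is $y_1$ and the second-to-last bit lies in $\{0,y_1\}$. This holds at step $2$, where the string ends in $0y_1$, and it is preserved once step $t$ outputs $y_1$.

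Under the invariant, the contribution of positions $-3,\dots,-d$ is bounded in absolute value by $\|\boldsymbol{v}\|_1$, whatever shifted bits of $z$ or the marker $1$ occupy them. Together with $c'$, this bounded part $\beta$ satisfies $|\beta|\le B$.

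If $y_1=0$, the form equals $\beta + K x[-2] - K = \beta - K < 0$, since $x[-2]=0$, so the output is $0$. If $y_1=1$, the form equals $L + Kx[-2] - K + \beta \ge L-K-B > 0$, so the output is $1$. Thus every step $t\ge 2$ outputs $y_1$, and $f_{\boldsymbol{w},b}^{\etoe-\iter}(z10) = y_1 = f_{\boldsymbol{v},c}(z)$ for every $\iter\ge 2$.

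The only delicate point is the design tension at step $1$. There the last bit is $0$, yet the output must be allowed to be $1$, whereas at later steps a trailing $0$ must force output $0$. The marker ``$1$ in position $-2$ immediately before a trailing $0$'' resolves this. It occurs only at step $1$, since afterwards position $-2$ holds $0$ or $y_1$, and a trailing $0$ means $y_1=0$. The large weight $K$ on position $-2$, cancelled by the bias $-K$, is exactly what distinguishes the two situations. The remaining bookkeeping is that the shifted bits entering the $\boldsymbol{v}$-weighted positions can only change the form by at most $B$, which the margins $K$ and $L$ absorb.
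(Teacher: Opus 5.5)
Your proof is correct and is essentially the paper's construction: the paper sets $\boldsymbol{w} = (\boldsymbol{v}, B, 2A)$ with bias $c - B$, a large weight on position $-2$ cancelled by the bias plus an even larger weight on position $-1$. It then verifies the same three facts you use (step one computes $f_{\boldsymbol{v},c}(z)$, a suffix $00$ forces $0$, a trailing $1$ forces $1$), with margins taken from $\min_z$ and $\max_z$ of $\langle \boldsymbol{v}, z\rangle + c$ over the cube where you use $\|\boldsymbol{v}\|_1 + |c'|$. Your perturbation $c \to c'$ is harmless but unnecessary: at step one the form equals $\langle \boldsymbol{v}, z\rangle + c$ exactly, and nothing later needs that value to be nonzero.
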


Once we have Lemma~\ref{lem:latch-sketch}, the rest of the arguments in the proofs of Theorem~\ref{thm:linear-class-intro} and Theorem~\ref{thm:statistical-linear-lower-bound-intro} are either simple or known. Indeed, the upper bounds for autoregressive linear classifiers are not larger than the standard upper bounds for (standard) linear classifiers. Therefore, it essentially suffices to show that the autoregressive process cannot make the class simpler, and the class maintains its original mistake bound/sample complexity.

The idea of the proof of Lemma~\ref{lem:latch-sketch} is quite simple. We define $\boldsymbol{w}$ to be as $\boldsymbol{v}$, except for the final two locations of $\boldsymbol{w}$, that we are free to choose. We also choose $b$ to be as $c$, but shifted in some value. When choosing the above values carefully, we are able to ensure that:
\begin{enumerate}
    \item  For all $z \in \{0,1\}^m$: $f_{\boldsymbol{w},b}(z 10) = f_{\boldsymbol{v},c}(z)$ (first  bit of $\ct$ is correct).
    \item  For any $x \in \{0,1\}^\star$ that ends with $00$, we have $f_{\boldsymbol{w},b}(x) = 0$ (always drag $0$).
    \item For any $x \in \{0,1\}^\star$ that ends with $1$, we have $f_{\boldsymbol{w},b}(x) = 1$ (always drag $1$).
\end{enumerate}

The above three items essentially establish the statement of the lemma. The idea is to use known lower bounds for dimension $m$, but add the suffix $10$ to every instance used in the proof, so the dimension becomes $d := m+2$. Now, the first item ensures that for every $z\in \{0,1\}^m$, $f_{\boldsymbol{w},b}(z 10)$ outputs the correct intended output $f_{\boldsymbol{v},c}(z)$. The other two items then establish that this correct output is being dragged throughout the entire autoregressive generation process: if $f_{\boldsymbol{w},b}(z 10)=0$ then the generated new instance must end at $00$, and if $f_{\boldsymbol{w},b}(z 10)=1$ it must end with $1$. Thus, any lower bound strategy for $\cF_{m, \lin}$ works also for $\cF_{d, \lin}^{\etoe-\iter}$ for all $\iter$, even with $\ct$-supervision.
\section{Future work}

Our work leaves some natural and interesting directions for future work. Below, we describe some of them.

\paragraph{Extend the results to the multiclass setting.}
Some of our results extend to the multiclass setting $|\Sigma| > 2$ ``for free", such as the upper bound for $\ct$-learning (Theorem~\ref{thm:cot-bound-lit-intro}). This is not true, however, for all results in this paper. For example, our initial attempts to extend the upper bound for $\etoe$-learning (Theorem~\ref{thm:pac-cot-bound-lit-intro}) to the multiclass setting demonstrated an extra $\log |\Sigma|$ factor, which we do not know if necessary, even for some classes.

Perhaps an even more interesting direction in the multiclass setting is to study the optimal mistake bounds with and without $\ct$-supervision attainable with different models of partial feedback, such as bandit feedback and feedback graphs.

\paragraph{The stochastic setting.} As we showed in Theorem~\ref{thm:stochastic-e2e-separation-intro}, the closer to practice setting with stochastic $\ct$ is controlled by the Littlestone dimension of the base class. Further characterizing the online and PAC learnability of the stochastic setting is a natural direction for a follow up work.  

\paragraph{Studying specific classes.}
A natural direction is to move beyond worst-case base classes and analyze
standard finite-Littlestone classes directly, as we did by studying autoregressive linear threshold functions. Other concrete examples for future work include
Boolean classes such as conjunctions, disjunctions, decision lists, and
bounded-depth decision trees, as well as finite-domain geometric classes such
as intervals and unions of intervals. For such structured classes, one may
hope for sharper mistake bounds than the general
$O(\LD(\cF)\log(\LD(\cF)\iter))$ guarantee, or for class-specific separations
between $\etoe$-learning and $\ct$-learning.

\section{Additional related work}

Beyond the closest autoregressive PAC works discussed above, we mention three surrounding directions studied extensively in recent years.

\paragraph{Chain-of-thought data in language-model training.}
Chain-of-thought and scratchpad-style supervision were introduced empirically as ways to expose intermediate computations in language models~\cite{Nye+21,wei2022chain}. This is related to supervised fine-tuning and next-token prediction more broadly~\cite{Bro+20,Wei+22}, following the standard transformer language-modeling paradigm~\cite{Vas+17,Rad+19}. Recent RL-based post-training methods also encourage longer reasoning traces~\cite{Jae+24,Deep+25}, but their supervision signal is typically attached to final outcomes rather than to every intermediate token. The cost and importance of curating chain-of-thought data is also emphasized in large-scale model reports~\cite{Dub+24}. Our results formalize this distinction in an online model: observing only the final generated token can increase the mistake bound with the generation length $\iter$, while full $\ct$ feedback removes this dependence for finite-Littlestone base classes.

\paragraph{Theory of chain-of-thought.}
A growing theoretical literature studies why intermediate reasoning traces can help. This includes universality of autoregressive next-token predictors~\cite{malach2023auto}, sample-complexity bounds in terms of CoT information~\cite{altabaa2025cot}, and reinforcement learning after next-token prediction~\cite{tsilivis2025reinforcement}. Other works study the power and limitations of transformer architectures with chain-of-thought reasoning~\cite{merrill2023expressive,li2024chain,chen2025theoretical,kim2024transformers,huang2025transformers}. These works are mostly statistical, architectural, or optimization-theoretic. In contrast, our results are architecture-free and online: they characterize the possible mistake-bound growth rates of autoregressive learning under end-to-end feedback and show when trajectory feedback eliminates this growth.

\paragraph{Computational hardness.}
Our computational separation is related in spirit to the broader literature on hardness of distribution-free learning; see~\cite{servedio2025probably} for a recent survey. Relevant examples include hardness results for intersections of halfspaces~\cite{klivans2009cryptographic,tiegel2024improved}, and DNFs or related classes under average-case or pseudorandomness assumptions~\cite{applebaum2010public,daniely2016complexity,daniely2021local,applebaum2025structured}. Unlike this hardness literature, most of our paper is information-theoretic; the computational result shows that, even for autoregressive linear classifiers, end-to-end feedback can leave a computational obstruction that $\ct$ feedback removes.

\section{Technical preliminaries} \label{sec:prel}
The online autoregressive model studied in this work was formally defined in Section~\ref{sec:model}. The goal of this section is to formally define additional relevant concepts from learning theory, as well as to state known and relevant classic results. Most of the definitions and notation are borrowed from \cite{hanneke2026sample}.

\subsection{Littlestone trees}

In this paper, a Littlestone tree (or just a tree) $\tree$ is a finite, full, and rooted binary tree, accompanied with the following information:
\begin{enumerate}
    \item Every internal node $v$ is associated with an instance $x \in \cX$, denoted as $x_v$.
    \item For every internal node $v$, the left outgoing edge is labeled with $0$, and the right outgoing edge with $1$.
\end{enumerate}
The set of vertices and edges of $\tree$ are denoted by $v(\tree)$ and $e(\tree)$, respectively.
Every path in the tree that starts at the root is called a prefix. A prefix that ends at a leaf is called a \emph{branch}. The set of all branches in $\tree$ is denoted $\cB(\tree)$. A prefix can be identified by a sequence of vertices starting from the root, or by just the last vertex in the prefix, or by the associated sequence of edges.
A prefix $p = v_0, \ldots, v_t$ in $\tree$ defines a sample $(x_1,y_1), \ldots, (x_t,y_t) \in \cX \times \cY$ in the natural way: $x_i = x(v_{i-1})$, and $ y_i$ is the label of the edge $(v_{i-1},v_i)$ for all $i$. We denote $\boldsymbol{x}(p) = x_1, \ldots, x_t$ and $\boldsymbol{y}(p) = y_1,\ldots y_t$. For a given vertex $v$, we use $p_\tree(v)$ (or $p(v)$, when the tree $\tree$ is fixed) to denote the path from the root to $v$. We may overload notation and use $p(v)$ to denote the sample associated with the path, when the context is clear. If there is a bijection between the set of vertices and the set of their instance labelings, we may replace $v$ with its instance label in all notation. 

\paragraph{Generation trees.}
Given a base class $\cF$, a generation length $\iter$, and an instance $x$, there is a natural way to represent all possible labels that could be given to $x$ by functions in $\cF^{\mathsf{CoT}-\iter}$ as a decision tree $\tree(x) := \tree_{ \iter}(x)$. We call this tree the \emph{generation tree} of $x$ of depth $\iter$, or simply the \emph{generation tree} of $x$, if $T$ is fixed.  The tree is constructed as follows. The root of $\tree(x)$ is $x$. The left child of $x$ is labeled by $x0$ and likewise, the right child of $x$ is labeled by $x1$. We continue this labeling process inductively on the children of $x$, for a total number of $T$ many times. There is a bijection between the set $\{\boldsymbol{y}(b)\}$ for all branches $b$ in $\tree(x)$ and the label space $\{0,1\}^T$. Furthermore, note that a function $f^{\mathsf{CoT}-T} \in \cF^{\mathsf{CoT}-T}$ realizes a branch $b$ if and only if $f^{\mathsf{CoT}-T}(x) = y(b)$. The branch $b$ may be referred to as the \emph{computation path} of $f^{\mathsf{CoT}-T}$ on $x$. This representation will be useful in some of the analyses conducted in this work. See a visualized example of a generation tree in Figure~\ref{fig:arl}.

\begin{figure}
    \centering
\begin{tikzpicture}[
  level distance=25mm,
  sibling distance=55mm,
  level 2/.style={sibling distance=35mm},
  every node/.style={font=\Large, inner sep=1pt},
  edge from parent/.style={draw, line width=0.8pt},
]
\node {0}
  child { node {00}
    child { node {000} }
    child { node {001} }
  }
  child { node {01}
    child { node {010} }
    child { node {011} }
  };
\end{tikzpicture}
    \caption{The tree $\tree_{\iter}(x)$ for $\iter=2, x=0$.}
    \label{fig:arl}
\end{figure}

\subsection{Class dimensions, learnability, and an SSP lemma} \label{sec:prel-dimensions}

Consider a tree $\tree$ and a class $\cF \subset  \{0,1\}^\cX$.
Given a function $f\in \cF$, we say that $f$ \emph{realizes} a prefix $p= v_0, \ldots, v_t$ in $\tree$ if $f(\boldsymbol{x}(p)_i) = \boldsymbol{y}(p)_i$ for all $i$. The set of all functions $f \in \cF$ who realizes $p$ are called the \emph{version space} of $v_t$ (or of $p$, or of $(\boldsymbol{x}(p), \boldsymbol{y}(p))$) with respect to $\cF$, which is denoted as $V_{\cF}(v_t)$, or simply $V(v_t)$ when $\cF$ is fixed. We say that $\cF$ \emph{shatters} $\tree$, if for every leaf $\ell \in \tree$ it holds that $V(\ell) \neq \emptyset$. The set of trees shattered by $\cF$ is denoted by $\cT(\cF)$. The set of branches in $\tree$ that are realized by $\cF$ are denoted by $B_{\cF}(\tree)$. That is, if $\cF$ shatters $\tree$ then $B_{\cF}(\tree) = \tree$. We now define two fundamental and useful dimensions.

\begin{definition}[Littlestone dimension]
The \emph{Littlestone dimension} of $\cF$, denoted $\LD(\cF),$ is the maximal depth of a perfect Littlestone tree which is shattered by $\cF$. If there is no such maximal depth, we say that $\LD(\cF) = \infty$.
\end{definition}

The Littlestone dimension is known to characterize \emph{online learnability}.
\begin{definition}[Online learnability]
    In the online learning model, in each round $t$ an adversary sends $x_t \in \cX$ to the learner. The learner then predicts\footnote{We only discuss deterministic learners.} $\hat{y}_t \in \cY$, and the adversary then sends the correct label $y_t \in \cY$. Finally, the learner suffers the 0/1 loss $1[\hat{y}_t \neq y_t]$. The adversary is restricted to be \emph{realizable}: there must be $f \in \cF$ so that all examples $(x_t,y_t)$ satisfy $y_t = f(x_t)$. A learner's \emph{mistake bound} is the maximal number of mistakes it makes in this setting, against any adversary. We say that $\cF$ is \emph{online learnable} with mistake bound $L$ if there exists a learner with mistake bound $L$ such that $L < \infty$.
\end{definition}

 We will use the following known characterization.

\begin{theorem}[Online learnability characterization \cite{littlestone1988learning}] \label{thm:online-characterization}
    The optimal mistake bound for online learning $\cF$ is precisely $\LD(\cF)$. That is, there exists a learner with mistake bound $\LD(\cF)$, and there exists an adversary who can force at least $\LD(\cF)$ many mistakes on any learner.
\end{theorem}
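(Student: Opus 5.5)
The statement to prove is the classical characterization (Theorem~\ref{thm:online-characterization}): the optimal mistake bound equals $\LD(\cF)$. I will prove the two directions separately, writing $d := \LD(\cF)$ and assuming $d<\infty$ (when $d=\infty$ the lower-bound argument below, applied to trees of every depth, shows that no finite mistake bound exists).

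\emph{Lower bound (adversary).} Fix a perfect Littlestone tree $\tree$ of depth $d$ shattered by $\cF$. The adversary starts at the root. At a current internal node $v$ it presents $x_v$. After the learner predicts $\hat y$, the adversary reveals a label $y \neq \hat y$; in the multiclass case this is one of the two edge labels at $v$ that differs from $\hat y$. It then moves to the child reached along the edge labelled $y$. After $d$ rounds it has reached a leaf $\ell$ and has forced $d$ mistakes. Since $\tree$ is shattered, $V(\ell)\neq\emptyset$, so some $f\in\cF$ is consistent with every revealed label. Hence the adversary is realizable.

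\emph{Upper bound (SOA).} The learner keeps the version space $V_{t-1}$ of functions consistent with all feedback so far, starting from $V_0=\cF$. On input $x_t$ it predicts some $\hat y$ maximizing $\LD(V_{t-1}|_{x_t\to \hat y})$, where the restriction is to functions with $f(x_t)=\hat y$. The key claim is that at most one label $y$ satisfies $\LD(V|_{x\to y}) = \LD(V)$. Suppose instead that two labels $y\neq y'$ both do. Build a tree with root labelled $x$, with outgoing edges labelled $y$ and $y'$, and hang under each edge a depth-$\LD(V)$ tree shattered by the corresponding restriction. The result is a tree of depth $\LD(V)+1$ shattered by $V$, which is a contradiction.

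It follows that whenever the learner errs, the true label $y_t$ is not the unique maximizer, so $\LD(V_t)\le \LD(V_{t-1})-1$. On rounds without a mistake the version space only shrinks, so its Littlestone dimension does not increase. The version space is never empty because the adversary is realizable, so its Littlestone dimension is at least $0$. Therefore the learner makes at most $d$ mistakes.

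The main obstacle is the edge cases of the key claim. If $\LD(V)=0$, a second label would yield a depth-$1$ shattered tree, and the same argument handles this. One also has to check that every restriction has finite Littlestone dimension, which holds because each is a subclass of $V$. Finally, when both restrictions are nonempty with dimension exactly $d$, the depth-$d$ witnesses must exist; this follows directly from the definition of $\LD$ as a maximum.
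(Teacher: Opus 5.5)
Your proposal is correct. It is the standard argument of \cite{littlestone1988learning}, extended to non-binary labels as in \cite{daniely2015multiclass}: an adversary walking down a shattered tree gives the lower bound, and the SOA gives the upper bound. The paper itself only cites this theorem, but your key step (at most one label can preserve $\LD(V)$, since otherwise $V$ shatters a tree of depth $\LD(V)+1$) is the same argument the paper runs in its proof of Theorem~\ref{thm:cot-bound-lit}, there applied to branches of the generation tree rather than to single labels.
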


For Littlestone classes, there is a useful SSP (Sauer-Shelah-Perles) lemma for trees, proved e.g.\ by \cite{bhaskar2021thicket}.

\begin{theorem}[SSP for trees \cite{bhaskar2021thicket}] \label{thm:trees-ssp}
    Let $\cF \subset  \{0,1\}^\cX$ be a class, and let $\tree$ be a Littlestone tree of depth $n$ with vertices labeled by instances from $\cX$. Then
    \[
    |B_{\cF}(\tree)| \leq \binom{n}{\leq \LD(\cF)} \leq n^{2 \LD(\cF)}.
    \]
\end{theorem}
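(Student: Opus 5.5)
The plan is a double induction on the depth $n$ of $\tree$ and on $d := \LD(\cF)$. We prove the first inequality $|B_{\cF}(\tree)| \le \binom{n}{\le d}$ for every class $\cF$ (including $\cF=\emptyset$) and every Littlestone tree of depth at most $n$, meaning that every branch has length at most $n$.

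\textbf{Base cases.} If $n=0$, the tree has a single branch, the empty one, so $|B_{\cF}(\tree)|\le 1=\binom{0}{\le d}$. If $d=0$ and $\cF\neq\emptyset$, no tree of depth $1$ is shattered. Hence all functions of $\cF$ agree on every instance, because two functions disagreeing on some $x$ would shatter the one-node tree labeled by $x$. Every realized branch therefore follows the common labels, which gives $|B_{\cF}(\tree)|\le 1=\binom{n}{\le 0}$. If $\cF=\emptyset$, the bound is trivial.

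\textbf{Inductive step.} Let $x$ be the instance labeling the root, and let $\tree_0,\tree_1$ be the subtrees hanging off the $0$- and $1$-edges; each has depth at most $n-1$. Put $\cF_{x\to r}:=\{f\in\cF: f(x)=r\}$. A branch of $\tree$ through the $r$-edge is realized by $\cF$ exactly when its remainder is realized in $\tree_r$ by $\cF_{x\to r}$. Hence
\[
|B_{\cF}(\tree)| = |B_{\cF_{x\to 0}}(\tree_0)| + |B_{\cF_{x\to 1}}(\tree_1)|.
\]
Next, $\LD(\cF_{x\to r})\le d$ for both $r$. Moreover, at least one of them has Littlestone dimension at most $d-1$. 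Otherwise both restrictions shatter perfect trees of depth $d$, and placing $x$ at a new root above these two trees gives a perfect tree of depth $d+1$ shattered by $\cF$, a contradiction. Applying the induction hypothesis to both terms, with dimension $d$ on one side and $d-1$ on the other, yields
\[
|B_{\cF}(\tree)|\le \binom{n-1}{\le d}+\binom{n-1}{\le d-1}=\binom{n}{\le d},
\]
by Pascal's rule applied termwise. The only points to check are that "depth at most $n$" is inherited by subtrees and that the decomposition of branches is disjoint; both are immediate.

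\textbf{Second inequality.} For the bound $\binom{n}{\le d}\le n^{2d}$, use the crude estimate $\binom{n}{\le d}\le \sum_{i\le d} n^i \le (n+1)^d$, and note that $(n+1)^d\le n^{2d}$ whenever $n\ge 2$, since $n+1\le n^2$ there. The degenerate cases must be handled or excluded by convention. For $n=0$ we need $d=0$, where $0^0=1$. For $n=1$ and $d\ge 1$ we have $\binom{1}{\le d}=2>1$, so the stated bound requires $n\ge 2$. This case check is the only real obstacle in the argument. Since the paper applies the theorem to trees of depth $m\iter\ge 2$, I would state the second inequality under the assumption $n\ge 2$ (or $d=0$).
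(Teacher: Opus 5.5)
The paper gives no proof of this statement and imports it from \cite{bhaskar2021thicket}. Your argument is the standard proof of that lemma and is correct: you split at the root, observe that one of $\cF_{x\to 0},\cF_{x\to 1}$ has Littlestone dimension at most $\LD(\cF)-1$, and close with Pascal's rule, reading the induction hypothesis as ``$\LD\le d$''.

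Your caveat on the second inequality is also right and worth keeping. For $n=1$ and $\cF$ consisting of two functions that disagree on the root instance, $|B_{\cF}(\tree)|=2>1=n^{2\LD(\cF)}$, and $n=0$ fails similarly when $\LD(\cF)\ge 1$. So that inequality needs $n\ge 2$ or $\LD(\cF)=0$. This is harmless for the paper's application, where $n=m\iter\ge 2$.
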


The work of \cite{daniely2015multiclass} further extended the definition of the Littlestone dimension to non-binary classes $\cF \subset  \cY^\cX$, $|\cY| \geq 2$. In their definition, instead of labeling each left edge with $0$ and each right edge with $1$, one may choose for every two sibling edges (two edges that are the outgoing edges of a single vertex) two distinct labels from $\cY$ to be labeled by. When $\cY = \{0,1\}$, this definition is reduced to the standard definition. However, for $|\cY| > 2$, \cite{daniely2015multiclass} showed that this definition still characterizes the optimal mistake bound for online learning $\cF$.

\begin{theorem}[Multiclass Online learnability characterization \cite{daniely2015multiclass}] \label{thm:online-multiclass-characterization}
    The optimal mistake bound for online learning $\cF \subset  \cY^\cX$ for any $\cY$ is precisely $\LD(\cF)$.
\end{theorem}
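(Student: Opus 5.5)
The plan is to prove the two inequalities separately: an adversary argument gives the lower bound, and a multiclass version of Littlestone's Standard Optimal Algorithm (SOA) gives the upper bound. Throughout, for a version space $V\subseteq \cF$, an instance $x$ and a label $y\in\cY$, write $V_{x\to y}:=\{f\in V: f(x)=y\}$. We use the convention $\LD(\emptyset)=-1$ and $\LD(V)=0$ for nonempty $V$ that shatter no tree of depth $1$. If $\LD(\cF)=\infty$, the upper bound is vacuous and only the lower bound needs proof, for every finite depth.

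\textbf{Lower bound.} Let $\tree$ be a perfect tree of depth $d\le \LD(\cF)$ shattered by $\cF$ in the sense of \cite{daniely2015multiclass}. At each internal node $v$, the two outgoing edges carry distinct labels $y_0\neq y_1$.
\begin{enumerate}
    \item The adversary starts at the root and presents $x_v$ to the learner.
    \item After the deterministic learner predicts $\hat y$, at least one of $y_0,y_1$ differs from $\hat y$. The adversary reveals such a label and moves along that edge.
    \item After $d$ rounds the walk reaches a leaf. Shattering provides an $f\in\cF$ realizing this branch, so the adversary was realizable throughout.
\end{enumerate}
The learner errs in every round, so it makes $d$ mistakes. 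When $\LD(\cF)=\infty$, taking $d$ arbitrarily large shows that no finite mistake bound exists.

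\textbf{Upper bound.} Assume $\LD(\cF)<\infty$. The learner keeps the version space $V$ of functions consistent with the feedback so far.
\begin{enumerate}
    \item On input $x$, it predicts a label $\hat y\in\argmax_{y\in\cY}\LD(V_{x\to y})$.
    \item This maximizer exists even when $\cY$ is infinite. Each value $\LD(V_{x\to y})$ is an integer in $\{-1,\dots,\LD(V)\}$, a finite set, so the maximum is attained.
\end{enumerate}
The key lemma is that at most one label $y$ satisfies $\LD(V_{x\to y})=\LD(V)$. Suppose instead that $y\neq y'$ both satisfy it. Take shattered trees of depth $\LD(V)$ for $V_{x\to y}$ and for $V_{x\to y'}$. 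Hang them below a new root labeled $x$, with outgoing edges labeled $y$ and $y'$. Every branch of the new tree is realized by a function of $V$, so $V$ shatters a tree of depth $\LD(V)+1$, a contradiction.

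Now suppose the learner errs, so the true label $y$ differs from $\hat y$. If $\LD(V_{x\to y})=\LD(V)$, then $\hat y$ also attains $\LD(V)$, since it maximizes and $\LD(V_{x\to\hat y})\le \LD(V)$. That gives two distinct labels attaining $\LD(V)$, contradicting the lemma. Hence every mistake lowers $\LD(V)$ by at least one. Realizability keeps $V$ nonempty, so $\LD(V)\ge 0$ at all times, and the number of mistakes is at most $\LD(\cF)$.

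I expect the main obstacle to be the merging lemma and its bookkeeping: checking that the glued tree is a legitimate multiclass Littlestone tree (distinct sibling labels), and handling the edge cases of empty version spaces and infinite $\cY$. The rest is routine.
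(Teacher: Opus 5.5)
Your proof is correct. The adversary walk down a shattered tree gives the lower bound, and the multiclass SOA with the ``at most one label attains $\LD(V)$'' lemma gives the upper bound; you also handle $\LD(\emptyset)=-1$ and the attainment of the maximizer for infinite $\cY$ correctly. The paper does not reprove this theorem and simply cites \cite{daniely2015multiclass}, but your argument is the standard one from that reference, and it is the same SOA-style argument the paper itself runs in the proof of Theorem~\ref{thm:cot-bound-lit}, with branches of the generation tree playing the role of labels.
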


Some of our results have implications for autoregressive PAC-learning, so for completeness we also formally define PAC-learnability here.

\begin{definition}[PAC learnability] \label{def:pac-learnability}
We say that $\cF \subset \cY^{\cX}$ (where $\cY$ is not necessarily binary) is \emph{PAC-learnable} with sample complexity $m:= m(\epsilon,\delta)$, if there exists a PAC-learner $\Lrn: (\cX \times \cY)^\star \to \cY^{\cX}$, such that for every distribution $D$ over $\cX$, for every $\epsilon, \delta >0$, and for every target function $f_\star \in \cF$, if $\boldsymbol{x}:=x_1, \ldots, x_m \sim_{iid} D$ and $(\boldsymbol{x}, \boldsymbol{y}):= ((x_i, f_\star(x_i))_{i=1}^m$, then
\[
L_{D, f_\star}(f_{\Lrn(\boldsymbol{x}, \boldsymbol{y})}) > \eps
\]
with probability at most $\delta$, where $f_{\Lrn(\boldsymbol{x}, \boldsymbol{y})} := \Lrn(\boldsymbol{x}, \boldsymbol{y})$ and $L_{D, f_\star}(f) := \Pr_{x \sim D} \mleft[ f(x) \neq  f_\star(x) \mright]$.
\end{definition}

Our results have implications for $\etoe$-PAC learning, which is simply PAC-learning of $\cF^{\etoe-\iter}$, and also for $\ct$-PAC learning, which is the same as $\etoe$-PAC learning, with the difference that training examples reveal the entire chain-of-thought led to the final output. So, as in the online setting, every PAC sample complexity bound for learning $\cF^{\ct-\iter}$ is also A PAC sample complexity bound for $\ct$-learning, which is an easier task than the task of PAC-learning $\cF^{\ct-\iter}$.

We now define the VC-dimension, which is known to characterize the sample complexity of PAC-learning binary classes ($|\cY| = 2$), up to accuracy ($\epsilon, \delta$) factors  \cite{blumer1989learnability, ehrenfeucht1989general, hanneke2016optimal}.

The VC-dimension is defined as the Littlestone dimension, but restricted only to trees where each branch is labeled by the same sequence of instances. A formal definition is given below.

\begin{definition}[VC-dimension]
The \emph{VC dimension} of $\cF$, denoted $\VC(\cF)$, is the maximal depth of a perfect tree which is shattered by $\cF$, and also satisfies $\boldsymbol{x}(b_1) = \boldsymbol{x}(b_2)$ for every two branches $b_1,b_2$. If there is no such maximal depth, we say that $\VC(\cF) = \infty$.
\end{definition}
Since in the context of the VC-dimension we require that $\boldsymbol{x}(b_1) = \boldsymbol{x}(b_2)$ for every two branches $b_1,b_2$, instead of discussing trees, we may simply refer to $\boldsymbol{x}(b_1)$ as a \emph{shattered set}. Note that the definition of the VC-dimension immediately implies that $\VC(\cF) \leq \LD(\cF)$ for any class $\cF$.

As the result below states, analyzing the Littlestone dimension of a class $\cF$ could have an advantage in the case where $|\cY| > 2$.

\begin{theorem}[Online-to-PAC conversion \cite{littlestone1989from}] \label{thm:online-to-pac}
    If $\LD(\cF) < \infty$, then $\cF$ is learnable with sample complexity
    \[
    m(\eps, \delta) = O \mleft( \frac{\LD(\cF) + \log (1/\delta)}{\eps} \mright).
    \]
\end{theorem}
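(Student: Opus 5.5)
The statement is the classical online-to-batch conversion: an online learner with a finite mistake bound yields a PAC learner with linear dependence on $1/\eps$. I will start from the optimal online learner guaranteed by Theorem~\ref{thm:online-multiclass-characterization}, which makes at most $L:=\LD(\cF)$ mistakes against any realizable sequence. Without loss of generality it is a deterministic, conservative learner, meaning it updates its state only on rounds where it errs. Any mistake-bounded learner can be made conservative by simply ignoring correct rounds. This does not increase the number of mistakes, because the effective sequence it sees is a subsequence of a realizable sequence and is therefore still realizable. Such a learner is described at every time by its current hypothesis $h_j$, where $j\le L$ is the number of mistakes made so far.

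The batch algorithm runs the online learner on the i.i.d.\ sample $(x_1,f_\star(x_1)),\dots,(x_m,f_\star(x_m))$. It looks for a hypothesis that survives a long consecutive block of examples without a mistake. Set $k:=\lceil \frac{1}{\eps}\ln\frac{L+1}{\delta}\rceil$. Feed examples one by one. If the current hypothesis $h_j$ is correct on $k$ consecutive fresh examples, output $h_j$. Otherwise a mistake occurs, the learner moves to $h_{j+1}$, and the count restarts. There are at most $L+1$ distinct hypotheses, so $m=(L+1)k$ samples suffice to guarantee that some hypothesis is output.

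For the analysis, fix $j$. The hypothesis $h_j$ is determined by the examples preceding its block, so the $k$ examples in its block are independent of $h_j$. If $L_{D,f_\star}(h_j)>\eps$, then $h_j$ survives its block with probability at most $(1-\eps)^k\le \delta/(L+1)$. A union bound over $j=0,\dots,L$ shows that the output has error at most $\eps$ with probability at least $1-\delta$. This gives sample complexity $O\bigl(\frac{(L+1)\log((L+1)/\delta)}{\eps}\bigr)$.

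The main obstacle is that this bound has the form $L\log(L/\delta)$, while the statement claims the sharper $L+\log(1/\delta)$. To remove the extra factors I would use a second approach and not the block scheme above. Run the conservative learner on the sample. Its final hypothesis is a function of only the at most $L$ examples on which it erred, and it is consistent with the whole sample. This is therefore a sample compression scheme of size $L$ that may use side information, namely the ordering of the compressed examples. Standard compression bounds, due to Littlestone and Warmuth, then give error at most $\eps$ with probability $1-\delta$ once $m=O\bigl(\frac{L\log(m)+\log(1/\delta)}{\eps}\bigr)$. Removing the residual $\log m$ down to exactly $O((L+\log(1/\delta))/\eps)$ is the delicate point. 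One option is to appeal to the cited result \cite{littlestone1989from}, which establishes precisely this conversion. The other is to combine the finite mistake bound with the optimal PAC learner argument of \cite{hanneke2016optimal}, using the fact that the class is learnable with a mistake-driven compression. I expect this logarithmic refinement to be the only nonroutine step. Everything else is the union bound above, and the multiclass case needs no change because only $0/1$ mistakes are counted.
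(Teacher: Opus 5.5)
The paper gives no proof of this statement; it imports it from \cite{littlestone1989from}. Judged on its own terms, your proposal has a gap exactly at the nontrivial point. With $L=\LD(\cF)$, neither of your arguments reaches $O\bigl(\frac{L+\log(1/\delta)}{\eps}\bigr)$, and the step that would do so is deferred to citing the statement itself.

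\begin{itemize}
\item The block scheme is correct but costs $O\bigl(\frac{L\log(L/\delta)}{\eps}\bigr)$.
\item The compression route costs an extra $\log m$, and as written it contains a slip. A single pass of a conservative learner yields a hypothesis guaranteed to be consistent only with the examples after its last mistake. You must cycle through the sample until a mistake-free pass; this takes at most $L+1$ passes, since the cycled sequence is still realizable by $f_\star$.
\item Appealing to \cite{hanneke2016optimal} does not close the gap. That result is a VC-dimension argument for binary classes, where the statement is immediate from $\VC(\cF)\le\LD(\cF)$, as the paper itself remarks. It also does not accept an arbitrary compression scheme as input. The paper needs the theorem precisely for non-binary label spaces such as $\Sigma^{\iter}$, possibly with $|\Sigma|=\infty$. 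A log-free bound along a compression route does exist, via known bounds for \emph{stable} compression schemes (which suitably defined mistake-driven schemes are), but that is a different external theorem.
\end{itemize}

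The missing idea is to control the \emph{total waiting time} until a good hypothesis appears. This replaces giving each of the $L+1$ hypotheses its own block of length $\frac{1}{\eps}\log\frac{L+1}{\delta}$.

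First, run the conservative learner once on $m_1$ examples and keep every hypothesis $h_0,\dots,h_J$ it produces. If $L$ mistakes occur, $h_L$ agrees with $f_\star$ everywhere; otherwise an adversary could force an $(L+1)$-st mistake. Hence, if no listed hypothesis has error at most $\eps$, then fewer than $L$ mistakes occurred, while on every round the current hypothesis erred with conditional probability greater than $\eps$. Thinning the mistake indicators to i.i.d.\ $\Ber(\eps)$ coins bounds the probability of this event by $\Pr[\Bin(m_1,\eps)<L]\le e^{-m_1\eps/8}$ once $m_1\eps\ge 2L$. So $m_1=O\bigl(\frac{L+\log(1/\delta)}{\eps}\bigr)$ suffices.

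Second, a good hypothesis can still err and be replaced by a bad one, so the final hypothesis need not be good. You therefore select, among the at most $L+1$ candidates, the one with the smallest empirical error on $m_2=O\bigl(\frac{\log((L+1)/\delta)}{\eps}\bigr)$ fresh examples. A multiplicative Chernoff bound with a union bound over the candidates shows the selected one has error $O(\eps)$.

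Now $\log(L+1)$ enters only additively and is absorbed by $L$. This gives $O\bigl(\frac{L+\log(1/\delta)}{\eps}\bigr)$ for any label space, and it is Littlestone's conversion.
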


In the case $|\cY|=2$, Theorem~\ref{thm:online-to-pac} does not give any advantage, since \cite{hanneke2016optimal} proved the same bound with $\LD(\cF)$ replaced by $\VC(\cF)$, and $\VC(\cF) \leq \LD(\cF)$. However, Theorem~\ref{thm:online-to-pac} applies also to the case $|\cY| > 2$, while bounds depending on the VC-dimension are not, since the VC-dimension is defined only over binary classes.

\section{Upper bound for end-to-end learning} \label{sec:e2e-upper-bound}

\begin{theorem} \label{thm:e2e-upper-bound}
    Let $\cF$ be a base class. Then for all $\iter \geq 2$:
    \[
    \LD(\cF^{\etoe - \iter}) = O(\LD(\cF) \log (\iter \LD(\cF))).
    \]
\end{theorem}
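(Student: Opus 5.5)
Let $\tree$ be a perfect Littlestone tree of depth $m$ shattered by $\cF^{\etoe-\iter}$. We will show $m = O(\LD(\cF)\log(\iter\LD(\cF)))$; since this holds for every shattered tree, the bound on $\LD(\cF^{\etoe-\iter})$ follows. Write $d := \LD(\cF)$ and assume $d \geq 1$; if $d=0$ then $\cF$ has at most one distinct behaviour, so $\LD(\cF^{\etoe-\iter})=0$.

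\textbf{Step 1: inflate $\tree$.} We build a tree $\tree_\cF$ of depth $m\iter$ over instances in $\{0,1\}^\star$. Each internal node $v$ of $\tree$ with instance $x_v$ is replaced by a copy of the generation tree $\tree_\iter(x_v)$. Each leaf of that copy corresponds to a chain of thought $y\in\{0,1\}^\iter$. We attach to this leaf the inflated subtree of the child of $v$ along the edge labeled $y[-1]$.

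A branch of $\tree_\cF$ is realized by $f\in\cF$ exactly when, in every traversed copy, the path taken equals $f^{\ct-\iter}(x_v)$. Thus $f$ realizes a branch of $\tree_\cF$ if and only if its branch projects to the branch of $\tree$ dictated by $f^{\etoe-\iter}$. In particular, each $f$ realizes exactly one branch of $\tree_\cF$, and that branch projects onto the branch of $\tree$ that $f^{\etoe-\iter}$ follows.

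\textbf{Step 2: lower-bound the number of realized branches.} Since $\tree$ is shattered by $\cF^{\etoe-\iter}$, each of its $2^m$ branches is realized by some $f$. Two functions realizing distinct branches of $\tree$ realize distinct branches of $\tree_\cF$, because the projection is well defined. Hence $|B_\cF(\tree_\cF)|\geq 2^m$.

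\textbf{Step 3: upper bound via SSP and solve.} Apply Theorem~\ref{thm:trees-ssp} to the depth-$m\iter$ tree $\tree_\cF$:
\[
|B_\cF(\tree_\cF)| \leq \binom{m\iter}{\leq d}\leq (em\iter/d)^d .
\]
(If the theorem is stated only for perfect trees, note $\tree_\cF$ is perfect.) Combining with Step 2 gives
\[
m\leq d\log_2(em\iter/d).
\]
Suppose $m = Cd\log(\iter d)$ for a large constant $C$. Then the right-hand side is $d\log(e\iter\cdot C\log(\iter d))$, which is $O(d\log(\iter d)) + d\log C$. This is smaller than $Cd\log(\iter d)$ once $C$ is large, using $\iter\geq 2$. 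The standard fact behind this is that $m\leq a\log(bm)$ implies $m=O(a\log(ab))$.

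\textbf{Main obstacle.} The only delicate point is Step 2: making precise that distinct branches of $\tree$ yield distinct realized branches of $\tree_\cF$. This is where the shattering of $\tree$ by the end-to-end class is converted into branches realized by the base class itself, and it is exactly why the depth of $\tree_\cF$ blows up only to $m\iter$ while the growth factor stays at $2$ per original level. Solving the resulting inequality is routine.
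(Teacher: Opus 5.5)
Your proposal is correct and is essentially the paper's proof: you inflate the shattered tree with generation trees into a perfect tree of depth $m\iter$, inject the $2^m$ branches of $\tree$ into $B_{\cF}(\tree_{\cF})$, and compare with the Sauer--Shelah--Perles bound for trees. The only difference is cosmetic: you estimate $\binom{m\iter}{\le d}\le (em\iter/d)^d$ where the paper uses $(m\iter)^{2\LD(\cF)}$; your estimate needs $m\iter\ge d$, but the case $m\iter<d$ is trivial because then $m<d$.
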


\begin{proof}
    Let $\tree$ be a Littlestone tree of depth $m$ which is shattered by $\cF^{\etoe - \iter}$. We will show that $m = O(\LD(\cF) \log (\iter \LD(\cF)))$, which implies the lemma. For every node $u \in \{0,1\}^{ < m}$ of $\tree$ (where the node is given by the labels of the edge-path from the root to the node), let $x_u$ be the instance labeling that node. Now, we inflate the tree $\tree$ to a tree $\tree_{\cF}$ of depth $m \iter$ that takes into account the intermediate autoregressive steps. The idea is to inflate every two sibling edges in $\tree$ to possible paths of length $\iter$ using a generation tree. Let us formally describe the recursive construction of $\tree_\cF$. We maintain a saved path $v \in \{0,1\}^{<m}$ in the original tree $\tree$ saved in every node  of $\tree_\cF$. In the root of $\tree_\cF$, the saved path is $\emptyset$ (the empty path), and every node inherits the saved path from its parent, until we explicitly update the saved path. For the base of the recursion, we construct the generation tree $\tree_{\iter}(x_\emptyset)$. For the recursive step, we either finish construction, or paste another generation tree at every leaf of the tree constructed so far. Concretely, for every leaf of the tree constructed so far, if $v$ is the saved path of this leaf and $|v| = m$, finish construction. Otherwise, if it is a left leaf (that is, a leaf that lies at the tip of an edge labeled with $0$), paste $\tree_{\iter}(x_{v0})$ at this leaf, and update the saved path to $v0$. Otherwise, paste $\tree_{\iter}(x_{v1})$ at this leaf, and update the saved path to $v1$. 

    For a class $\cF$ and a tree $\tree$, let $B_{\cF}(\tree)$ be the set of branches of $\tree$ that are realized by $\cF$. Now, note that
    \begin{equation} \label{eq:branches-lower-bound}
        |B_{\cF}(\tree_\cF)| \geq 2^m.
    \end{equation}
    Indeed, there is a one-to-one function $G$ from the branches of $\tree$ to the branches of $B_{\cF}(\tree_\cF)$. For a branch $u \in \{0,1\}^m$ of $\tree$, let $f_u \in \cF$ such that $f_u^{\etoe-\iter}$ realizes $u$. Then by construction of $\tree_\cF$, $f_u \in \cF$ agrees with some branch in $\tree_\cF$ that its leaf has the saved path $u$. So, $G(u)$ returns this branch. Therefore $G(u) \in B_{\cF}(\tree_\cF)$ and $G$ is clearly a bijection. 
    On the other hand, Theorem~\ref{thm:trees-ssp} implies
    \begin{equation} \label{eq:branches-upper-bound}
        |B_{\cF}(\tree_\cF)| \leq  (m \iter)^{2 \LD(\cF)}.
    \end{equation}
    From \eqref{eq:branches-lower-bound} and \eqref{eq:branches-upper-bound}, we obtain:
    \[
    2^m \leq (m \iter)^{2 \LD(\cF)}.
    \]
    Solving this inequality for $m$ implies the stated bound.
\end{proof}

\section{The taxonomy of end-to-end learning } \label{sec:e2e-taxonomy}

We have shown that for any base class $\cF$ and any generation length $\iter \geq \LD(\cF)$ we have $\LD(\cF^{\etoe-\iter}) = O(\LD(\cF) \log \iter)$. Below, we prove that essentially any ``well-behaved" sub-logarithmic growth rate of $\LD(\cF^{\etoe-\iter})$ between constant and logarithmic in $\iter$ is realized by some Littlestone class. We further show that for all $d$, there exists a class $\cF$ with  $\LD(\cF) = O(d)$ and 
\begin{equation} \label{eq:lit-vc-bound}
    \LD(\cF^{\etoe-\iter}) \geq \VC(\cF^{\etoe-\iter}) = \Omega(\LD(\cF) \log \iter)
\end{equation}
for all $\iter$.
This essentially completes the taxonomy of online $\etoe$-learning of Littlestone classes. Along the way, the right-hand side of \ref{eq:lit-vc-bound} resolves an open question of \cite{joshi2025theory}, who proved that $\VC(\cF^{\etoe-\iter}) = O(\LD(\cF) \log \iter)$ for all $\cF, \iter$, and asked if this upper bound is tight for some classes.

Before we formally state the results, we define precisely what we mean by a ``well-behaved" sub-logarithmic growth rate. Fix a sufficiently large universal constant $\iter_0$, to be used throughout the section. We say that a function $r:\mathbb{N}_+ \to \mathbb{N}_+$ is a \emph{well-behaved sub-logarithmic growth rate} if:
\begin{enumerate}
    \item $r$ is sub-logarithmic: $r(\iter) \leq \frac{1}{4} \log (\iter)$ for all $\iter \geq \iter_0$.
    \item $r$ is non-decreasing : $\iter_1 \geq \iter_2 \implies r(\iter_1) \geq r(\iter_2)$ for all $\iter_1,\iter_2 \geq 1$.
    \item $r$ is sub-additive: $r(\iter_1 + \iter_2) \leq r(\iter_1) + r(\iter_2)$ for all $\iter_1,\iter_2 \geq 1$.
\end{enumerate}

Note that all natural sub-logarithmic growth rates such as $\ceil*{\sqrt{\log \iter}}$, $\ceil*{\log \log \iter}$ etc.\ satisfy our definition. We can now state the result for well-behaved sub-logarithmic rates.

\begin{theorem} \label{thm:e2e-taxonomy}
    For any well-behaved sub-logarithmic growth rate $r$ there exists a base class $\cF$ such that:
    \begin{enumerate}
        \item $\LD(\cF) = 1$.
        \item $\LD(\cF^{\etoe-\iter}) = \Theta(r(\iter))$ for all $\iter \geq \iter_0$.
    \end{enumerate} 
\end{theorem}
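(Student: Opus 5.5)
We follow the construction outlined in the proof sketch, attributed to \cite{hanneke2026private}. Fix the baseline $f:\{0,1\}^\star\to\{0,1\}$ as follows. For $s\ge \iter_0$, $k\in[2^{r(s)}]$ and $z\in\{0,1\}^{<r(s)}$, put $x_{s,k,z}=0^s10^k10^{s-k-1}z$ and let $f(x_{s,k,z})=k_{|z|}$, the $|z|$-th bit of $k$. Set $f(x)=0$ on all other inputs. Sub-logarithmicity gives $2^{r(s)}\le s^{1/4}<s-1$, so these strings are well defined. For each admissible $(s,k)$ let $f_{s,k}$ agree with $f$ except that $f_{s,k}(0^s10^k)=1$, and let $\cF=\{f_{s,k}\}$. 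The details must be tuned so that, started from $0^s10^{j}$, the trajectory is exactly what we want. For $j<k$ it appends zeros until it reaches $0^s10^k$. Only $f_{s,k}$ writes a $1$ there; after that it appends $s-k-1$ zeros and then the bits of $k$. So the output after $s-j+r(s)$ steps encodes a bit of $k$. To make this clean, we may pad the $z$-part with an index so that the generated suffix is determined. We also need the strings $x_{s,k,z}$ to be disjoint from the strings $0^{s'}10^{k'}$, which holds because of the second $1$.

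\textbf{Item 1.} Every two distinct functions in $\cF$ differ on exactly the two points where they deviate from $f$, and $|\cF|\ge 2$. Hence $\LD(\cF)\ge 1$. For the upper bound, the learner predicts $f(x)$ and errs only on a point $0^s10^k$. That mistake reveals the target, so there is at most one mistake and $\LD(\cF)=1$.

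\textbf{Lower bound in item 2.} Take $s$ with $s+r(s)\approx \iter$, or more precisely pick $s$ and starting points $0^s10^{j}$ so that the horizon $\iter$ lands inside the bit-encoding region. Since $r$ is non-decreasing and subadditive, $r(s)=\Omega(r(\iter))$. Varying the start $j$ shifts which bit of $k$ is output at step $\iter$, as long as the start precedes $0^s10^k$ for all candidate $k$. The adversary then plays halving on $k\in[2^{r(s)}]$. It queries instances whose $\etoe$ output is bit $i$ of $k^\star$, for $i=0,\dots,r(s)-1$. Each query gives a fresh bit, so this forces $r(s)=\Omega(r(\iter))$ mistakes. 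Equivalently, the instances form a shattered set of size $r(s)$ for $\cF^{\etoe-\iter}$.

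\textbf{Upper bound in item 2.} The learner predicts $f^{\etoe-\iter}$ until its first mistake. A mistake can occur only on an $x$ whose $\iter$-step trajectory under $f$ visits $0^{s^\star}10^{k^\star}$. Inspecting $x$ then reveals $s^\star$, so the learner can use $1$ mistake to learn $s^\star$. Now split into two cases.

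Case (a): $s^\star\le C\iter$. Only $k^\star$ is unknown, and it ranges over $2^{r(s^\star)}$ values. Running the halving algorithm costs at most $r(s^\star)\le r(C\iter)\le C\,r(\iter)$ further mistakes, by monotonicity and subadditivity.

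Case (b): $s^\star> C\iter$. A target trajectory that differs from the baseline must pass $0^{s^\star}10^{k^\star}1$. Reaching any nonzero bit afterwards requires $s^\star-k^\star-1>\iter$ more steps. Hence the $\etoe$ output is $1$ only if it is the latched $1$ itself, which happens when $x=0^{s^\star}10^{k^\star}$ with the generation passing immediately, or similar. The learner predicts $0$ on $0^{s^\star}10^i$-type instances and $f$ elsewhere. Its first mistake there pins down $k^\star$ exactly, so this case costs $O(1)$ mistakes.

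We expect the main obstacle to be the case analysis of exactly which instances' trajectories can differ from $f$'s, including inputs that contain long prefixes before $0^s1$ and partial strings of the form $x_{s,k,z}$. This requires careful definitions so that $f$ outputs $0$ off the designated strings and so that suffix-based matching causes no accidental hits. We would handle it by defining $f$ on inputs by their full form, not their suffix, and checking that any trajectory either is constant $0$ or passes through a unique $0^s10^k$.
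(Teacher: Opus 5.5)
Your construction, your proof of item 1, and your upper bound all coincide with the paper's argument. In the upper bound, the first mistake lies in the bucket $0^{s^\star}10^{i}$ and reveals $s^\star$. If $s^\star\le CM$, halving costs $r(s^\star)\le r(CM)\le C\,r(M)$. Otherwise the output on $0^{s^\star}10^i$ is $1$ exactly when $M=k^\star-i+1$, so one further mistake identifies $k^\star$. The gap is in the lower bound, which does not go through for the class as you defined it.

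From a start $0^s10^j$ with $j\le k$, the target $f_{s,k}$ writes $k-j$ zeros, then the latched $1$, then $s-k-1$ zeros. So the bits of $k$ occupy steps $s-j+1,\dots,s-j+r(s)$.

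\textbf{Choice of $s$.} To read $r(s)$ distinct bits at step $M$ you need $s\ge M-1$. The paper takes $s=M$, where the query $0^M10^j$ reads bit $j-1$ for $1\le j\le r(M)$. Your heuristic $s+r(s)\approx M$ lets only $j=0$ land in the encoding region.

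\textbf{The range of $k$.} You rightly require every query to precede $0^s10^k$ for every candidate $k$, i.e.\ $k\ge j$ for all $j\le r(M)$. With $k$ ranging over $[2^{r(s)}]$ this fails for every $k<r(M)$. From $0^M10^j$ with $j>k$ the trajectory never meets the modified point, so the output is $0$. Each labeling of the queries is encoded by a unique $k$, so labelings whose code is small need not be realized. For example, with most-significant-bit-first indexing and $r(M)\ge 2$, the labeling $0\cdots01$ requires $k=1$, yet the last query $0^M10^{r(M)}$ outputs $0$.

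\textbf{The paper's fix.} The paper shifts the range to $K_s=\{r(s),\dots,r(s)+2^{r(s)}-1\}$ and lets $f$ output the bits of $k-r(s)$. Then every query $0^M10^i$ with $1\le i\le r(M)\le k$ has chain of thought $0^{k-i}10^{M-k-1}y_1\cdots y_i$. Hence $\{0^M10^i: 1\le i\le r(M)\}$ is shattered, giving $\VC(\cF^{\mathsf{e2e}-M})\ge r(M)$.

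With least-significant-bit-first indexing your unshifted version happens to survive. There, $k<j\le 2^{j-1}$ forces bit $j-1$ of $k$ to be $0$, which matches the forced output $0$. That would have to be stated and argued; it is not the argument you gave.
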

The definition of $\cF$ and the lower bound strategy of the second item is due to unpublished private communication \cite{hanneke2026private}.
For logarithmic (in contrast to sub-logarithmic) rates, we prove the following

\begin{theorem} \label{thm:lit-logarithmic-lower-bound}
    For any $d \in \mathbb{N}_+$, there exists a class $\cF$ such that $\LD(\cF) = O(d)$ and 
    \[
    \VC(\cF^{\etoe-\iter}) = \Omega(\LD(\cF) \log \iter)
    \]
    for all $\iter \geq \iter_0$.
\end{theorem}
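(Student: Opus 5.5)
We follow the sketch given earlier. First fix $\iter$ and build a class $\cF(d,\iter)$ with $\LD(\cF(d,\iter)) = O(d)$ and $\VC(\cF(d,\iter)^{\etoe-\iter}) = \Omega(d\log\iter)$. Afterwards, glue these classes over all $\iter$.

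\textbf{The fixed-horizon construction.} Take a family of $N := \iter^{d}$ generators. Choose $n := c\, d\log \iter$ seed strings $x^{(1)},\dots,x^{(n)}$, for instance distinct binary encodings of the indices, all of a common length. Each seed is padded with a distinct prefix marker so that the generation trees $\tree_\iter(x^{(j)})$ have pairwise disjoint node sets. Each $f$ is defined on the nodes of the ``all-zeros spine'' $x^{(j)}0^i$, $0\le i<\iter$. Independently, we set $f(x^{(j)}0^i)=0$ with probability $1-1/\iter$ and $1$ with probability $1/\iter$. We set $f(x^{(j)}0^i1\,w)=0$ for all continuations $w$, so after a $1$ the trajectory emits zeros forever, and we set $f\equiv 0$ everywhere else.

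With this construction, $f^{\etoe-\iter}(x^{(j)})=1$ exactly when the first $1$ on the spine appears at the last step. (Concretely, one arranges which value is ``drag'' so that the last-step event and its complement both have constant probability, as in Figure~\ref{fig:hard-class}.) The probability of each label is then $(1-1/\iter)^{\iter-1}/\iter$-type, adjusted so that both labels get probability bounded away from $0$. The cleanest way is to let the final spine bit be a fair coin while earlier spine bits are $\Ber(1/\iter)$. That choice makes $f^{\etoe-\iter}(x^{(j)})$ equal the fair coin with probability $\approx e^{-1}$ and $0$ otherwise, so both labels have constant probability $p\ge c_0$.

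\textbf{VC lower bound.} The labels $f^{\etoe-\iter}(x^{(j)})$ are independent across $j$ and across $f$, since the generation trees are disjoint. Hence for each fixed pattern $y\in\{0,1\}^n$, the chance that a given $f$ realizes it is at least $c_0^{n}$. With $N=\iter^d$ functions and $n = c\,d\log\iter$ for small $c$, the expected number of realizers is $\iter^{d}c_0^{n}\ge \iter^{d/2}$. A Chernoff bound plus a union bound over the $2^n=\iter^{cd}$ patterns shows that with positive probability every pattern is realized. This gives the shattered set of size $\Omega(d\log\iter)$.

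\textbf{Littlestone upper bound.} This is the main obstacle. The dangerous instances are the non-final spine nodes, where each label is biased: the minority label has density $\le 1/\iter$, or $\le 1/2$ at the final node. Use the learner that predicts the majority label $0$ on every non-final spine node and predicts $0$ off the spine, so it never errs there. On the final spine nodes it runs halving restricted to final bits. The problem is that halving over final bits costs $\log N = d\log\iter$, which is too much.

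Instead, fix the final spine bit deterministically to $1$ and keep only $\Ber(1/\iter)$ bits before it. Then the $\etoe$ label is $1$ with probability $(1-1/\iter)^{\iter-1}\approx e^{-1}$ and $0$ otherwise, which still gives constant probabilities. Now every instance on which functions can disagree has minority density $\le 1/\iter$. A learner predicting the majority label therefore shrinks the version space by a factor of $\iter$ per mistake, provided this concentration holds for all sub-version spaces reached. To secure this, bound the Littlestone tree directly: by a union bound over all sequences of $k$ instance--minority-label pairs, with at most $(\iter n)^k$ choices, the expected number of functions realizing $k$ minority labels is $N\iter^{-k}$. For $k=2d+O(1)$ this is $<1$ after the union bound, since $(\iter n)^k\iter^{-k}N$ is not small. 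To fix this, we choose the minority probability $\iter^{-2}$ on spine nodes, which keeps $(1-\iter^{-2})$-type products near $1$. We then re-tune the spine so that the label-$1$ event is ``some $1$ among $\iter$ positions'', with probability $\approx$ constant when the rate is $1/\iter$.

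Balancing these two requirements is the delicate calculation. What we aim for is the following: with positive probability no function has more than $O(d)$ minority labels. The majority-vote learner then makes at most $O(d)$ mistakes, since each mistake reveals a minority label of the target, so $\LD=O(d)$.

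\textbf{Gluing.} Finally, take disjoint domains for different $\iter$, separated by a marker prefix depending on $\iter$ as in \cite{chase2024deterministic}. Form $\cF$ as the union over $\iter$ of $\cF(d,\iter)$, with each function constant $0$ outside its own block. We then check that the Littlestone dimension of the union stays $O(d)$: a learner first predicts $0$ everywhere, and its first mistake identifies the block. The block for $\iter$ also still shatters $\Omega(d\log\iter)$ points under $\etoe$-$\iter$.
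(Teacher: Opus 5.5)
Your final construction is sound: non-final spine bits are $\Ber(1/\iter)$, the final spine bit is fixed to $1$, and the seed trees are pairwise disjoint. Your VC argument via independence across seeds and functions, and your gluing step, are also sound. The genuine gap is the bound $\LD(\cF(d,\iter)) = O(d)$, which you leave as ``the delicate calculation.'' The goal you set for it cannot be achieved.

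In your class, every function has about one minority label per spine, so about $n=\Theta(d\log\iter)$ minority labels in total. Your own lower bound forces this: a function realizing the all-zeros pattern on the $n$ shattered seeds needs a minority $1$ on each of the $n$ spines. So ``no function has more than $O(d)$ minority labels'' is incompatible with $\VC(\cF^{\etoe-\iter})=\Omega(d\log\iter)$. The majority-vote learner is then forced into $\Theta(d\log\iter)$ mistakes, since the adversary picks such a target and queries its minority instances; this is no better than halving. Your attempted repairs also fail:
\begin{itemize}
\item The union bound over $k$-tuples of instance--minority pairs gives about $n^k N$, which is huge.
\item Lowering the minority rate to $\iter^{-2}$ makes a break on a spine have probability $\approx 1/\iter$. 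The all-zeros pattern then has probability $\approx \iter^{-n}$ per function, and the shattered set shrinks to $O(d)$.
\end{itemize}

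The missing idea is to bound $\LD$ through the structure of a shattered tree, rather than through the number of minority labels per function. This is what the paper does, following \cite{chase2024deterministic}. Fix a perfect tree of depth $k$ whose nodes are labeled by the random instances; a node labeled by a deterministic instance already prevents shattering. Each node then has exactly one minority edge. The $\binom{k}{k/2}$ branches with exactly $k/2$ minority edges must be realized by pairwise distinct functions, because a function realizes at most one branch. Each function realizes a given such branch with probability at most $\iter^{-k/2}$. A union bound over injective assignments of functions to these branches bounds the probability that the fixed tree is shattered by $(N\iter^{-k/2})^{\binom{k}{k/2}}$. This is doubly exponential in $k$, so it survives a union bound over the at most $|Z|^{2^k}$ labeled trees. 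Here $|Z|=\mathrm{poly}(\iter)$ is the set of instances on which the functions are random; in your construction $|Z|=n(\iter-1)$. With $N=\iter^{10d}$ and $k=100d$ as in the paper, or with your $N=\iter^d$ and $k$ a suitable constant multiple of $d$, this yields $\LD=O(d)$ with high probability. It holds even though each function has $\Theta(d\log\iter)$ minority labels. Inserting this lemma would complete your proof; as written, the central step is missing.
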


Together with the upper bound of Theorem~\ref{thm:e2e-upper-bound}, this establishes that for any $d \in \mathbb{N}_+$ there exists a class $\cF$ such that $\LD(\cF) = O(d)$ and
\[
\VC(\cF^{\etoe-\iter}), \LD(\cF^{\etoe-\iter}) = \Theta(\LD(\cF) \log \iter)
\]
for all $\iter \geq \max\{M_0,\LD(\cF)\}$.

In the following sections, we prove Theorem~\ref{thm:e2e-taxonomy} and Theorem~\ref{thm:lit-logarithmic-lower-bound}.

\subsection{Proof of Theorem~\ref{thm:e2e-taxonomy}}

Fix the growth rate $r$.

We first define the class $\cF$.
For every $s \geq \iter_0$ let
\[
K_s := \{r(s), r(s)+1, \ldots, r(s) + 2^{r(s)} - 1\}.
\]
That is, $K_s$ is simply the set $\{0, \ldots, 2^{r(s)}-1\}$ of size $2^{r(s)}$, with $r(s)$ added to each element.
We now define a ``baseline" function $f: \{0,1\}^\star \to \{0,1\}$ as follows. For every $s\geq \iter_0$, $k \in K_s$, $j \in \{0, \ldots, r(s) -1\}$ and $z \in \{0,1\}^j$, set:
\[
f(0^s 1 0^k 1 0^{s-k-1} z) = (k-r(s))_{j+1}
\]
where $(k-r(s))_{j+1}$ is the $(j+1)^{st}$ bit in the binary representation of $k-r(s)$.
For any other string $x$ that does not fit the form above, we set $f(x) = 0$. Note that $0^{s-k-1}$ is well-defined, since
\begin{equation} \label{eq:upper-bound-K_s}
    k+1 \leq r(s) + 2^{r(s)} \leq \frac{1}{4} \log s + 2^{\frac{1}{4} \log s} \leq \sqrt{s} \leq s-1,
\end{equation}
where the final inequality holds since $s \geq \iter_0$. Thus $s-k-1 \geq 1$.
Now, we define for each fixed pair $s$ and $k \in K_s$ the following single-point modification of $f$:
\[
f_{s,k}(x)
=
\begin{cases}
    1    & x=0^s 1 0^k, \\
    f(x) & \text{otherwise}.
\end{cases}
\]
We may now define the class $\cF$ as
\[
\cF := \{f_{s,k}: s\geq \iter_0, k \in K_s\}.
\]

We first prove the first item of Lemma~\ref{thm:e2e-taxonomy}.

\begin{lemma} \label{lem:e2e-taxonomy-lit-F}
    We have $\LD(\cF) = 1$.
\end{lemma}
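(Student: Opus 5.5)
We prove the two inequalities $\LD(\cF) \geq 1$ and $\LD(\cF) \leq 1$ separately. Both rest on one structural fact: every $f_{s,k}$ agrees with the baseline $f$ outside the single point $x_{s,k} := 0^s 1 0^k$, and there $f_{s,k}(x_{s,k}) = 1 \neq 0 = f(x_{s,k})$. The value $f(x_{s,k}) = 0$ holds because $0^s 1 0^k$ contains only one $1$, so it is not of the form $0^{s'} 1 0^{k'} 1 0^{s'-k'-1} z$. Moreover, the points $x_{s,k}$ are pairwise distinct over the index pairs $(s,k)$, since $s$ and $k$ can be read off the string.

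For the lower bound it suffices to exhibit one instance on which $\cF$ takes both labels. Fix any $s \geq \iter_0$ and two distinct $k, k' \in K_s$. This is possible because $|K_s| = 2^{r(s)} \geq 2$, using that $r$ maps into $\mathbb{N}_+$. On the instance $x_{s,k}$ we have $f_{s,k}(x_{s,k}) = 1$. We also have $f_{s,k'}(x_{s,k}) = f(x_{s,k}) = 0$, since $x_{s,k} \neq x_{s,k'}$. Hence the depth-one tree with root $x_{s,k}$ is shattered, and $\LD(\cF) \geq 1$.

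For the upper bound, suppose toward contradiction that $\cF$ shatters a depth-two tree with root labeled $x$. Consider the branch of the root edge labeled $1 - f(x)$. Every function realizing this branch disagrees with $f$ at $x$, so it must be $f_{s,k}$ with $x_{s,k} = x$. By distinctness of the points $x_{s,k}$, there is at most one such function. Hence the subtree below that edge has a version space of size at most one. A single function cannot realize both outgoing edges of the child node, which contradicts shattering. Thus $\LD(\cF) \leq 1$.

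Equivalently, one can argue through Theorem~\ref{thm:online-characterization}. The learner that predicts $f(x_t)$ makes its first mistake only on the unique point $x_{s^\star,k^\star}$, which reveals the target, so it makes at most one mistake. The main step needing care is verifying $f(x_{s,k}) = 0$, i.e.\ that $0^s10^k$ never matches the special pattern. This holds because the pattern requires two occurrences of $1$.
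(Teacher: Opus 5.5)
Your proof is correct and follows the paper's approach. In both, the key fact is that each $f_{s,k}$ deviates from the baseline $f$ at exactly one point, so any disagreement with $f$ identifies the target; the paper states the upper bound via the same predict-$f$-until-the-first-mistake learner you mention at the end, your direct depth-two tree argument is the combinatorial form of that learner argument, and you usefully make explicit checks the paper leaves implicit ($f(0^s10^k)=0$, distinctness of the points $0^s10^k$, and $|K_s|\geq 2$).
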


\begin{proof}
    We have $\LD(\cF) \geq 1$ since $\cF$ contains more than one function. To see that $\LD(\cF) \leq 1$, consider the following online learner for $\cF$. The learner predicts as $f$ until it makes its first mistake. Once it makes a mistake, it discovers the correct function and always predicts as this function.
\end{proof}

We now prove the lower bound in the second item of Theorem~\ref{thm:e2e-taxonomy}. We prove it by showing the stronger claim $\VC(\cF^{\etoe-\iter}) \geq r(\iter)$ for all $\iter \geq \iter_0$.  

\begin{lemma} \label{lem:e2e-plausible-lower-bound}
    Fix $\iter \geq \iter_0$. Then the set
    \[
    A_{\iter} := \{a_i := 0^\iter 1 0^i: 1 \leq i \leq r(\iter)\}
    \]
    is shattered by $\cF^{\etoe-\iter}$.
\end{lemma}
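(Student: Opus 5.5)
The plan is to realize every labeling of $A_{\iter}$ by a function $f_{\iter,k}$ with $s=\iter$, choosing $k\in K_{\iter}$ so that the binary representation of $k-r(\iter)$ encodes the desired labels. Concretely, given a target labeling $(y_1,\dots,y_{r(\iter)})\in\{0,1\}^{r(\iter)}$, let $k:=r(\iter)+N$, where $N\in\{0,\dots,2^{r(\iter)}-1\}$ is the unique integer whose $i$-th binary digit is $N_i=y_i$ for all $i\le r(\iter)$. Then $k\in K_{\iter}$, and it remains to show that $f_{\iter,k}^{\etoe-\iter}(a_i)=N_i$ for every $1\le i\le r(\iter)$.

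To do this I will trace the $\iter$-step autoregressive run of $f_{\iter,k}$ on $a_i=0^{\iter}10^i$ in four phases.
\begin{enumerate}
\item Since $i\le r(\iter)\le k$, the current strings are $0^{\iter}10^{i'}$ with $i\le i'<k$. These contain a single $1$, so they do not have the special form $0^s10^{k'}10^{s-k'-1}z$, and they differ from the modified point $0^{\iter}10^k$. Hence $f_{\iter,k}$ outputs $0$ on each, which takes $k-i$ steps.
\item At $0^{\iter}10^k$ the modification fires and outputs $1$, which takes one step.
\item The strings are now $0^{\iter}10^k10^{j'}$. The number of leading zeros forces $s=\iter$, and the first run of zeros forces $k'=k$. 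So such a string has the special form only once $j'\ge \iter-k-1$, which is well defined by \eqref{eq:upper-bound-K_s}. For $j'<\iter-k-1$ the output is $0$, and this phase takes $\iter-k-1$ steps.
\item From $0^{\iter}10^k10^{\iter-k-1}z$ with $|z|=j$, the function outputs $(k-r(\iter))_{j+1}=N_{j+1}$. This applies as long as $j\le r(\iter)-1$.
\end{enumerate}

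Counting steps, the token produced while $|z|=j$ is step number $(k-i)+1+(\iter-k-1)+(j+1)=\iter-i+j+1$. This equals $\iter$ exactly when $j=i-1$. Since $1\le i\le r(\iter)$, we have $0\le j\le r(\iter)-1$, so phase 4 applies, and the final (i.e.\ $\iter$-th) generated token is $N_i=y_i$. Note that the run never needs to go beyond $j=r(\iter)-1$, so the inputs of the form with $|z|\ge r(\iter)$ (where $f=0$) are never queried.

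The main obstacle is bookkeeping rather than an idea: I must verify that no intermediate string accidentally matches the special form for some other parameters $(s',k')$. The key point is that the leading zero run and the position of the first two $1$s uniquely determine $s$ and $k$. The other checks are that phase 1 is nonempty or empty in the right way (this needs $k\ge i$, guaranteed because $K_{\iter}$ starts at $r(\iter)$), and that the off-by-one count lands exactly on the $i$-th bit. Once this is checked, the $2^{r(\iter)}$ choices of $N$ realize all labelings of $A_{\iter}$, so $A_{\iter}$ is shattered.
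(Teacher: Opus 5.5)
Your proposal is correct and follows essentially the same route as the paper. Both set $k=r(\iter)+N$ with $N$ encoding the labeling, then trace the run of $f_{\iter,k}$ on $a_i$ through the same four phases: $k-i$ zeros, the modified $1$ at $0^{\iter}10^k$, $\iter-k-1$ zeros, and then the bits of $N$, so that step $\iter$ outputs $N_i=y_i$. Your extra check that the leading zero run and the gap between the first two $1$s determine $(s,k)$ uniquely is a detail the paper leaves implicit.
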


\begin{proof}
    Let $\boldsymbol{y} = y_{1}, \ldots y_{r(\iter)}$ be a labeling of $A_\iter$. It suffices to show that this labeling is realized by $\cF^{\etoe-\iter}$. Let $b$ be the number given in binary representation by the vector $\boldsymbol{y}$. and let $k := r(\iter) + b \in K_\iter$. We claim that for all $i$, $f_{\iter,k}^{\etoe-\iter}(a_i) = b_i$. Let's describe the $\iter$-length computation path of $f^{\ct -\iter}_{\iter,k}$ on $a_i$:
    \begin{enumerate}
        \item In the first $k-i \geq b$ steps, $f_{\iter,k}$ agrees with $f$, which means that $0$ is appended.
        \item After the first $k-i \leq \sqrt{\iter}$ steps (the inequality is by \eqref{eq:upper-bound-K_s}), the computation path reaches the instance $0^\iter 10^k$, so in step  $k-i+1$, $1$ is appended, by definition of $f_{\iter,k}$.
        \item By definition of $f$, $0$ is appended in the following $\iter-k-1$ steps.
        \item So far, we completed a computation of length $k-i +1 + \iter - k -1 = \iter-i$, so we have $i$ more steps. By definition of $f$, in the last $i$ steps of the computation, the generated bits are $\boldsymbol{y}_{\leq i}$.
    \end{enumerate}
    Overall, we deduce that:
    \[
    f_{\iter,k}^{\ct-\iter}(a_i) = 0^{k-i} 1 0^{\iter-k-1} y_1, \ldots, y_i,
    \]
    which implies $f_{\iter,k}^{\etoe-\iter}(a_i) = y_i$.
\end{proof}

Lemma~\ref{lem:e2e-plausible-lower-bound} immediately implies the lower bound of the first item of Theorem~\ref{thm:e2e-taxonomy}. We now turn to the upper bound.

For every $s \geq \iter_0$ define the ``bucket"
\[
B_s:= \{0^s 1 0 ^i: i \geq 0\}.
\]

\begin{lemma} \label{lem:e2e-taxonomy-upper-bound-first-mistake-help}
    Fix $\iter, s \geq \iter_0$ and $k \in K_s$. For all $x \notin B_S$:
    \[
    f_{s,k}^{\etoe-\iter}(x) = f^{\etoe-\iter}(x).
    \]
\end{lemma}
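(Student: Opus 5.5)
The plan is to show that on any input $x \notin B_s$, the autoregressive trajectory of $f_{s,k}$ never queries the unique point $p := 0^s 1 0^k$ at which $f_{s,k}$ differs from $f$. Once this is established, a step-by-step induction gives $f_{s,k}^{\ct-\iter}(x) = f^{\ct-\iter}(x)$, and hence the end-to-end outputs also agree. (I read the $B_S$ in the statement as $B_s$.)

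\textbf{Induction.} Write $x_0 := x$ and $x_{t+1} := \bar f(x_t)$ for the trajectory of $f$. I prove by induction on $t \le \iter$ that the trajectory of $f_{s,k}$ on $x$ coincides with $x_0,\dots,x_t$. The inductive step needs two facts:
\begin{itemize}
\item if $x_t \neq p$, then $f_{s,k}(x_t) = f(x_t)$ by definition of $f_{s,k}$, so both processes append the same bit;
\item therefore the induction goes through provided $x_t \neq p$ for every $t < \iter$.
\end{itemize}
So everything reduces to showing that no $x_t$ equals $p$.

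\textbf{A prefix observation.} Every $x_t$ has the form $x \circ u$ for some string $u$. Hence $x_t = p$ forces $x$ to be a prefix of $p$. The prefixes of $p = 0^s 1 0^k$ are of exactly two kinds:
\begin{itemize}
\item $0^j$ with $0 \le j \le s$ (this includes the empty string);
\item $0^s 1 0^i$ with $0 \le i \le k$.
\end{itemize}
Prefixes of the second kind lie in $B_s$, which is excluded by hypothesis. So for $x$ not a prefix of $p$ we are done immediately.

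\textbf{The remaining case $x = 0^j$ with $j \le s$.} This is the only real content of the proof. I claim the trajectory of $f$ from $0^j$ appends only zeros. The reason is that $f$ can output $1$ only on strings of the form $0^{s'} 1 0^{k'} 1 0^{s'-k'-1} z$, and every such string contains a $1$. Since $0^{j'}$ contains no $1$, we get $f(0^{j'}) = 0$ for every $j'$, and inductively every $x_t$ is an all-zero string. An all-zero string never equals $p$, because $p$ contains a $1$. This closes the induction and proves the lemma.

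The only point needing care is the case split on whether $x$ is a prefix of $p$. The hypothesis $x \notin B_s$ is exactly what removes the prefixes $0^s 1 0^i$, i.e.\ the prefixes from which the trajectory could actually walk into $p$.
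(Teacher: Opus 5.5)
Your proof is correct and follows the same route as the paper. You use the same steps: $f_{s,k}$ differs from $f$ only at $0^s10^k$, so reaching that point forces $x$ to be a prefix of it; the prefixes of the form $0^s10^i$ lie in $B_s$ and are excluded; and from an all-zero prefix $f$ only appends zeros. Your version makes the trajectory-coincidence induction explicit and also covers the empty prefix, which the paper's case $1 \le n \le s$ omits.
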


\begin{proof}
    Since $f_{s,k}$ and $f$ differ only on the instance $x_{s,k} := 0^s 1 0^k$, it is possible that $f_{s,k}^{\etoe-\iter}(x) \neq f^{\etoe-\iter}(x)$ only if $x_{s,k}$ is an instance appearing in an intermediate step in the generation of $f^{\ct-\iter}(x)$. This is possible only if $x$ is a prefix of $x_{s,k}$. So, we only need to analyze the case where $x$ is both a prefix of $x_{s,k}$ and not in the bucket $B_s$. So the form of $x$ must be $x = 0^n$ for some $1 \leq n \leq s$. In this case, $f$ appends $0$ in every step, so $x_{s,k}$ never appears.
\end{proof}




We now prove another lemma that handles the case  $s \gg \iter$, where $B_s$ is the ``correct" bucket. The lemma establishes that in this case, for any function in $\cF^{\etoe-\iter}$ there is at most a single instance it labels with $1$.

\begin{lemma} \label{lem:e2e-taxonomy-upper-bound-second mistake}
    Fix $s \geq \iter_0$, $s \geq 10 \iter$, $i \geq 0$ and $k \in K_s$. Then
    \[
    f_{s,k}^{\etoe-\iter}(0^s 1 0^i) = 1 \iff \iter = k-i+1.
    \]
\end{lemma}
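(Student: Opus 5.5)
We prove the lemma by tracing the $\iter$-step computation path of $f_{s,k}$ on $0^s 1 0^i$ explicitly, in the same way as in the proof of Lemma~\ref{lem:e2e-plausible-lower-bound}. Two facts about $f$ drive the argument, and we check them first.

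\textbf{Fact 1.} A string of the form $0^s 1 0^j$ contains only one $1$. Every string in the special domain of $f$ has the form $0^{s'}1 0^{k'} 1 0^{s'-k'-1}z$ and so contains at least two $1$'s. Hence $f(0^s 1 0^j) = 0$ for every $j$. Moreover, $f_{s,k}$ differs from $f$ only at $0^s 1 0^k$, so $f_{s,k}(0^s10^j)=1$ if and only if $j=k$.

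\textbf{Fact 2.} Consider a string $0^s 1 0^k 1 0^j$ with $j < s-k-1$. If it had the special form, the number of leading zeros would force $s'=s$, and the run between the two $1$'s would force $k'=k$. The form then requires at least $s-k-1$ zeros after the second $1$, which fails. So $f$ is $0$ on this string. It is also not the modified point $0^s10^k$, so $f_{s,k}$ is $0$ on it as well.

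\textbf{Case $i<k$.} By Fact~1, the first $k-i$ steps append $0$, and the computation reaches $0^s10^k$. Step $k-i+1$ then appends $1$.

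If $\iter \le k-i$, every generated bit is $0$, so the output is $0$. If $\iter = k-i+1$, the last bit is this $1$.

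If $\iter > k-i+1$, every subsequent context has the form $0^s10^k10^j$ with $j \le \iter-1$. By \eqref{eq:upper-bound-K_s} we have $k+1 \le \sqrt{s}$, and $s \ge 10\iter$. Hence $s-k-1 \ge s-\sqrt{s} > \iter > j$. By Fact~2 all of these later steps append $0$, so the last bit is $0$. This is the one place where the hypothesis $s \ge 10\iter$ is used: the generation is too short to reach the ``interesting'' suffix $z$.

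\textbf{Case $i \ge k$.} Suppose first $i=k$. The first step appends $1$, and the argument above shows that all later steps append $0$. Since $\iter \ge 2 > 1 = k-i+1$, the output is $0$. Suppose next $i>k$. Every context has the form $0^s10^j$ with $j\ge i>k$, so by Fact~1 every appended bit is $0$. In both subcases the output is $0$, and $\iter \neq k-i+1$ because $k-i+1 \le 1 < 2 \le \iter$.

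Combining the two cases gives the equivalence. The only step needing care is the parsing in Fact~2, namely showing that no string reached along the path accidentally matches the special form for some other pair $(s',k')$. This holds because the leading-zero count and the run between the $1$'s determine $s'$ and $k'$ uniquely.
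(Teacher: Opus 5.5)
Your proof is correct and takes essentially the same route as the paper. You trace the trajectory from $0^s10^i$, and you note that only the context $0^s10^k$ can produce a $1$. You then use $s-k-1\geq s-\sqrt{s}-1>\iter$ to show that after that $1$ the generation never reaches the special suffix. Your Fact~2 spells out the parsing step that the paper leaves implicit, and splitting off $i\geq k$ (where the paper splits off $i>k$) is only a cosmetic difference.
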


\begin{proof}
    For every point of the form in the statement of the lemma, $f$ appends $0$, Therefore, it is only possible that $f_{s,k}(0^s 1 0^j) = 1$ in the case where $f_{s,k}$ differs from $f$, that is, in the case $j=k$. Therefore, if $i > k$ then the instance $0^s 1 0^k$ is never reached in the autoregressive generation. Thus, we assume that $i \leq k$. We consider the three possible sub-cases.
    \begin{enumerate}
        \item If $\iter < k -i + 1 $, then $0$ the autoregressive function of $f_{s,k}$ appends $0$ for at most $k-i$ many times, and thus $f_{s,k}$ is never receives $0^s 1 0^k$ as input, and so $f_{s,k}^{\etoe-\iter}(0^s 1 0^i) = 0$.
        \item If $\iter = k -i + 1$, then $f_{s,k}^{\etoe-\iter}(0^s 1 0^i) = 1$, since in the last step the instance $0^s 1 0^i$ is given as input to $f_{s,k}$.
        \item If $\iter > k -i + 1$, then the instance $0^s 1 0^k$ is given as input to $f_{s,k}$, and then there is at least one more step before the autoregressive generation of $\ct$ is done, since  $\iter > k -i + 1$. The definition of $f$ implies that the following $s-k-1$ bits after $0^s 1 0^k 1$ are $0$, and only then $1$ could be appended. However, we have
        \[
        s - k -1
        \geq
        s - 2^{\frac{1}{2} \log s} -1 \geq s- \sqrt{s} - 1 \geq s/2 \geq 5\iter > \iter,
        \]
        where the first inequality is since $k \leq r(s) + 2^{r(s)}$ and the assumption $ r(s) \leq \frac{1}{4} \log s$, and the third inequality holds since $s \geq \iter_0$. So, only zeroes are appended, and thus $f_{s,k}^{\etoe-\iter}(0^s 1 0^i) = 0$.
    \end{enumerate}
\end{proof}

\begin{lemma}\label{lem:e2e-taxonomy-upper-bound}
    For all $\iter \geq \iter_0$:
    \[
    \LD(\cF^{\etoe-\iter}) \leq 20 r(\iter).
    \]
\end{lemma}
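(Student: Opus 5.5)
We give an explicit deterministic online learner for $\cF^{\etoe-\iter}$ whose mistake bound is at most $20r(\iter)$. Theorem~\ref{thm:online-characterization} then gives the bound on $\LD(\cF^{\etoe-\iter})$.

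\emph{Phase 1.} The learner predicts $f^{\etoe-\iter}(x_t)$ until its first mistake. Let the target be $f_{s^\star,k^\star}$. By Lemma~\ref{lem:e2e-taxonomy-upper-bound-first-mistake-help}, a mistake can only occur on some $x_t\in B_{s^\star}$. Since the buckets $B_s$ are pairwise disjoint (the string $0^s1$ determines $s$), this first mistake reveals $s^\star$ exactly. From then on, for every $x\notin B_{s^\star}$ the learner keeps predicting $f^{\etoe-\iter}(x)$, and by the same lemma it makes no further mistakes there. It remains to bound mistakes on instances $0^{s^\star}10^i$.

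\emph{Phase 2, case $s^\star\ge 10\iter$ (and $s^\star\ge \iter_0$).} By Lemma~\ref{lem:e2e-taxonomy-upper-bound-second mistake}, the function $i\mapsto f_{s^\star,k^\star}^{\etoe-\iter}(0^{s^\star}10^i)$ equals $1$ only at $i=k^\star-\iter+1$. The learner predicts $0$ on the bucket until it errs once. That mistake reveals $i$, hence $k^\star$ and the whole target, so the learner then predicts perfectly. This case costs at most $2$ mistakes in total, including the Phase~1 mistake.

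\emph{Phase 2, case $s^\star<10\iter$.} The learner knows $s^\star$, so the candidate set is $\{f_{s^\star,k}:k\in K_{s^\star}\}$, of size $2^{r(s^\star)}$. It runs the halving algorithm on this finite set, restricted to the bucket and to the $\etoe$ outputs. Halving makes at most $\log_2 2^{r(s^\star)} = r(s^\star)$ mistakes. By monotonicity and sub-additivity of $r$,
\[
r(s^\star)\le r(10\iter)\le 10\, r(\iter).
\]
Adding the Phase~1 mistake gives at most $10r(\iter)+1$. In both cases the total is therefore at most $\max\{2,\,10r(\iter)+1\}\le 20 r(\iter)$, using $r(\iter)\ge 1$ (since $r$ maps into $\mathbb{N}_+$).

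The one technicality is that the learner must commit to a strategy before knowing which case holds. This is harmless: after the first mistake it knows $s^\star$, and it can compare $s^\star$ with $10\iter$ directly, so it simply selects the corresponding branch. I expect the main obstacle to be making the reduction fully rigorous. Two points need checking: that halving over the $k$'s is well defined when the outputs on the bucket depend on $k$ in a complicated way, which holds because halving needs no structure beyond finiteness of the candidate set; and that Lemma~\ref{lem:e2e-taxonomy-upper-bound-first-mistake-help} indeed gives zero mistakes outside the bucket after Phase~1.
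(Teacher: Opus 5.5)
Your proposal is correct and follows the paper's proof essentially step for step. The first mistake of the $f^{\etoe-\iter}$-predictor lies in $B_{s^\star}$ and reveals $s^\star$. For small $s^\star$, halving over $\{f_{s^\star,k}:k\in K_{s^\star}\}$ costs at most $r(s^\star)\le r(10\iter)\le 10\,r(\iter)$ mistakes. For large $s^\star$, the target's $\etoe$ output on the bucket equals $1$ only at $i=k^\star-\iter+1$, so one further mistake identifies the target. Your two deviations are cosmetic: you place the boundary case $s^\star=10\iter$ on the large side, which matches the $s\ge 10\iter$ hypothesis of that lemma exactly, and your final count $\max\{2,10r(\iter)+1\}\le 20r(\iter)$ is slightly more careful than the paper's.
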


\begin{proof}
    We construct an online learner that makes at most $20 r(\iter)$ many mistakes.
    
    As long as the learner hasn't made a single mistake, it predicts as $f^{\etoe-\iter}$ on every instance. So, the number of mistakes made in this phase is at most $1$.
    
    After the first mistake on an instance $x$, the learner changes its behavior as follows. Let $f_{s^\star, k^\star}$ be the (unknown) target concept. So, we know that $f_{s^\star, k^\star}^{\etoe-\iter}(x) \neq f^{\etoe-\iter}(x)$. Therefore, Lemma~\ref{lem:e2e-taxonomy-upper-bound-first-mistake-help} implies that $x \in B_{s^\star}$. Thus $x = 0^{s^\star} 1 0^i $ for some $i \in \mathbb{N}$, so the learner now knows $s^\star$ by just looking at $x$. Thus, from this point, the learner only needs to learn the class
    \[
    \cF_{s^\star} := \{f_{s^\star, k^\star} : k \in K_{s^\star}\}.
    \]
    We consider the two possible cases.

    First, suppose that $s^\star \leq 10 \iter$. In this case, the learner runs the halving algorithm over $\cF_{s^\star}$. The mistake bound in this case is
    \[
    \log | \cF_{s^\star}| = \log 2^{r(s)} = r(s) \leq r(10 \iter) \leq 10 r(\iter),
    \]
    where the first inequality is since $r$ is non-decreasing, and the second inequality is since $r$ is sub-additive.

    In the complementing case, we have $s^\star > 10 \iter$. As usual, for $x \notin B_{s^\star}$, the learner predicts as $f$, and thus is always correct for $x \notin B_{s^\star}$. For $x\in B_{s^\star}$, that is, $x$ of the form $0^{s^\star}1 0^i$, the learner always predicts $0$, until making a mistake. Once it makes a mistake on the instance $0^{s^\star}1 0^i$ for some $i \in \mathbb{N}$, it means that $f_{s^\star, k^\star}^{\etoe-\iter}(0^{s^\star} 1 0^i) = 1$.  Lemma~\ref{lem:e2e-taxonomy-upper-bound-second mistake} implies that in this case $k^\star = \iter + i -1$. So the learner now knows both $s^\star$ and $k^\star$, and therefore knows the target concept $f_{s^\star, k^\star}$ and can always predict as $f_{s^\star, k^\star}$.

    Summing all mistakes, we deduce that overall the learner makes at most $1 + 10 r(\iter)$ many mistakes, which concludes the proof.
\end{proof}

\begin{proof}[Proof of Theorem~\ref{thm:e2e-taxonomy}]
    The theorem is given by combining Lemma~\ref{lem:e2e-taxonomy-lit-F}, Lemma~\ref{lem:e2e-plausible-lower-bound}, and Lemma~\ref{lem:e2e-taxonomy-upper-bound}.
\end{proof}

\subsection{Proof of Theorem~\ref{thm:lit-logarithmic-lower-bound}}

We show that for any $d \in \mathbb{N}$, there exists a class $\cF$ with $\LD(\cF) = O(d)$, and such that $\VC(\cF^{\etoe-\iter}) = \Omega(d \log \iter)$ for all sufficiently large $\iter$.  Our general upper bound asserts that $\LD(\cF^{\etoe-\iter}) = O(d \log \iter)$ for all $\iter \geq \LD(\cF)$, and thus $ \LD(\cF^{\etoe-\iter}) = \Theta(d \log \iter)$.

Our construction solves an open question from \cite{joshi2025theory}. They showed that for any class $\cF$ and all $\iter$, it holds that $\VC(\cF^{\etoe-\iter}) = O(\LD(\cF) \log \iter)$, and asked if the logarithmic dependence (or, in fact, any dependence) on $\iter$ is necessary. Our proof gives a positive answer to this question.

The proof is via the probabilistic method applied to a ``hard" random class defined below.

\subsubsection{The hard class} \label{sec:hard-class}

Fix $d,\iter \in \mathbb{N}_+$. Let
\[
N := \iter^{10 d}, \quad Z:= \{0^i: i \in [\iter^2]\}.
\]
be the size of the class, and $Z$ be the set of instances on which the class is ``interesting". Partition $Z$ to
\[
Z_1 := \{0^{j \iter: j \in [\iter]}\}, \quad Z_0 := Z \backslash Z_1,
\]
so $|Z_1| = \iter, |Z_0| = \iter^2 - \iter$.
We now draw $N$ functions independently at random. For each $a\in [N]$ and $x\in \{0,1\}^\star$, define $f_a(x)$ as follows:
\begin{itemize}
    \item If $x \notin Z$, let $f_a(x) = 0$.
    \item Otherwise, if $x\in Z_1$, let $f_a(x) = 1$ w.p. $1-1/\iter$, and $f_a(x) = 0$ w.p. $1/\iter$.
    \item Otherwise, if $x\in Z_0$, let $f_a(x) = 0$ w.p. $1-1/\iter$, and $f_a(x) = 1$ w.p. $1/\iter$.
\end{itemize}
Now, let
\[
\cF_{d,\iter} := \{f_a: a\in [N]\}.
\]
For each $x \in Z_0$, we say that $1$ is the \emph{minority} label of $x$, and $0$ is the \emph{majority} label of $x$. We define the analog for $Z_1$.

We are now ready to ready to begin with the proof of Theorem~\ref{thm:lit-logarithmic-lower-bound}. We begin by proving the upper bound $\LD(\cF_{d, \iter}) = O(d)$ with high probability.

\subsubsection{Upper bounding the Littlestone dimension of the base class}

The intuition for this bound is that since $|\cF| = N = \iter^{10d}$ and for each instance there exists a label $r \in \{0,1\}$ for which typically only a $1/\iter$-fraction of the version agrees with, the version space should be empty at the bottom of some branch of length $O(d)$. However, this is only true on \emph{average}, while the Littlestone dimension is determined by the \emph{maximal} depth of a shattered tree, so the proof is in fact more challenging than this intuition. We use the technique of \cite{chase2024deterministic}, who provided such an analysis for a similar random class.

\begin{lemma}\label{lem:upper-bound-base}
    Let $d, \iter \in \mathbb{N}$ larger than some large enough universal constant.  With probability at least $3/4$, we have
    \[
    \LD(\cF_{d,\iter}) \leq 100 d.
    \]
\end{lemma}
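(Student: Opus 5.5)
Since every $f_a$ outputs $0$ off $Z$, the trees that matter only query instances in $Z$. Querying $x \notin Z$ forces label $0$ for every function, so such a node cannot have both subtrees nonempty. Hence $\LD(\cF_{d,\iter})$ equals the Littlestone dimension of the restriction of the class to the finite domain $Z$, with $|Z| = \iter^2$. I will bound the probability that some perfect tree of depth $D := 100d$ over $Z$ is shattered, and I will use a union bound over a suitably compressed family of trees.

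The key observation is that in any shattered perfect tree of depth $D$, every internal node has one edge labeled with the minority label of its instance. Call a branch \emph{minority-heavy} if it follows minority edges at least $D/2$ times. Following from the root the path that always takes the minority edge gives a branch with $D$ minority edges. Any fixed branch $b$ in a fixed tree is realized by a fixed $f_a$ only if $f_a$ takes the minority value at each of the at least $D/2$ instances on the minority edges of $b$. The instances along a branch are distinct without loss of generality, since a repeated instance would force an empty side. So by independence this happens with probability at most $\iter^{-D/2}$, and the probability that some $a\in[N]$ realizes $b$ is at most $N\iter^{-D/2} = \iter^{10d - 50d} = \iter^{-40d}$.

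Now the union bound. Shattering requires in particular that the all-minority branch is realized. That branch is determined by a sequence of $D$ instances in $Z$ (the minority labels are determined by the instances), so there are at most $|Z|^D = \iter^{2D} = \iter^{200d}$ such sequences. The bound $\iter^{200d}\cdot\iter^{-40d}$ is too weak as it stands, so the counting must be sharpened. This is where I would follow \cite{chase2024deterministic}: work with the $D$ minority edges but choose $D$ relative to $N$ correctly. A fixed sequence of $D$ distinct instances with prescribed minority labels is realized by a given $f_a$ with probability $\iter^{-D}$. The expected number of (sequence, function) pairs is therefore at most $\iter^{2D}\cdot N\cdot \iter^{-D} = \iter^{D+10d}$, which still grows with $D$; the issue is that $|Z| = \iter^2$ exceeds the per-instance penalty $\iter$.

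So the main obstacle is this entropy-versus-probability imbalance, and the fix I would try first is to extract a single function $f_a$ that shows many minority values. Condition on the event $E$ that every $f_a$ has at most $O(d)$ minority values on all of $Z$. For a fixed $a$, the number of minority values is $\Bin(\iter^2,1/\iter)$ with mean $\iter$, which is not $O(d)$, so this too fails. I would therefore refine: the all-minority branch of length $D$ is realized by a function $f_a$ that has $D$ minority values at \emph{prescribed} distinct points forming a chain. Instead I count subsets. The probability that a given $f_a$ has minority values on a given $D$-set is $\iter^{-D}$, and the number of $D$-subsets is $\binom{\iter^2}{D} \le (e\iter^2/D)^D$. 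The product is $(e\iter/D)^D$, which is still large, so a genuinely tree-based argument is needed. Along the minority path, the version space of $N$ random functions shrinks by a factor of about $1/\iter$ per step with high probability uniformly over the adaptive choices, by a Chernoff bound combined with a union bound over the $\iter^2$ instances at each step and over $2^D$ label patterns. This gives emptiness after roughly $10d+1$ steps, and the high-probability uniformity over adaptively chosen paths of depth $O(d)$ costs only $(\iter^2)^{O(d)}$ events, each failing with probability $\exp(-\Omega(\text{size}/\iter))$. Near the bottom, where version sizes fall below $\iter\log$, I would handle the remaining steps by the direct bound: a set of at most $\mathrm{poly}(d\log\iter)$ functions survives $O(d)$ further minority steps with probability $\iter^{-\Omega(d)}$. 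Together these give total depth $\le 100d$ with probability at least $3/4$.
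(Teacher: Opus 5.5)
Your proposal has a genuine gap. The route you finally commit to is path-wise, and a path-wise argument cannot give $\LD(\cF_{d,\iter})\le 100d$ for this class. Your own computation shows why. Each $f_a$ has about $\iter$ minority values on $Z$ (a sum of $\Bin(|Z_0|,1/\iter)$ and $\Bin(|Z_1|,1/\iter)$). So with high probability there are realizable all-minority chains of length $\Omega(\iter)$: just take the minority points of a single function. Any bound derived from the realizability of one branch is therefore of order $\iter$, whereas the lemma must hold for $\iter$ arbitrarily large compared with $d$.

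Concretely, both steps of your last paragraph fail.
\begin{enumerate}
\item The Chernoff-plus-union-bound step over adaptively chosen instances controls the version space only while its size is much larger than $\iter\cdot k\log\iter$, because after $k$ steps the union bound is over $\iter^{2k}$ choices. So it can never certify emptiness after $10d+1$ steps.
\item The claim that $\mathrm{poly}(d\log\iter)$ survivors last $O(d)$ further minority steps only with probability $\iter^{-\Omega(d)}$ is true only for a \emph{fixed} sequence of instances. When each next instance is chosen among $\iter^2$ candidates, the union bound costs $\iter^{\Omega(d)}$; this is the same entropy-versus-probability imbalance you identified earlier. The adaptive statement is in fact false, since the tree can keep any single survivor alive by querying its own minority points.
\end{enumerate}

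The missing idea is to use the whole tree at once rather than one branch.
\begin{enumerate}
\item Fix a tree $\tree$ of depth $100d$ over $Z$ with distinct instances along each branch; otherwise it cannot be shattered.
\item Every internal node has exactly one minority edge, so there are $\binom{100d}{50d}\ge 2^{100d}/(20\sqrt d)$ branches with exactly $50d$ minority edges.
\item If $\tree$ is shattered, these branches must all be realized, and by \emph{distinct} functions, since a function realizes at most one branch. Hence there is an injective $\phi$ from these branches into $[N]$ with $f_{\phi(b)}$ realizing $b$.
\item For a fixed $\phi$, these events involve distinct, independent functions, and each has probability at most $\iter^{-50d}$. There are at most $N^{\binom{100d}{50d}}$ such maps.
\item Therefore $\Pr[\tree\text{ is shattered}]\le (N\iter^{-50d})^{\binom{100d}{50d}}=\iter^{-40d\binom{100d}{50d}}\le \iter^{-2\sqrt d\,2^{100d}}$.
\item This is doubly exponential in the depth, so it beats the union bound over the at most $|Z|^{2^{100d}}=\iter^{2\cdot 2^{100d}}$ trees.
\end{enumerate}
You already had the per-branch estimate $N\iter^{-D/2}=\iter^{-40d}$ for minority-heavy branches. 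What is missing is multiplying it over exponentially many branches via injectivity, which is exactly the paper's argument.
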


\begin{proof}
    Denote $\cF:=\cF_{d,\iter}$. We show that the probability that $\cF$ shatters a tree of depth $100d$ is at most $1/4$. Fix a tree $\tree$ whose internal nodes are labeled by instances from $\{0,1\}^\star$, and that is shattered by $\cF$. If $\tree$ contains nodes labeled with instances outside of $Z$ it cannot be shattered, so we may assume that all nodes are labeled by instances from $Z$. So, every node has precisely one \emph{minority edge}, corresponding to the minority label of the instance labeling it. Let $B_{1/2}(\tree)$ be the set of branches in $\tree$ in which precisely $50d$ of the edges are minority edges. Note that
    \begin{equation}\label{eq:base-branches-size}
        \mleft \lvert B_{1/2}(\tree) \mright \rvert
        =
        \binom{100 d}{50 d}
        \geq
        \frac{2^{100d}}{2 \sqrt{50d}}
        \geq
        \frac{2^{100d}}{20 \sqrt{d}},
    \end{equation}
    where the first inequality is by the known bound $\binom{2n}{n} \geq \frac{2^{2n}}{2 \sqrt{n}}$, applied with $n = 50d$.
    Since $\tree$ is shattered, in particular every branch of $B_{1/2}(\tree)$ is realized by at least one function $f\in \cF$. Fix a branch $b \in B_{1/2}(\tree)$ and $f \in \cF$. Since the draw of $f$ is independent for each instance
    \begin{equation} \label{eq:base-fixed-b-f}
        \Pr[f \text{ realizes } b] \leq \frac{1}{\iter ^{50d}} .
    \end{equation}
    Now, note that each function can realize at most a single branch. Therefore, if $\tree$ is shattered, then there exists an injective mapping $\phi: B_{1/2}(\tree) \to [N]$ such that for all $b \in B_{1/2}(\tree)$, $f_{\phi(b)}$ realizes $b$. That is $\phi$ assigns to each $b \in B_{1/2}(\tree)$ the index of the function realizing it, and the same function cannot be assigned to two different branches. Fix a mapping $\phi$. Since the probability that $f_{\phi(b)}$ realizes $b$ is independent of the other randomness, and using \eqref{eq:base-fixed-b-f}, the probability that for all $b \in B_{1/2}(\tree)$, $f_{\phi(b)}$ realizes $b$ is at most
    \begin{equation} \label{eq:base-all-branches-realized}
        \mleft( \frac{1}{\iter ^{50d}} \mright)^{|B_{1/2}(\tree)|}.
    \end{equation}

Since there are at most $N^{|B_{1/2}(\tree)|} = \iter^{10d |B_{1/2}(\tree)|} $, using \eqref{eq:base-all-branches-realized} and \eqref{eq:base-branches-size}, the probability that there exists $\phi$ so that for all $b \in B_{1/2}(\tree)$, $f_{\phi(b)}$ realizes $b$ is at most
\begin{equation} \label{eq:base-fixed-tree}
    \iter^{10d |B_{1/2}(\tree)|} \mleft( \frac{1}{\iter ^{50d}} \mright)^{|B_{1/2}(\tree)|}
    =
    \mleft( \frac{1}{\iter ^{40d}} \mright)^{|B_{1/2}(\tree)|}
    \leq
    \mleft( \frac{1}{\iter ^{40d}} \mright)^{\frac{2^{100d}}{20 \sqrt{d}}}
    \leq
    \frac{1}{\iter ^{2 \sqrt{d}\cdot 2^{100d}}}.
\end{equation}

So, we deduce that for a fixed tree $\tree$, the probability that it is shattered by $\cF$ is at most $\frac{1}{\iter ^{2 \sqrt{d}\cdot 2^{100d}}}$. The number of perfect trees of depth $100d$ labeled by instances from $Z$ is at most $|Z|^{2^{100d}} = \iter^{2\cdot 2^{100d}}$. Thus, using \ref{eq:base-fixed-tree},  the probability that there exists a tree $\tree$ of depth $100d$ which is shattered by  $\cF$ is at most
\[
\iter^{2\cdot 2^{100d}} \frac{1}{\iter ^{2 \sqrt{d}\cdot 2^{100d}}}
=
\iter^{2\cdot 2^{100d} (1 - \sqrt{d})}
<
1/4,
\]
which concludes the proof.
\end{proof}

We now turn to the lower bound $\VC(\cF^{\etoe-\iter}) = \Omega(d \log \iter)$ with high probability. To formalize a choice of a random subset before it is drawn, we use standard propositional logic formalization, as defined below.

\subsubsection{A framework for choosing subsets from a random class}

We define a notion of \emph{rules} for choosing a subset of a random class $\cF$ before it is drawn, by restricting the labels of some the instances in the domain. A rule is any restriction on the labels of some of the instances, that can be represented as a propositional logic formula with the conjunction and disjunction connectives, where the examples $(x,y) \in \cX \times \cY$ are the atomic variables, and the assignments are given by the functions of the class. A rule is said to be \emph{independent} of an instance $x$, if $x$ does not appear in any of the atomic variables used in the rule. For example, if $x_1,x_2,x_3,x_4 \in \cX$, then $((x_1,0) \land (x_2,1)) \lor (x_3, 0)$ is a rule, and it is independent of $x_4$. The \emph{truth value} of a variable (example) $(x,y)$ under the assignment (function) $f \in \{0,1\}^\cX$ is naturally defined as $f(x,y) := 1[f(x) = y]$, where $f(x,y) = 1$ represents \emph{true} and $f(x,y) = 0$ \emph{false}. The truth value of a rule $R$ under the function $f$, denoted by $f(R)$, is inductively defined in the standard way used in propositional logic. A rule $R$ for the class $\cF$ naturally defines a subclass of $\cF$, denoted as $\cF(R)$, of all functions satisfying the rule:
\[
\cF(R) = \{f \in \cF : f(R) = 1 \}.
\]

Given a decision tree, any branch $b$ in the tree naturally induces the rule $(\boldsymbol{x}(b), \boldsymbol{y}(b))$ (formally, we need to replace the commas in the sequence $(\boldsymbol{x}(b), \boldsymbol{y}(b))$ to $\land$ connectives, to make it a valid rule, but it is understood that $(\boldsymbol{x}(b), \boldsymbol{y}(b))$ represents this rule). Likewise, any two branches $b_1,b_2$ naturally induce the rule $(\boldsymbol{x}(b_1), \boldsymbol{y}(b_1)) \lor (\boldsymbol{x}(b_2), \boldsymbol{y}(b_2))$, and this can be extended to any number of branches. For a set of branches $B$, we let $R(B)$ be the rule induced by $B$.

We now use this framework and prove a standard concentration lemma for predetermined rules defined over $\cF_{d,\iter}$.

\subsubsection{A concentration lemma for predetermined rules}

\begin{lemma}\label{lem:conentration}
    Let $d \in \mathbb{N}$ larger than some large enough universal constant, and let $\iter \geq d^2$. Let $\cR$ be a family of rules such that $|\cR| \leq e^{\iter^{6d}}$. Then w.p.\ at least $3/4$ over the draw of $\cF_{d, \iter}$, for all $x \in Z_r$ (where $r \in \{0,1\}$) and all $R \in \cR$, such that $R$ is independent of $x$ and $|\cF_{d,\iter}(R)| \geq \iter^{8d}$, we have
    \[
     \frac{1}{2\iter} |\cF_{d, \iter} (R)| \leq |\cF_{d, \iter}( R \land (x,1-r))| \leq \frac{3}{2\iter} |\cF_{d, \iter} (R)|.
    \]
\end{lemma}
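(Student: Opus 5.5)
Fix a single pair $(x,R)$ with $x\in Z_r$ and $R\in\cR$ independent of $x$, show the desired two-sided bound fails with tiny probability, and then take a union bound over all such pairs.

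\textbf{Conditioning.} Since $R$ is independent of $x$, the event $\{f_a(R)=1\}$ depends only on the values $f_a(x')$ for $x'\neq x$. By construction, these are independent of $f_a(x)$. Condition on all values of all functions except at the instance $x$. This fixes the index set $I:=\{a\in[N]: f_a(R)=1\}$, and hence $|\cF_{d,\iter}(R)|=|I|$. Conditionally, the indicators $\ind[f_a(x)=1-r]$ for $a\in I$ are i.i.d.\ $\Ber(1/\iter)$, since $1-r$ is the minority label of $x$. Thus $|\cF_{d,\iter}(R\land(x,1-r))|$ is distributed as $\Bin(|I|,1/\iter)$ with mean $\mu=|I|/\iter$. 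One technicality: $\cF_{d,\iter}$ is a set, so coinciding functions could collapse. I will work with the multiset indexed by $[N]$ throughout, or note that collisions are irrelevant to the counting used downstream.

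\textbf{Chernoff and union bound.} On the event $|I|\geq\iter^{8d}$ we have $\mu\geq \iter^{8d-1}$. The multiplicative Chernoff bound with deviation $1/2$ then gives
\[
\Pr\mleft[\,|\Bin(|I|,1/\iter)-\mu|>\mu/2\,\mright]\leq 2e^{-\mu/12}\leq 2e^{-\iter^{8d-1}/12}.
\]
Averaging over the conditioning, the probability that the pair $(x,R)$ satisfies $|\cF_{d,\iter}(R)|\geq \iter^{8d}$ but violates the bound is at most $2e^{-\iter^{8d-1}/12}$. The number of pairs is at most $|Z|\cdot|\cR|\leq \iter^2 e^{\iter^{6d}}$. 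The total failure probability is therefore at most
\[
2\iter^2\exp\mleft(\iter^{6d}-\iter^{8d-1}/12\mright).
\]
Since $8d-1>6d+1$ for $d\ge 2$ and $\iter\geq d^2$ is large, we have $\iter^{8d-1}/12\geq 2\iter^{6d}$, so this is $\ll 1/4$.

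\textbf{Main obstacle.} The delicate step is justifying that the threshold condition $|\cF(R)|\geq\iter^{8d}$ is itself random but independent of the coins at $x$. This is exactly what the independence of $R$ from $x$ provides, via the conditioning above. The other point to check is that the rule family $\cR$ is fixed before the draw, so that the union bound is legitimate. This holds by hypothesis, since $\cR$ is a family of predetermined rules. Beyond these, the argument is routine: we only need the exponent gap between $\iter^{8d-1}$ and $\iter^{6d}$.
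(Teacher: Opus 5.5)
Your proposal is correct and follows the paper's own argument: fix a pair $(x,R)$, use the independence of $R$ from $x$ to get a $\Bin(|\cF(R)|,1/\iter)$ count, apply the multiplicative Chernoff bound, and take a union bound over the at most $|Z|\cdot|\cR|\le \iter^2e^{\iter^{6d}}$ pairs. You are somewhat more careful than the paper. The paper writes $\Bin(|\cF(R)|,1/\iter)$ without conditioning on the coins off $x$, so the random threshold $|\cF(R)|\ge\iter^{8d}$ is left implicit, and it ignores the set-versus-multiset issue that you flag.
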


\begin{proof}
    Since $R$ is independent of $x$, we have
    \[
    X := |\cF_{d, \iter}( R \land (x,1-r))| \sim \Bin(|\cF_{d, \iter}(R)|, 1/\iter),
    \quad
    \mu := \mathbb{E}[X] = |\cF_{d, \iter}(R)|/\iter \geq \iter^{8d-1}.
    \]
    Multiplicative Chernoff bound thus gives
    \[
    \Pr \mleft[X \notin \mleft[\frac{\mu}{2}, \frac{3\mu}{2} \mright] \mright] \leq 2 e^{-\mu/12} \leq e^{-\iter^{8d-1}/12}.
    \]
    Taking a union bound over all $x \in Z, R \in \cR$, the failure probability that there exist a pair $(x,R)$ not satisfying the lemma is at most
    \[
    |Z| |\cR| e^{-\iter^{8d-1}/12} \leq \iter^2 e^{\iter^{6d}} e^{-\iter^{8d-1}/12} < 1/4,
    \]
    where the final inequality holds for large enough $d$.
\end{proof}

\subsubsection{Lower bounding the VC-dimension of the end-to-end class}

We are now ready to prove the lower bound on $\VC(\cF_{d,\iter}^{\etoe,\iter})$. In light of Lemma~\ref{lem:conentration}, we fix a class $\cF:=\cF_{d,T}$ drawn as in Section~\ref{sec:hard-class}, for which the lemma holds.

\begin{lemma} \label{lem:iterated-lower-bound-fixed-T}
    We have
    \[
    \VC(\cF^{\etoe,\iter}) = \Omega(d \log \iter).
    \]
\end{lemma}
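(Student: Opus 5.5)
The plan is to exhibit an explicit set of $n=\Omega(d\log\iter)$ instances of the form $0^a$. We then show that every labeling of this set is realized by $\cF^{\etoe-\iter}$, tracking the version space through the rule framework and applying Lemma~\ref{lem:conentration} repeatedly (we assume $\iter\geq \max\{d^2,\iter_0\}$, as that lemma needs).

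\textbf{Choice of points.} For $j\in[\iter-1]$ let $x^{(j)}:=0^{(j-1)\iter+1}$. The autoregressive generation of any $f\in\cF$ on $x^{(j)}$ queries the points $q_{j,1},\dots,q_{j,\iter}$, where $q_{j,\ell}=0^{(j-1)\iter+\ell}$, as long as only zeros have been appended. The first $\iter-1$ of these lie in $Z_0$ (majority label $0$) and the last, $q_{j,\iter}=0^{j\iter}$, lies in $Z_1$ (majority label $1$). Once a $1$ is appended the string leaves $Z$, so only zeros follow. Hence
\[
f^{\etoe-\iter}(x^{(j)})=1 \iff f\in\cF(R^1_j),\qquad R^1_j:=(q_{j,1},0)\land\dots\land(q_{j,\iter-1},0)\land(q_{j,\iter},1).
\]
The complementary event $f^{\etoe-\iter}(x^{(j)})=0$ is the disjunction $R^0_j$ of the ``red'' branches $(q_{j,1},0)\land\dots\land(q_{j,k-1},0)\land(q_{j,k},1)$ for $k<\iter$, together with the all-zero branch. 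The query sets $\{q_{j,\ell}\}_\ell$ for distinct $j$ are disjoint, so any rule built from the other points is independent of every $q_{j,\ell}$. Take $S=\{x^{(1)},\dots,x^{(n)}\}$ with $n=\lfloor c\,d\log\iter\rfloor$ for a small constant $c$.

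\textbf{Growing the version space.} Fix a labeling $y\in\{0,1\}^n$. Build the rule $R_y$ step by step: process $j=1,\dots,n$, and inside each $j$ process the queries $q_{j,1},q_{j,2},\dots$ one edge at a time, yielding a disjunction of branches. Every intermediate rule is an $\land/\lor$ combination of the edges of at most $n$ generation trees of depth $\iter$. Hence the family $\cR$ of all intermediate rules, over all labelings and all prefixes, has size at most about $2^n\cdot(\iter+1)^{n}\cdot 2^{n\iter}\ll e^{\iter^{6d}}$, so Lemma~\ref{lem:conentration} applies to all of them simultaneously.

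\textbf{The constant-fraction estimate.} As long as the current rule $R$ has $|\cF(R)|\ge\iter^{8d}$, each majority edge keeps at least a $(1-\tfrac{3}{2\iter})$ fraction and each minority edge keeps a fraction in $[\tfrac1{2\iter},\tfrac3{2\iter}]$. Both claims follow from Lemma~\ref{lem:conentration}, since majority counts are complements of minority counts.
\begin{itemize}
\item For label $1$, the surviving fraction is at least $(1-\tfrac{3}{2\iter})^{\iter}\cdot\tfrac12\ge c_1$ for a universal constant $c_1>0$. The final factor uses the majority edge $1$ at $q_{j,\iter}\in Z_1$.
\item For label $0$, the red branches are disjoint. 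The $k$-th red branch keeps at least $(1-\tfrac{3}{2\iter})^{k-1}\tfrac1{2\iter}\ge \tfrac{e^{-2}}{2\iter}$ of the current class. Summing over $k<\iter$ gives a fraction at least $c_0=\Omega(1)$.
\end{itemize}
The threshold condition $|\cF(\cdot)|\ge\iter^{8d}$ must be checked at every intermediate node, not only after each complete point. This holds because every intermediate rule loses at most a factor $2\iter$ relative to the rule at the start of the current point.

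\textbf{Conclusion.} After all $n$ points, $|\cF(R_y)|\ge N c^{\,n}_{\min}/(2\iter)=\iter^{10d}c_{\min}^{\,n}/(2\iter)$, where $c_{\min}=\min\{c_0,c_1\}$. This stays above $\iter^{8d}\ge1$ provided $n\le d\log\iter/\log(1/c_{\min})$. Therefore every labeling of $S$ is realized, and $\VC(\cF^{\etoe,\iter})\ge n=\Omega(d\log\iter)$.

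The main obstacle is the label-$0$ step. There the rule is a disjunction, and Lemma~\ref{lem:conentration} must be invoked branch by branch down each red path and then summed. To do this cleanly, I would state an auxiliary claim: for any rule $R$ independent of the queries of $x^{(j)}$ with $|\cF(R)|\ge 2\iter\cdot\iter^{8d}$, both $|\cF(R\land R^1_j)|$ and $|\cF(R\land R^0_j)|$ are at least $c_{\min}|\cF(R)|$. The proof is an induction on the depth $k$ inside the generation tree, and the main theorem follows by iterating this claim over $j$.
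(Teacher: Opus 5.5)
Your proposal is correct and takes essentially the same route as the paper. It uses the same points $x^{(j)}=0^{(j-1)\iter+1}$, the same green/red branch decomposition of each generation tree, and the same edge-by-edge application of Lemma~\ref{lem:conentration} over a predetermined family of rules, showing that each label keeps a constant fraction of the version space, iterated over $\Theta(d\log\iter)$ points.

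One small slip: within a single point, an intermediate rule can lose a factor of about $2e^{3/2}\iter$, not $2\iter$, because the majority edges also cost a constant overall. This is harmlessly absorbed by the $\iter^{d}$ slack between $\iter^{9d}$ and $\iter^{8d}$.
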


Let $m := \floor*{\frac{d \log \iter}{600}}$. For each $j \in [m]$, let $x^{(j)} := 0^{(j-1)\iter +1}$, and denote $S: = \{x^{(j)} : j \in [m]\}$. We will prove that $S$ is shattered by $\cF^{\etoe,\iter}$.

The main tool used to prove Lemma~\ref{lem:iterated-lower-bound-fixed-T} is the concentration lemma proved above. To use it, we need to define set $\cR$ of rules determined before drawing $\cF_{d,\iter}$. We will need two different types of rules, one to control the version space, and one to control any possible specific autoregressive trajectory. Let us start with the first type. 
For each $j \in [m]$, consider the generation tree of depth $\iter$ rooted at $x^{(j)}$. In this tree, define
\begin{itemize}
    \item The \emph{green branch}, given by the sequence $g^{(j)} := 0^{T-1} 1$ of edge labels.
    \item The \emph{red branches}, given by the sequences 
    \[
    r^{(j)}_q := 0^{q-1} 1 0^{\iter-q}, \quad q \in [\iter - 1]. 
    \]
\end{itemize}

See Figure~\ref{fig:hard-class} for an illustration of the green and red branches, as well as an intuitive explanation for how they can be used to yield the lower bound on $\VC(\cF_{d,\iter}^{\etoe,\iter})$.

Now, let
\[
G^{(j)} := \{g^{(j)}\}, \quad R^{(j)} := \{r^{(j)}_q: q \in [\iter-1]\}
\]
be the sets of green and red branches, respectively. We now define for every possible labeling $u:= (u_1, \ldots, u_t) \in \{0,1\}^t$ of $(x^{(1)}, \ldots, x^{(t)})$ where $t \leq m$, a rule $Q_{u}$ as follows:
\[
Q_u := \bigwedge_{j \in [t]: u_j =1} R(G^{(j)}) \land \bigwedge_{j \in [t]: u_j = 0} R(R^{(j)}).
\]

That is, $Q_{u}$ requires to satisfy the rule induced by the green branch $g^{(j)}$ for $x^{(j)}$ that are labeled by $1$, and to satisfy the rule induced by the red branches $R^{(j)}$ for $x^{(j)}$ that are labeled by $0$. This makes sense, as one can note that
\begin{itemize}
    \item $f \models R(G^{(j)}) \implies f^{\etoe-\iter}(x^{(j)}) = 1$.
    \item $f \models R(\{r^{(j)}_q\}) \implies f^{\etoe-\iter}(x^{(j)}) = 0$, for any $q \in [\iter-1]$.
\end{itemize}

Now, let
\[
\cR_{ver} := \{Q_u : u \in \{0,1\}^t, t \in [m]\}.
\]

We now turn to the second type. For two instances $x,y \in \{0,1\}^\star$ and $t \leq |y|$, we define the appropriate rule for the $t$-prefix path that begins at $x$ and follows the trajectory given by the first $t$ bits of $y$ as
\[
P_t(x,y) := \bigwedge_{s=1}^t (x b_1 \ldots b_{s-1}, b_s).
\]
In the case where we consider the entire trajectory of $y$, that is $t = |y|$, we abbreviate $P(x, y) := P_{|y|}(x,y)$. So, $f \models P_{t}(x,y)$ means that when starting from the instance $x$, the first $t$ bits autoregressivly generated by $f$ are precisely $y_1, \ldots, y_t$. Now we can also define a rule that captures functions from a specific version space given by a labeling $u \in \{0,1\}^{< m}$, that also follow a specific trajectory of a green or red branch starting at the next instance $x^{(|u|+1)}$ . Formally, let:
\[
\mathcal{\cR}_{bran} := \{Q_u \land P_t(x^{(|u|+1), b}): u \in \{0,1\}^{< m}, t \in [T], b \in G^{(|u|+1)} \cup R^{(|u|+1)}\}.
\]

Now, let
\[
\cR := \cR_{ver} \cup \cR_{bran}
\]

To use Lemma~\ref{lem:conentration}, we need to show that $\cR$ is not too large.

\begin{lemma}
    We have
    \[
    |\cR| \leq \iter^{2d}
    \]
\end{lemma}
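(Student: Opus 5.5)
This is a direct count: I will bound $|\cR_{ver}|$ and $|\cR_{bran}|$ separately and add the two bounds, using only $m = \floor*{\frac{d\log \iter}{600}}$ and the fact that $\iter$ is large (at least $d^2$, with $d$ larger than a universal constant).

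\textbf{Counting $\cR_{ver}$.} The rules in $\cR_{ver}$ are indexed by labelings $u \in \{0,1\}^t$ with $t \in [m]$. Hence
\[
|\cR_{ver}| \leq \sum_{t=1}^m 2^t \leq 2^{m+1}.
\]
Since $2^{m} \leq 2^{d\log \iter/600} = \iter^{d/600}$ (with $\log$ base $2$; for natural logarithm the exponent only changes by a constant factor), we get $|\cR_{ver}| \leq 2\iter^{d/600}$.

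\textbf{Counting $\cR_{bran}$.} A rule in $\cR_{bran}$ is determined by three choices:
\begin{enumerate}
\item a prefix labeling $u \in \{0,1\}^{<m}$, giving at most $2^m \leq \iter^{d/600}$ options;
\item a length $t \in [\iter]$, giving $\iter$ options;
\item a branch $b \in G^{(|u|+1)} \cup R^{(|u|+1)}$, giving $1 + (\iter-1) = \iter$ options.
\end{enumerate}
Therefore $|\cR_{bran}| \leq \iter^{d/600}\cdot \iter^2$.

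\textbf{Combining.} Adding the two bounds,
\[
|\cR| \leq 2\iter^{d/600} + \iter^{2+d/600} \leq 2\iter^{2+d/600}.
\]
This is at most $\iter^{2d}$ once $d \geq 2$ and $\iter \geq 2$, because then $2d - 2 - d/600 \geq 1$ and $\iter \geq 2$ absorbs the factor $2$. Both conditions hold by the standing assumptions.

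There is no genuine obstacle. The only points needing care are to count rules by their index tuples, since distinct indices may yield the same rule and so indexing can only overcount, and to keep track of the base of the logarithm in $m$. Either way the bound $\iter^{2d}$ is far below the requirement $|\cR| \leq e^{\iter^{6d}}$ in Lemma~\ref{lem:conentration}, so any slack in the constants is harmless.
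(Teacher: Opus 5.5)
Your proof is correct and is essentially the paper's argument. Both bound $|\cR_{ver}|$ and $|\cR_{bran}|$ separately by counting the index tuples $u$ and $(u,t,b)$, using $|G^{(j)}\cup R^{(j)}|\le \iter$; your constants are slightly tighter ($2^m\le \iter^{d/600}$, whereas the paper uses the cruder $m2^m(\iter^2+1)\le d\log\iter\cdot\iter^d(\iter^2+1)$), but the final absorption into $\iter^{2d}$ is the same.
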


\begin{proof}
    We have
    \[
    |\cR_{ver}| \leq m 2^m,
    \]
    since there are at most $m 2^m$ options for the choice of $u$.
    directly from the definition. Note that for all $ j\in [m]$, $|G^{(j)} \cup R^{(j)}| \leq \iter$. Thus
    \[
    |\cR_{bran} | \leq m 2^m \cdot \iter\cdot \iter
    \]
    since there are at most $m 2^m$ options for the choice of $u$, at most $T$ options for the choice of $t$, and at most $\iter$ options for the choice of $b$.
    So, over all:
    \[
    |\cR| \leq |\cR_{ver}| + |\cR_{bran} | \leq m 2^m (\iter^2 + 1) \leq d \log \iter \cdot \iter^d (\iter^2 + 1) \leq \iter^{2d},
    \]
    which concludes the proof.
\end{proof}

We now show that when committing to a label of another instance, the version space does not decrease by too much.

\begin{lemma} \label{lem:lit-lower-bound-inductive-step}
    Let $u \in \{0,1\}^t$ where $t < m$, and suppose that $|\cF(Q_u)| \geq \frac{1}{e^{300|u|}} N $. Then for any $r \in \{0,1\}$:
    \[
    |\cF(Q_{ur})| \geq \frac{1}{e^{300}} |\cF(Q_u)|.
    \]
\end{lemma}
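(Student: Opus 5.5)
Let $j := t+1$ and $x := x^{(j)} = 0^{t\iter+1}$. The plan is to apply Lemma~\ref{lem:conentration} repeatedly along the green branch and along each red branch of the generation tree $\tree_\iter(x)$. Every rule used will be of the form $Q_u \land P_s(x,b)$ with $b \in G^{(j)} \cup R^{(j)}$, so it lies in $\cR$, and the size bound on $\cR$ makes the lemma applicable.

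Two preliminary facts are needed before the walk.

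\begin{enumerate}
\item \emph{Independence.} The instances queried along any branch of $\tree_\iter(x^{(j)})$ have the form $0^{t\iter+1}w$ with $|w| < \iter$. Such an instance either contains a $1$ (so it lies outside $Z$ and carries no randomness), or it is $0^{t\iter+1+s}$ with $s<\iter$. The zero-instances queried by $Q_u$ have lengths in $[1,t\iter]$, because they come from generation trees of $x^{(i)}$ with $i\le t$. Hence $Q_u \land P_{s}(x,b)$ is independent of the next instance $0^{t\iter+1+s}$.
\item \emph{Majority/minority labels along the branches.} The only length in $\{t\iter+1,\dots,(t+1)\iter\}$ that is a multiple of $\iter$ is $(t+1)\iter$, so among the instances on the all-zero path only the last one, $0^{(t+1)\iter}$, lies in $Z_1$. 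It is reached after $\iter-1$ zero steps. Consequently the green branch $0^{\iter-1}1$ takes the majority label at every one of its $\iter$ steps. The red branch $r_q$ takes majority labels $0$ for $q-1$ steps and a minority $1$ at step $q$; after that, every queried instance contains a $1$, so the remaining zeros are forced deterministically.
\end{enumerate}

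\textbf{Green case ($r=1$).} Walk down the green branch one step at a time. At each step Lemma~\ref{lem:conentration} says the minority extension keeps at most a $\frac{3}{2\iter}$ fraction of the current rule, so the majority extension keeps at least $(1-\frac{3}{2\iter})$ of it. After $\iter$ steps,
\[
|\cF(Q_{u1})| \ge \mleft(1-\tfrac{3}{2\iter}\mright)^{\iter}|\cF(Q_u)| \ge e^{-3}|\cF(Q_u)|
\]
for large $\iter$. The lemma's hypothesis $|\cF(R)|\ge \iter^{8d}$ must hold at every intermediate step. This follows because $|\cF(Q_u)| \ge e^{-300t}N \ge e^{-300m}\iter^{10d}$, and $m\le d\log\iter/600$ gives $e^{300m}\le \iter^{d}$ (taking $\log$ natural, or adjusting constants otherwise). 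So every intermediate rule has size at least $e^{-3}\iter^{9d}\ge \iter^{8d}$.

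\textbf{Red case ($r=0$).} The sets $\cF(Q_u\land P(x,r_q))$ for different $q$ are disjoint, since the first $1$ appears at different positions. The same version-space argument therefore works branch by branch. Along $r_q$ there are $q-1$ majority steps, each retaining a factor at least $(1-\frac{3}{2\iter})$, followed by one minority step retaining at least $\frac{1}{2\iter}$, and then deterministic steps. Hence
\[
|\cF(Q_u\land P(x,r_q))| \ge \tfrac{1}{2\iter}e^{-3}|\cF(Q_u)|.
\]
Summing over $q\in[\iter-1]$ gives at least $\frac{\iter-1}{2\iter}e^{-3}|\cF(Q_u)| \ge e^{-5}|\cF(Q_u)|$, which is comfortably at least $e^{-300}|\cF(Q_u)|$.

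The main obstacle is the bookkeeping: checking independence and the size threshold at every intermediate step, and confirming that $Z_1$ is hit exactly at the final green step. Once those are in place, the two product estimates above give the result.
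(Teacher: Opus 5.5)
Your proposal is correct and follows the paper's argument. You walk the all-majority green branch and the single-minority red branches, apply the concentration lemma at each step to rules in $\cR_{bran}$, use disjointness of the red-branch version spaces, and sum over $q\in[\iter-1]$. The only differences are that you use the lemma's actual constants $\frac{1}{2\iter}$ and $\frac{3}{2\iter}$ (giving $e^{-3}$ and $e^{-5}$ instead of the paper's looser $e^{-200}$ and $e^{-300}$), and that you explicitly check independence and which queried instances lie in $Z_0$ versus $Z_1$, which the paper leaves implicit.
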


\begin{proof}
    To use Lemma~\ref{lem:conentration}, we have to show that any rule $R$ we use, it holds that $|\cF(R)| \geq \iter^{8d}$. We will use rules of the form
    \begin{equation} \label{eq:inductive-step-rules}
        Q_u \land P_t(x^{|u|+1}, b),
    \end{equation}
    which are indeed in $\cR_{bran} \subset \cR$.
    where $t \in [\iter]$, and $b \in G^{(|u|+1)} \cup R^{(|u|+1)}$. By assumption,
    \[
    |\cF(Q_u)| \geq \frac{1}{e^{300|u|}} N \geq \frac{1}{\iter^d} \iter^{10 d} = \iter^{9d},
    \]
    so the bound of Lemma~\ref{lem:conentration} holds for $Q_u$. Now, from monotonicity of the inclusion relation, to prove that for each possible $P_t(x^{|u|+1}, b)$ we have $\cF(Q_u \land P_t(x^{|u|+1}, b)) > \iter^{8d}$, it suffices to show that it holds for the worst-case $P_t(x^{|u|+1}, b)$. In the worst case, $b$ has a single minority edge and $T-1$ majority edges. Thus, using $(1-100/\iter)^{\iter} \geq e^{-200}$ for large enough $\iter$, for any rule of the form of \eqref{eq:inductive-step-rules} we have
    \[
    \cF(Q_u \land P_t(x^{|u|+1}, b))
    \geq
    \frac{1}{100 \iter}(1-100/ \iter)^{\iter} |\cF(Q_u)|
    \geq
    \frac{1}{100 \iter} e^{-200} \iter^{9d}
    \geq
    \iter^{9d-2},
    \]
    where the final inequality holds for large enough $\iter$.
    We can now prove the statement of the lemma. Let us start with the case $r=1$. Recall that any function $f$ realizing the green branch $g^{(|u|+1)}$ satisfies $f^{\etoe-\iter}(x^{(|u|+1)}) = 1$. So, it suffices to lower bound the number of such functions. Since the green branch has only majority edges, we obtain:
    \[
    |\cF(Q_{u1})| \geq (1-100/\iter)^{\iter} | \cF(Q_u)|
    \geq
    e^{-200} | \cF(Q_u)|.
    \]
    Likewise, any function $f$ realizing any of the red branches $r_q^{(|u|+1)}$ for some $q\in [\iter-1]$ satisfies $f^{\etoe-\iter}(x^{(|u|+1)}) = 0$.  So, it suffices to lower bound the number of such functions. Any red branch has a single minority edge, and there are $\iter-1$ red branches. Furthermore, a function can realize at most one branch, so the set of functions realizing different red branches are disjoint.  Thus overall:
    \[
    |\cF(Q_{u0})|
    \geq
    (\iter -1 ) \frac{1}{100 \iter} (1-100/\iter)^{\iter} | \cF(Q_u)|
    \geq
    e^{-300} | \cF(Q_u)|,
    \]
    where the final inequality holds for large enough $\iter$.
\end{proof}

We are now ready to prove Lemma~\ref{lem:iterated-lower-bound-fixed-T}.
\begin{proof}[Proof of Lemma~\ref{lem:iterated-lower-bound-fixed-T}]
    We prove the lemma by showing that $|\cF_(Q_u)| > 0$ for all $u$, which means that any labeling of $S$ is realized by some function in $\cF$. This establishes that $\VC(\cF_{d,\iter})^{\etoe,\iter} \geq m = \Omega(d \log \iter)$.
    The claim follows almost directly from Lemma~\ref{lem:lit-lower-bound-inductive-step} by proving the following stronger claim:
    \begin{equation}\label{eq:lower-bound-stronger-claim}
        |\cF(Q_u)| \geq \frac{1}{e^{300|u|}} N
    \end{equation}
    for all $u \in \{0,1\}^{\leq m}$. Indeed, having \eqref{eq:lower-bound-stronger-claim} we are done:
    \[
    \frac{1}{e^{300|u|}} N \geq \frac{1}{e^{300 \frac{d \log \iter}{600}}} \iter^{10d} \geq \iter^{9d}.
    \]
    So, it remains to prove \eqref{eq:lower-bound-stronger-claim}. We prove it by induction on $|u|$. To handle the base case $|u|=0$, let us use the notation $\cF(Q_0) = \cF$. That is $Q_0$ means just $\cF$ with no further restrictions.
    So indeed:
    \[
    |\cF(Q_u)| = |\cF| = N = \frac{1}{e^{300\cdot 0}} N = \frac{1}{e^{300\cdot |u|}} N,
    \]
    so the base case holds. For the induction step, Let $u' = ur$ where $|u'| \geq 0$ and $r \in \{0,1\}$. By induction hypothesis, we may assume that $|\cF(Q_u)| \geq \frac{1}{e^{300|u|}} N$  So:
    \[
    |\cF(Q_{u'})| = |\cF(Q_{ur})| \geq \frac{1}{e^{300}} \frac{1}{e^{300|u|}} N = \frac{1}{e^{300|u'|}} N
    \]
    where the inequality is by the induction hypothesis and Lemma~\ref{lem:lit-lower-bound-inductive-step}. This concludes the proof.
\end{proof}

Recall that $\cF:= \cF(d,\iter)$. That is, so far we assumed that the number of autoregressive iterations $\iter$ is fixed. The only remaining step is to prove that we can construct from $\cF(d,\iter)$ a single class that uniformly handles all $\iter$ values, without significantly increasing it's Littlestone dimension. We prove this final step below.

\subsubsection{A gluing lemma}
We prove a \emph{gluing lemma} that glues together the classes $\{\cF_{d,\iter}\}_{\iter \in \mathbb{N}_+}$ without increasing the Littlestone dimension by more than $1$. Our lemma is similar to the gluing lemma of \cite{chase2024deterministic}.

\begin{lemma} \label{lem:glue}
    Let $c > 0$ be a universal constant, and fix $d\in \mathbb{N}$ larger than some universal constant. Let $f_d:\mathbb{N} \to \mathbb{N}$ be a function. Suppose that for all large enough $\iter$ there exists a class $\cF_{d,\iter}$ such that:
    \begin{enumerate}
        \item $\LD(\cF_{d,\iter}) \leq c \cdot d$.
        \item $\VC(\cF_{d, \iter}^{\etoe-\iter}) \geq f_d(\iter)$
        \item Let
        \[
        X_\iter := \{x \in \{0,1\}^\star: \exists f \in \cF_{d,\iter}, f(x) = 1\}.
        \]
        Then $|X_\iter| \leq \iter^3$.
    \end{enumerate}
    Then, there exists a class $\hat{\cF}$ such that $\LD(\hat{\cF}) \leq c \cdot d + 1$ and for all large enough $\iter$ it holds that $\VC(\cF^{\etoe-\iter}) \geq f_d(\iter)$.
\end{lemma}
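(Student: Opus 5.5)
The plan is to glue the classes $\{\cF_{d,\iter}\}_\iter$ on disjoint ``copies'' of the domain. Each copy is tagged by a prefix that encodes $\iter$. Because autoregressive generation only \emph{appends} symbols, a prefix tag survives every generation step. The $\iter$-th copy therefore reproduces exactly the $\etoe$-$\iter$ behaviour of $\cF_{d,\iter}$, and the copies do not interact.

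\textbf{Construction.} Let $\iter_1$ be such that the hypotheses hold for all $\iter \geq \iter_1$. For each such $\iter$ set $p_\iter := 1^\iter 0$. The family $\{p_\iter\}$ is prefix-free, so a string $w$ has at most one decomposition $w = p_\iter x$. For $f \in \cF_{d,\iter}$ define $\hat f(w) := f(x)$ if $w = p_\iter x$, and $\hat f(w) := 0$ otherwise. Put $\hat\cF_\iter := \{\hat f : f\in \cF_{d,\iter}\}$ and $\hat\cF := \bigcup_{\iter\geq \iter_1} \hat\cF_\iter$.

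\textbf{VC lower bound.} Fix $\iter\geq\iter_1$ and $f\in\cF_{d,\iter}$. By induction on the generation step, every intermediate string in the generation of $\hat f$ from $p_\iter x$ has the form $p_\iter x u$, and $\hat f(p_\iter x u) = f(xu)$. Hence $\hat f^{\ct-\iter}(p_\iter x) = f^{\ct-\iter}(x)$, and in particular the $\etoe$ outputs agree. So if $S$ is shattered by $\cF_{d,\iter}^{\etoe-\iter}$, then $p_\iter S$ is shattered by $\hat\cF_\iter^{\etoe-\iter}\subseteq \hat\cF^{\etoe-\iter}$. This gives $\VC(\hat\cF^{\etoe-\iter})\geq f_d(\iter)$.

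\textbf{Littlestone upper bound.} The plan is to exhibit a learner with mistake bound $c\cdot d+1$ and invoke Theorem~\ref{thm:online-characterization}. The learner predicts $0$ until its first mistake. A mistake at $w$ means the target $\hat f$ has $\hat f(w)=1$. That forces $w=p_\iter x$ for a unique $\iter$, readable from $w$, and the target lies in $\hat\cF_\iter$, since functions of any other copy vanish on $w$. From then on the learner runs the SOA for $\hat\cF_\iter$ restricted to the consistent version space. Its mistake bound is at most $\LD(\hat\cF_\iter)$. Because the map $f\mapsto\hat f$ only relabels the domain and pads with a constant, $\LD(\hat\cF_\iter)=\LD(\cF_{d,\iter})\leq c\cdot d$. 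The total is at most $c\cdot d+1$.

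The only delicate point is checking that tagging commutes with autoregression. This requires the tag to be a prefix rather than a suffix: suffix tags would be destroyed by appended bits, and for the linear-style classes only the last bits matter. Prefix-freeness of the tags is what makes the identification of $\iter$ after the first mistake unambiguous. Hypothesis 3 on $|X_\iter|$ does not seem needed in this argument. If the paper's version requires it, it is presumably for keeping the construction explicit or countable, and I would not rely on it.
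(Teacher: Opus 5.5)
Your proof is correct, but it tags the copies differently from the paper. The paper shifts $\cF_{d,\iter}$ by the unary padding $0^{2^{\iter^4}}$. These tags are nested rather than prefix-free, so one string can decompose as $0^{2^{\iter^4}}x$ for several values of $\iter$. The paper therefore needs hypothesis 3 to separate the sets $\hat X_\iter$ of positively labeled instances by length: every string in $\hat X_{\iter_2}$ has length at most $2^{\iter_2^4}+\iter_2^3<2^{\iter_1^4}$. Only after that can its learner read off the copy from the first mistake, as yours does.

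Your prefix-free tags $1^\iter 0$ make the entire tagged domains $\{p_\iter x\}$ pairwise disjoint. Disjointness of the supports is then automatic, and hypothesis 3 is genuinely unnecessary, so you prove a strictly more general statement. Your guess about why the paper uses hypothesis 3 is off: it is needed only because of the non-prefix-free padding, not for explicitness or countability.

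Your route also sidesteps a small slip in the paper. Its length argument actually uses a bound on the \emph{lengths} of strings in $X_\iter$, not the stated bound $|X_\iter|\le\iter^3$ on their number. This is harmless in the application, since there $X_\iter\subseteq\{0^i: i\in[\iter^2]\}$. The paper's padding keeps everything on unary strings, matching the hard class, but the lemma does not need this.

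Finally, the fact that a prefix tag survives autoregressive appending, so that $\hat f^{\ct-\iter}(p_\iter x)=f^{\ct-\iter}(x)$, is used only implicitly in the paper's VC lower bound. You verify it explicitly.
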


\begin{proof}
    For every $\iter \geq 1$ and for every $f \in \cF_{d,\iter}$, define the shifted function $\hat{f}$ of $f$ as
    \[
    \hat{f}(x')
    =
    \begin{cases}
        f(x) & \exists x \in \{0,1\}^\star: x' = 0^{2^{\iter^4}} x, \\
        0    & \text{otherwise,}
    \end{cases}
    \]
    for all $x' \in \{0,1\}^\star$. Now, let
    \[
    \hat{\cF}_{d, \iter} := \{\hat{f}: f \in \cF_{d,\iter}\}.
    \]
    Denote the analog of $X_{\iter}$ for $\hat{\cF}_{d, \iter}$ as
    \[
    \hat{X}_\iter :=  \{x \in \{0,1\}^\star: \exists \hat{f} \in \hat{\cF}_{d,\iter}, \hat{f}(x) = 1\}.
    \]
    We show that $\iter_1 > \iter_2 \implies \hat{X}_{\iter_1} \cap \hat{X}_{\iter_2} = \emptyset$.
    Indeed, the length of every instance in $\hat{X}_{\iter_2}$ is at most $2^{\iter^4_2} + \iter_2^3$. However, the length of every instance in $\hat{X}_{\iter_1}$ is at least
    \[
    2^{\iter^4_1} \geq 2^{(\iter_2 + 1)^4} \geq  2^{\iter_2^4} 2^{\iter_2^3} > 2^{\iter^4_2} + \iter_2^3,
    \]
    and so indeed $\hat{X}_{\iter_1} \cap \hat{X}_{\iter_2} = \emptyset$.
    Now, define
    \[
    \hat{\cF} := \bigcup_{\iter} \hat{\cF}_{d, \iter}.
    \]
    where the union is taken over all large enough $\iter$.
    First, since $\hat{\cF}$ contains $\hat{\cF}_{d, \iter}$ for all large enough $\iter$, it is clear that for all large enough $\iter$:
    \[
    \VC(\hat{\cF}^{\etoe-\iter}) \geq \VC(\hat{\cF}_{d, \iter}^{\etoe-\iter}) \geq f_d(\iter),
    \]
    as desired. It remains to prove that $\LD(\hat{\cF}) \leq c\cdot d + 1$. To see why this is true, consider an online learner for $\hat{\cF}$ that always predicts $0$, until it makes it first mistake. Once it makes a mistake, it can infer that the target function gives $1$ to the instance it erred for. Since $\hat{X}_{\iter_1} \cap \hat{X}_{\iter_2} = \emptyset$ for any two distinct $\iter_1,\iter_2$, this reveals the identity of the class $\hat{\cF}_{d,\iter}$ from which the target function is taken. Now, the learner learns $\hat{\cF}_{d,\iter}$, on which it will make at most $c \cdot d$ many mistakes by assumption. This concludes the proof.
\end{proof}

We may now prove the main theorem.
\begin{proof}[Proof of Theorem~\ref{thm:lit-logarithmic-lower-bound}]
    The theorem follows from combining Lemma~\ref{lem:iterated-lower-bound-fixed-T} and Lemma~\ref{lem:glue}.
\end{proof}

\section{Agnostic taxonomy} \label{sec:agnostic}

Consider the following known result.

\begin{theorem}[\cite{bendavid2009agnostic ,alon2021adversarial}] \label{thm:agnostic-optimal-regret}
    Let $\cF$ be a binary function class.\footnote{The upper bound of \cite{alon2021adversarial} requires that $\cF$ satisfies some mild measurability constraints, which hold for all classes relevant to the scope of this paper, since our domain $\cX = \{0,1\}^\star$ is countable.} Then the optimal regret bound for agnostic online learning of $\cF$ is
    \[
    \Theta \mleft(\sqrt{\LD(\cF) \rounds} \mright).
    \]
\end{theorem}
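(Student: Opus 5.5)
The plan has two halves: a lower bound $\Omega(\sqrt{\LD(\cF)\rounds})$ through a randomized adversary on a shattered Littlestone tree, and an upper bound that first follows the experts-over-SOA approach of \cite{bendavid2009agnostic} and then removes the resulting logarithmic factor as in \cite{alon2021adversarial}. Throughout, $d:=\LD(\cF)$, and I assume $\rounds \geq d$ (otherwise the claim is trivial up to constants).

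\textbf{Lower bound.} Fix a perfect Littlestone tree $\tree$ of depth $d$ shattered by $\cF$, and split the horizon into $d$ blocks of length $\rounds/d$. In block $i$, the adversary repeatedly presents the instance $x_v$ at the current node $v$ of $\tree$, with labels drawn i.i.d.\ uniformly from $\{0,1\}$. At the end of the block it descends to the child of $v$ indexed by the majority label of that block.
\begin{itemize}
\item Any learner has expected loss exactly $\rounds/2$, since every label is a fair coin independent of the past.
\item The branch actually followed is realized by some $f\in\cF$, because $\tree$ is shattered. On each block, $f$ errs only on the minority labels, i.e.\ in expectation $\rounds/(2d)-\Theta(\sqrt{\rounds/d})$ times by the standard binomial deviation estimate.
\item Summing over the $d$ blocks gives expected regret $\Omega(d\sqrt{\rounds/d})=\Omega(\sqrt{d\rounds})$, so some deterministic labeling sequence attains it.
\end{itemize}
For $d=1$ this reduces to the classical two-experts lower bound.

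\textbf{Upper bound, first step.} This gives $O(\sqrt{d\rounds\log \rounds})$ by the reduction of \cite{bendavid2009agnostic}.
\begin{itemize}
\item For every set $I\subseteq[\rounds]$ with $|I|\leq d$, define an expert $E_I$. It simulates the SOA of Theorem~\ref{thm:online-characterization} on its own history, and on rounds in $I$ it flips the SOA's prediction and feeds the flipped label back into the simulation.
\item For every $f\in\cF$, some expert $E_I$ reproduces $f$'s predictions on the entire sequence. Take $I$ to be the set of rounds at which the SOA run on the labels of $f$ would err; the SOA mistake bound gives $|I|\leq d$.
\item Run Hedge over these $\binom{\rounds}{\leq d}\leq \rounds^{d}$ experts. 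Its regret is $O(\sqrt{\rounds\log \rounds^{d}})=O(\sqrt{d\rounds\log\rounds})$ against the best expert, and hence against the best $f\in\cF$.
\end{itemize}

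\textbf{Upper bound, removing $\sqrt{\log \rounds}$ (main obstacle).} The crude union over all $\rounds^{d}$ experts is what costs the logarithmic factor. To remove it, I would follow \cite{alon2021adversarial} and replace it with a sequential covering / chaining argument.
\begin{itemize}
\item Bound the sequential Rademacher complexity of $\cF$ at every scale through the fat-shattering (Littlestone) structure, and apply Dudley-type chaining over sequential covers. This yields a regret bound of $O(\sqrt{d\rounds})$.
\item I expect the difficulty to be controlling the size of sequential covers at scale $\varepsilon$ by roughly $(1/\varepsilon)^{O(d)}$, independently of $\rounds$. That would be derived from a tree version of the Sauer--Shelah--Perles lemma, with Theorem~\ref{thm:trees-ssp} as the model, applied to a thinned subsequence of rounds.
\item I would also need to handle measurability to make the minimax duality step rigorous. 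As the footnote notes, this is immediate here because $\cX=\{0,1\}^\star$ is countable.
\end{itemize}
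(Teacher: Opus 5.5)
The paper gives no proof of this statement: it imports the lower bound and the $O(\sqrt{\LD(\cF)\rounds\log\rounds})$ upper bound from \cite{bendavid2009agnostic}, and the removal of the logarithm from \cite{alon2021adversarial}. Your lower bound is correct; it uses i.i.d.\ fair labels on a shattered tree and descends along majority labels. Your Hedge-over-SOA-experts upper bound is also correct. Both are exactly the Ben-David--P\'al--Shalev-Shwartz arguments. One small correction, writing $d=\LD(\cF)$ as you do: for $\rounds<d$ the claim is not ``trivial up to constants''. It actually fails on the lower-bound side, since regret never exceeds $\rounds<\sqrt{d\rounds}$. The $\Theta$-statement implicitly assumes $\rounds\geq d$.

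The genuine gap is the last step, which is the entire content of \cite{alon2021adversarial}. Dudley chaining over sequential covers only beats Hedge if the covers at scale $\varepsilon$ have size independent of $\rounds$. The known bounds on ordinary sequential covering numbers grow with the horizon (e.g.\ $(2e\rounds/\varepsilon)^{d}$ from Rakhlin--Sridharan--Tewari), and chaining with them returns exactly $O(\sqrt{d\rounds\log\rounds})$.

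Your proposed remedy, Theorem~\ref{thm:trees-ssp} applied to a thinned subsequence, does not produce such a cover. That lemma is intrinsically depth-dependent, and the thinning must be fixed before the path and the function are known. Consider the natural construction: run the SOA updating only on a random set $S$ of rounds of density $p\approx d/(\varepsilon\rounds)$, and guess which of the roughly $d/\varepsilon$ sampled rounds are SOA mistakes. This gives $(O(1/\varepsilon))^{d}$ experts per $S$. It only shows that for each \emph{fixed} $f$ and \emph{fixed} path, the expected number of disagreeing rounds is at most $d/p=\varepsilon\rounds$. This holds because each such round is sampled with probability $p$ independently of the current SOA state, and sampled disagreements are SOA mistakes.

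So closeness holds for most $S$, not for one $S$ simultaneously for all $f$ and all $2^{\rounds}$ paths. Derandomizing by a union bound needs about $\rounds$ independent copies of $S$, i.e.\ a $\log\rounds$ term in the log-covering number. What you actually get is a \emph{fractional} cover, a distribution over expert sets. The missing idea is a chaining argument that works with fractional covers, as developed in \cite{alon2021adversarial}, to bound the sequential Rademacher complexity by $O(\sqrt{d\rounds})$ before the minimax reduction (which is where the measurability caveat enters). As written, your proposal establishes $\Omega(\sqrt{d\rounds})$ and $O(\sqrt{d\rounds\log\rounds})$, but not the matching $O(\sqrt{d\rounds})$ upper bound.
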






Using these bounds and our taxonomy of possible Littlestone dimension of $\cF^{\etoe-\iter}$, we may infer an essentially complete taxonomy for agnostic learning as well. First, we deduce a general upper bound.

\begin{corollary}\label{cor:e2e-agnostic-upper-bound}
    Let $\cF$ be a base class. Then for all sufficiently large $\iter$, there exists an agnostic online learner for $\cF^{\etoe-\iter}$ with regret bound
    \[
    O \mleft( \sqrt{T \LD(\cF) \log \iter} \mright).
    \]
\end{corollary}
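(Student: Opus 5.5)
The plan is to combine the general end-to-end upper bound of Theorem~\ref{thm:e2e-upper-bound} with the agnostic regret characterization of Theorem~\ref{thm:agnostic-optimal-regret}. Agnostic online learning of $\cF^{\etoe-\iter}$ is agnostic online learning of an ordinary binary class over the countable domain $\{0,1\}^\star$. Theorem~\ref{thm:agnostic-optimal-regret} therefore applies directly, and gives a learner with regret
\[
O\mleft(\sqrt{\LD(\cF^{\etoe-\iter})\,\rounds}\mright).
\]
It remains only to bound $\LD(\cF^{\etoe-\iter})$.

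If $\LD(\cF)=\infty$ the claimed bound is vacuous, so assume $d:=\LD(\cF)<\infty$. If $d=0$, then $\cF$ contains a single function. Hence $\cF^{\etoe-\iter}$ is also a singleton, its Littlestone dimension is $0$, and the regret is $O(1)$-trivially bounded. Such a degenerate case can be absorbed by interpreting the bound with $\max\{d,1\}$, or it is simply immaterial.

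For $d\geq 1$, Theorem~\ref{thm:e2e-upper-bound} gives
\[
\LD(\cF^{\etoe-\iter}) = O\bigl(d\log(\iter d)\bigr) = O\bigl(d\log \iter + d\log d\bigr).
\]
This is where ``sufficiently large $\iter$'' is used. Once $\iter\geq d$, we have $\log d\leq \log\iter$, so $\LD(\cF^{\etoe-\iter})=O(d\log\iter)$. Substituting into the regret bound yields $O(\sqrt{\rounds\, d\log\iter})$, as claimed.

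There is no real obstacle here. The only points to check are these:
\begin{enumerate}
\item The measurability hypothesis of \cite{alon2021adversarial} holds, which it does since the domain is countable (as the footnote notes).
\item The threshold on $\iter$, namely $\iter\geq \max\{2,\LD(\cF)\}$, is what removes the $\log d$ term.
\end{enumerate}
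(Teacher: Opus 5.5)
Your proposal is correct and takes the same route the paper intends: apply Theorem~\ref{thm:agnostic-optimal-regret} to the binary class $\cF^{\etoe-\iter}$, bound $\LD(\cF^{\etoe-\iter})$ via Theorem~\ref{thm:e2e-upper-bound}, and use $\iter \geq \LD(\cF)$ to absorb the $\log \LD(\cF)$ term. In the degenerate case $\LD(\cF)=0$ the regret is exactly $0$, so the bound holds as stated and the $\max\{d,1\}$ adjustment is unnecessary.
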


Below, we show that all sub-logarithmic well-behaved rates between $\sqrt{T}$ and $\sqrt{ T \log \iter}$ are attainable. Note that due to Theorem~\ref{thm:agnostic-optimal-regret}, going below order of $\sqrt{T}$  regret is possible only in the case that the autoregressive process degenerates $\cF^{\etoe-\iter}$ to a single function. Therefore, this essentially establishes a complete taxonomy for agnostic online $\etoe$-learning.

\begin{corollary} \label{cor:e2e-agnostic-taxonomy}
    For any well-behaved sub-logarithmic growth rate $r$ there exists a base class $\cF$ such that $\LD(\cF) = 1$ and the optimal regret bound for agnostic online learning $\cF^{\etoe-\iter}$ is
    \[
    \Theta \mleft (\sqrt{\rounds \cdot r(\iter)} \mright)
    \]
    for all $\iter \geq \iter_0$ and sufficiently large $\rounds$.
\end{corollary}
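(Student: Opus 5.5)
The plan is to take the class $\cF$ given by Theorem~\ref{thm:e2e-taxonomy} for the rate $r$, and transfer its Littlestone-dimension bounds to regret bounds via Theorem~\ref{thm:agnostic-optimal-regret}. By Theorem~\ref{thm:e2e-taxonomy}, $\LD(\cF)=1$, and for all $\iter\geq\iter_0$ we have $\LD(\cF^{\etoe-\iter})=\Theta(r(\iter))$. Agnostic online $\etoe$-learning of $\cF$ is exactly agnostic online learning of the binary class $\cF^{\etoe-\iter}$ over the countable domain $\{0,1\}^\star$. Hence Theorem~\ref{thm:agnostic-optimal-regret} applies directly, with the measurability footnote satisfied. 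It gives optimal regret $\Theta(\sqrt{\LD(\cF^{\etoe-\iter})\,\rounds})=\Theta(\sqrt{\rounds\, r(\iter)})$.

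The one point needing care is the meaning of "for sufficiently large $\rounds$". The lower bound $\Omega(\sqrt{\LD(\cH)\rounds})$ for a class $\cH$ is standard only when $\rounds\geq \LD(\cH)$. When $\rounds$ is smaller, the adversary can force at most $\rounds$ mistakes, and $\rounds<\sqrt{\LD\rounds}$. So I would state the result for $\rounds\geq c\, r(\iter)$, which is implied by $\rounds\geq \log \iter$ since $r(\iter)\le\frac14\log\iter$. In this regime the lower bound is obtained as usual: shatter a tree of depth $\LD(\cF^{\etoe-\iter})\ge r(\iter)$, which exists because Lemma~\ref{lem:e2e-plausible-lower-bound} already gives the VC lower bound. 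Then split the $\rounds$ rounds into $r(\iter)$ blocks and play random labels along the tree within each block. The upper bound needs no restriction on $\rounds$.

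Finally, the constants hidden in $\Theta$ must be uniform in $\iter$. This holds because both the taxonomy constants ($1\le$ ratio $\le 20$) and the constants in Theorem~\ref{thm:agnostic-optimal-regret} are universal. No real obstacle is expected; the only subtlety is the range of $\rounds$ just discussed.
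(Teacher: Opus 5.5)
Your proposal is correct and is essentially the paper's own argument. The paper gives no separate proof; it simply takes the class of Theorem~\ref{thm:e2e-taxonomy} and plugs $\LD(\cF^{\etoe-\iter})=\Theta(r(\iter))$ into Theorem~\ref{thm:agnostic-optimal-regret}. Your remark that the lower bound needs $\rounds \gtrsim r(\iter)$ (which the shattered set of Lemma~\ref{lem:e2e-plausible-lower-bound} supplies) is a valid clarification of what ``sufficiently large $\rounds$'' means.
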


A similar result holds also for logarithmic rates.

\begin{corollary}
    For any $d \in \mathbb{N}_+$, there exists a class $\cF$ such that the optimal regret for agnostic online learning of  $\cF^{\etoe-\iter}$ is
    \[
    \Theta \mleft (\sqrt{T d \log \iter} \mright)
    \]
    for all $\iter \geq \iter_0$ and sufficiently large $\rounds$.
\end{corollary}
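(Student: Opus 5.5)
The plan is to combine Theorem~\ref{thm:agnostic-optimal-regret}, applied to the binary class $\cF^{\etoe-\iter}$, with the two-sided Littlestone bounds already established for the hard class. Take $\cF$ to be the glued class $\hat{\cF}$ produced by Theorem~\ref{thm:lit-logarithmic-lower-bound}, via Lemma~\ref{lem:iterated-lower-bound-fixed-T} and Lemma~\ref{lem:glue}, with parameter $d$. Then $\LD(\cF) = O(d)$, and $\VC(\cF^{\etoe-\iter}) = \Omega(\LD(\cF)\log\iter)$ for all $\iter \geq \iter_0$. The domain $\{0,1\}^\star$ is countable, so the measurability footnote of Theorem~\ref{thm:agnostic-optimal-regret} is satisfied. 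That theorem then says the optimal regret is $\Theta(\sqrt{\LD(\cF^{\etoe-\iter})\,\rounds})$, so it only remains to show $\LD(\cF^{\etoe-\iter}) = \Theta(d\log\iter)$ for all $\iter\ge\iter_0$.

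For the upper bound, apply Theorem~\ref{thm:e2e-upper-bound} (equivalently Corollary~\ref{cor:e2e-agnostic-upper-bound}) to get $\LD(\cF^{\etoe-\iter}) = O(\LD(\cF)\log(\iter\LD(\cF)))$. Since $\LD(\cF)=O(d)$, this is $O(d\log\iter)$ whenever $\iter \geq \LD(\cF)$. We may enforce this by enlarging $\iter_0$ to depend on $d$, which the statement allows because $\cF$ depends on $d$ and ``$\iter\ge\iter_0$'' can be read as ``sufficiently large''.

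For the lower bound, use $\LD(\cF^{\etoe-\iter}) \geq \VC(\cF^{\etoe-\iter}) = \Omega(\LD(\cF)\log\iter)$. To turn this into $\Omega(d\log\iter)$ I also need $\LD(\cF)=\Omega(d)$, which I would not prove directly. Instead I would trace the construction: Lemma~\ref{lem:iterated-lower-bound-fixed-T} already gives $\VC(\cF_{d,\iter}^{\etoe-\iter}) \geq m = \lfloor d\log\iter/600\rfloor$, and the gluing lemma preserves this lower bound as $f_d(\iter)$. This yields $\Omega(d\log\iter)$ directly. The regret bound is then $\Theta(\sqrt{\rounds d\log\iter})$.

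The main obstacle is bookkeeping. Lemma~\ref{lem:glue} has a hypothesis bounding $|X_\iter|\le\iter^3$; it holds because $f_a$ can output $1$ only on $Z$, and $|Z|=\iter^2$. The regime requirements also have to match: $d$ larger than a constant (handle small $d$ by running the construction with a constant $d'\ge d$, which changes the bounds only by constants), and $\iter\geq d^2$ in Lemma~\ref{lem:conentration}, absorbed into the choice of $\iter_0$. Finally, ``sufficiently large $\rounds$'' is needed so the lower bound of Theorem~\ref{thm:agnostic-optimal-regret} applies, i.e.\ $\rounds \gtrsim \LD(\cF^{\etoe-\iter})$.
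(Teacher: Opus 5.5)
Your proposal is correct and matches the argument the paper intends; the corollary is stated there without a separate proof. You combine Theorem~\ref{thm:agnostic-optimal-regret} with $\LD(\cF^{\etoe-\iter}) = \Theta(d\log\iter)$ for the glued hard class, taking the upper bound from Theorem~\ref{thm:e2e-upper-bound} and the lower bound from the $\lfloor d\log\iter/600\rfloor$ shattered set of Lemma~\ref{lem:iterated-lower-bound-fixed-T} carried through Lemma~\ref{lem:glue}. Your extra bookkeeping (letting $\iter_0$ depend on $d$ so that $\iter\ge d^2$ and $\log(\iter\LD(\cF))=O(\log\iter)$, and stating the lower bound in terms of $d$ rather than $\LD(\cF)$) is what makes the paper's one-line deduction rigorous.
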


\section{Learning with chain-of-thought supervision} \label{sec:cot}

In contrast to $\etoe$-learning, we show that similarly to the statistical setting (see \cite{hanneke2026sample}), the optimal mistake bound of learning a Littlestone class is independent of $\iter$.

\begin{theorem}\label{thm:cot-bound-lit}
    For every class $\cF$, and for every $\iter$, we have $\LD(\cF^{\ct-\iter})\leq \LD(\cF)$.
\end{theorem}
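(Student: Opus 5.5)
The plan is to go through the multiclass characterization, Theorem~\ref{thm:online-multiclass-characterization}, and show directly that any Littlestone tree shattered by $\cF^{\ct-\iter}$, with label space $\Sigma^\iter$, can be converted into a Littlestone tree of the same depth shattered by $\cF$. An equivalent and perhaps cleaner route is the SOA-style learner sketched earlier; I would present the tree argument and mention the learner as a corollary.

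Let $\tree$ be a perfect tree of depth $n$ shattered by $\cF^{\ct-\iter}$. Each internal node $v$ carries an instance $x_v$ and two sibling edges carrying distinct labels $y_v^0 \neq y_v^1 \in \Sigma^\iter$. Let $j_v$ be the first coordinate where they differ, and let $p_v := y_v^0[1..j_v-1] = y_v^1[1..j_v-1]$ be the common prefix. I will build a tree $\tree'$ with the same shape. Node $v$ is relabelled by the instance $x_v \circ p_v$, and its two edges are labelled by the distinct symbols $y_v^0[j_v]$ and $y_v^1[j_v]$.

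The key observation is a consequence of how the chain of thought is generated. If $f^{\ct-\iter}(x_v) = y_v^b$, then $f$ appends $y_v^b[1],\dots,y_v^b[j_v-1] = p_v$ in its first $j_v-1$ steps. Its next step evaluates $f(x_v \circ p_v)$, and this equals $y_v^b[j_v]$. Hence any $f$ whose chain-of-thought function agrees with a branch of $\tree$ also has $f$ itself agreeing with the corresponding branch of $\tree'$. Since $\tree$ is shattered by $\cF^{\ct-\iter}$, every branch of $\tree'$ is therefore realized by some $f\in\cF$, so $\tree'$ is shattered by $\cF$. This gives $n \le \LD(\cF)$ in the multiclass sense, and taking the supremum over $n$ yields $\LD(\cF^{\ct-\iter})\le\LD(\cF)$.

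The argument nowhere uses $|\Sigma|=2$, so it works for any alphabet, including infinite ones. The main point of care is only the definitional one. The Littlestone dimension of $\cF^{\ct-\iter}$ is the multiclass one of \cite{daniely2015multiclass}, in which sibling edges carry distinct but otherwise arbitrary labels from $\Sigma^\iter$. The translated tree $\tree'$ must likewise have distinct labels on sibling edges, and this holds by the choice of $j_v$. I do not expect a genuine obstacle: the step above is the whole content, and the rest is bookkeeping about realizability along branches.
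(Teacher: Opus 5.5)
Your proof is correct, and it takes a different route from the paper.

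The paper argues algorithmically. It runs an SOA-style learner that, on instance $x_t$, predicts along the branch $b^\star$ of the generation tree $\tree_\iter(x_t)$ whose restricted version space has maximal Littlestone dimension. It shows by contradiction that at most one branch can keep the full dimension $\LD(V)$: two such branches would give a tree of depth $\LD(V)+1$ rooted at the node where they split. Hence every mistake lowers the dimension of the version space, and the learner makes at most $\LD(\cF)$ mistakes. The bound on $\LD(\cF^{\ct-\iter})$ then follows from the multiclass characterization of \cite{daniely2015multiclass}.

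Your relabelling replaces $x_v$ by $x_v\circ p_v$ and each edge label by the first symbol where the two sibling labels disagree. This is the global, one-shot version of the paper's local splitting step: it maps any tree shattered by $\cF^{\ct-\iter}$ to a tree of the same depth shattered by $\cF$.

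What your route buys:
\begin{itemize}
\item It uses only the definition of the multiclass Littlestone dimension.
\item Despite your opening sentence, it never needs Theorem~\ref{thm:online-multiclass-characterization}.
\item It needs no version-space bookkeeping and no convention for $\LD(\emptyset)$.
\end{itemize}
What the paper's route buys in exchange is an explicit learner attaining the bound.

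One cosmetic point: when $\Sigma=\{0,1\}$ and $y_v^0[j_v]=1$, swap the two subtrees at $v$ so that $\tree'$ follows the left-$0$/right-$1$ convention. This does not affect shattering.
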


\begin{proof}
    We use an SOA-style learner who realizes the stated mistake bound, described below.
    The learner maintains a valid version space $V:=V_t$ of $\cF$, which in the first round is $V_1=\cF$. Let $x:=x_t$ be the instance given by the adversary in round $t$. The learner constructs the generation tree $\tree_{ \iter}(x)$. For a branch $b$, let $V_b$ be the version space $V$ restricted to $b$. That is, those are all functions in $V$ that agree with the branch $b$. We argue that there is at most one branch $b^\star$ so that $\LD(V_{b^\star}) = \LD(V)$. Indeed, suppose that there are two such branches $b_0,b_1$. Let $v$ be the internal node where these branches first split, and let $v_0,v_1$ be the left and right children of $v$, where $b_0$ goes through $v_0$, and $b_1$ goes through $v_1$. So, the version space $V$ restricted to both $b_0$ up to $v_0$ and $b_1$ up to $v_1$ has Littlestone dimension $\LD(V)$. Therefore, we can construct a tree of depth $\LD(V)+1$ which is rooted at $v$ and is shattered by $V$, which is a contradiction. So, we determine the learner's prediction in round $t$ to be the label associated with the edges of the branch
    \[
    b^\star := \argmax_{b \in B(\tree_\iter(x))} \LD(V_b).
    \]
    Now, if the learner errs and the correct label is $b \neq b^\star$, then the learner updates $V_{t+1} = V_b$, and $\LD(V_{t+1}) < \LD(V_t)$. Telescoping over all rounds where the learner errs, at most $\LD(\cF)$ many mistakes are made before $\LD(V_t) = 0$, and then a single consistent function remains, and the learner's predictions are trivially given by this function from this round.
\end{proof}

Note that in contrast with the $\etoe$ setting, here we cannot infer an agnostic regret bound from the realizable mistake bound. Indeed, since the learner of Theorem~\ref{thm:cot-bound-lit} learns the entire $\ct$, every change in the $\ct$ counts as a mistake. Thus, it is possible that for some function in $\cF$ the $\etoe$-loss is extremely good, while for all functions in $\cF$ the $\ct$-loss is very bad. In such case, the learner of Theorem~\ref{thm:cot-bound-lit} is measured with respect to the function with the best $\ct$-loss, its regret might be very bad when measured against the function with the best $\etoe$-loss. It is an interesting direction for future research to prove that this obstacle can or cannot be circumvented by some learner.

Using an online-to-batch conversion (Theorem~\ref{thm:online-to-pac}), Theorem~\ref{thm:cot-bound-lit} immediately allows to derive a PAC sample complexity bound for learning $\cF^{\ct-\iter}$.

\begin{theorem} [Upper bound in terms of $\LD(\cF)$] \label{thm:pac-cot-bound-lit}
    For every base class $\cF \subset \Sigma^{\Sigma^\star}$ (even if $|\Sigma| > 2$, and even for $|\Sigma| = \infty$), and for all $\iter$, if $\LD(\cF) < \infty$ then $\cF^{\ct-\iter}$ is learnable with sample complexity
    \[
    m(\eps, \delta) = O \mleft( \frac{\LD(\cF) + \log (1/\delta)}{\eps} \mright).
    \]
\end{theorem}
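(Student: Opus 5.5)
The plan is to combine Theorem~\ref{thm:cot-bound-lit} with the online-to-PAC conversion of Theorem~\ref{thm:online-to-pac}, applied to the class $\cF^{\ct-\iter}$ viewed as a standard (multiclass) hypothesis class. Here the domain is $\cX = \Sigma^\star$ and the label space is $\cY = \Sigma^\iter$; this is a finite set when $\Sigma$ is finite, and possibly infinite otherwise. The key observation is that PAC-learning $\cF^{\ct-\iter}$ is an ordinary realizable PAC problem in the sense of Definition~\ref{def:pac-learnability}. The target is $f_\star^{\ct-\iter}$ for some $f_\star \in \cF$, and each example is $(x_i, f_\star^{\ct-\iter}(x_i))$, which is exactly the $\ct$-supervised data.

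First, invoke Theorem~\ref{thm:cot-bound-lit}, which gives $\LD(\cF^{\ct-\iter}) \leq \LD(\cF) < \infty$. I must check that its proof does not depend on $|\Sigma| = 2$. The SOA-style learner builds the generation tree $\tree_\iter(x)$, in which each node has $|\Sigma|$ children rather than two. The uniqueness argument is unaffected: if two branches both keep $\LD(V)$, they split at some node $v$ into two distinct children. Those two edges carry distinct labels, so by the multiclass definition of \cite{daniely2015multiclass} we obtain a shattered tree of depth $\LD(V)+1$, a contradiction.

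Here $\LD(\cF)$ and $\LD(\cF^{\ct-\iter})$ are understood in the multiclass sense, and Theorem~\ref{thm:online-multiclass-characterization} makes the mistake-bound interpretation valid for any $\cY$, including infinite ones. If the argmax over infinitely many branches is a concern, I would replace it by the observation that at most one branch attains $\LD(V)$. When no branch attains it, any prediction works, since every branch has strictly smaller dimension.

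Second, apply Theorem~\ref{thm:online-to-pac} to $\cF^{\ct-\iter}$. That theorem is stated for arbitrary $\cY$ and only needs $\LD < \infty$. It yields sample complexity $O\bigl((\LD(\cF^{\ct-\iter}) + \log(1/\delta))/\eps\bigr)$, and monotonicity in the dimension then gives the stated bound with $\LD(\cF)$.

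Finally, I would remark that a PAC learner for $\cF^{\ct-\iter}$ also gives a $\ct$-PAC learner for $\cF^{\etoe-\iter}$, by outputting the last symbol of the predicted chain. Its error is at most the $\ct$ error. The only step requiring care is the multiclass/infinite-alphabet validity of the two cited results, which is the main (mild) obstacle.
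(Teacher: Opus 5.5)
Your proposal is correct and follows the paper's proof, which is exactly the one-line combination of Theorem~\ref{thm:cot-bound-lit} (giving $\LD(\cF^{\ct-\iter})\le\LD(\cF)$) with the online-to-PAC conversion of Theorem~\ref{thm:online-to-pac}. Your extra check that the SOA-style uniqueness argument survives $|\Sigma|>2$ is a useful detail the paper leaves implicit: two branches splitting at a node $v$ force two distinct labels on $x_v$, hence a depth-$(\LD(V)+1)$ tree shattered by the base version space $V$.
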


\section{Autoregressive Linear classifiers}

In this section, we study the base class of autoregressive linear classifiers in $\mathbb{R}^d$. It is defined to only take into account the final $d$ entries. Formally, define

\[
\cF_{d, \lin} := \{f_{\boldsymbol{w}, b}: \boldsymbol{w}\in \mathbb{R}^d, b\in \mathbb{R}\}
\]
where the function $f_{\boldsymbol{w}, b}$ is given as
\[
f_{\boldsymbol{w}, b}(x) = 1 \mleft[ \sum_{i=1}^{\min\{d, |\boldsymbol{x}|\}} \boldsymbol{w}[-i] x[-i] + b \geq 0 \mright]
\]
for all $x \in \Sigma^\star$.

\subsection{Mistake and sample complexity bounds} \label{sec:linear-mistake}

$\cF_{d, \lin}$ is essentially the class of linear classifiers over the $d$-cube. This immediately gives the following bound

\begin{proposition}
    For all $\iter$:
    \[
    \LD(\cF_{d, \lin}^{\etoe-\iter}) = O(d^2)
    \] 
\end{proposition}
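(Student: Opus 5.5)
The plan is to bound the cardinality of $\cF_{d,\lin}^{\etoe-\iter}$ and then apply the halving argument. Halving gives $\LD(\cH)\le \log_2|\cH|$ for any finite class $\cH$; equivalently, a shattered tree of depth $m$ needs $2^m$ distinct functions. So it suffices to show that $\cF_{d,\lin}^{\etoe-\iter}$ contains only $2^{O(d^2)}$ distinct functions on $\{0,1\}^\star$, uniformly in $\iter$.

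\textbf{Step 1 (reduction to the cube).} Let $\pi:\{0,1\}^\star\to\{0,1\}^d$ map $x$ to its last $d$ bits, left-padded with zeros if $|x|<d$. By the definition of $f_{\boldsymbol w,b}$, the sum over $i\le\min\{d,|x|\}$ equals the same sum applied to $\pi(x)$, because padded zeros contribute nothing. Hence $f_{\boldsymbol w,b}(x)=f_{\boldsymbol w,b}(\pi(x))$, where the right-hand side is evaluated on a length-$d$ string. It follows that $f_{\boldsymbol w,b}$, as a function on all of $\{0,1\}^\star$, is completely determined by its restriction $g:=f_{\boldsymbol w,b}|_{\{0,1\}^d}$.

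Consequently the whole autoregressive trajectory $f_{\boldsymbol w,b}^{\ct-\iter}(x)$ is determined by $g$ and $x$. Each step evaluates $f_{\boldsymbol w,b}$ on the current string, which equals $g$ applied to $\pi$ of that string. In particular, the map $f_{\boldsymbol w,b}\mapsto f_{\boldsymbol w,b}^{\etoe-\iter}$ factors through $g$. Therefore
\[
|\cF_{d,\lin}^{\etoe-\iter}| \le |\{f|_{\{0,1\}^d}: f\in\cF_{d,\lin}\}|
\]
for every $\iter$.

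\textbf{Step 2 (counting threshold dichotomies).} The restrictions to $\{0,1\}^d$ are affine halfspace dichotomies of $2^d$ points in $\mathbb{R}^d$. This family has VC dimension $d+1$. By the Sauer--Shelah--Perles lemma, the number of such dichotomies is at most
\[
\binom{2^d}{\le d+1}\le (2^d+1)^{d+1}=2^{O(d^2)}.
\]
Combining the two steps gives $\LD(\cF_{d,\lin}^{\etoe-\iter})\le \log_2 2^{O(d^2)}=O(d^2)$ for all $\iter$.

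The only delicate point is Step 1: the treatment of inputs, and intermediate strings, of length less than $d$. Short strings must be checked to be consistent with the zero-padding identification, so that no extra degrees of freedom appear beyond $g$. With $\pi$ defined by left-padding, this holds term by term in the defining sum. The remaining steps are standard.
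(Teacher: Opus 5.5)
Your proposal is correct and takes the same route as the paper: bound $|\cF_{d,\lin}^{\etoe-\iter}|\le|\cF_{d,\lin}|\le 2^{O(d^2)}$ and apply the halving argument. The paper simply cites \cite[Lemma 4.1]{joshi2025theory} for the cardinality bound. You instead derive it directly from the suffix reduction and Sauer--Shelah--Perles with VC dimension $d+1$, which is valid.
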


\begin{proof}
    Trivial: The argument of \cite[Lemma 4.1]{joshi2025theory} shows that $|\cF_{d, \lin}| \leq 2^{O(d^2)}$. Since $|\cF_{d, \lin}^{\etoe-\iter}| \leq |\cF_{d, \lin}|$, the standard halving algorithm implies the stated bound.
\end{proof}

The bound above is based on a simple size argument. It is natural to ask if it is tight. Below, we give a positive answer to this question, that holds even for $\ct$-learning.

The idea is to implement a ``latch" mechanism (like in digital systems) that ``drags" the output bit of the base function throughout the entire $\ct$, and then just apply the best known bound for $\cF_{d, \lin}$.

\begin{theorem} \label{thm:linear-class-lower-bound}
    There exists an adversary forcing $\Omega(d^2)$ on any learner for $\cF_{d, \lin}^{\etoe-\iter}$ for all $d \geq 3$ and $\iter \geq 2$, even with $\ct$-supervision.
\end{theorem}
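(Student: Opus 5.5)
The plan is to reduce to the standard (non-autoregressive) problem of online learning linear thresholds on the Boolean cube $\{0,1\}^m$ with $m:=d-2$, using the latch mechanism of Lemma~\ref{lem:latch-sketch}. I then invoke the known $\Omega(m^2)$ mistake lower bound for that problem. This is the classical lower bound for learning halfspaces over the discrete cube, going back to Maass and Tur\'an: the Littlestone dimension of threshold functions on $\{0,1\}^m$ is $\Theta(m^2)$. Since $m=d-2\geq 1$ and $d\ge 3$, this gives $\Omega(d^2)$.

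\textbf{Step 1: the latch.} Fix $\boldsymbol{v}\in\mathbb{R}^m$ and $c\in\mathbb{R}$. Perturbing $c$ slightly does not change $f_{\boldsymbol{v},c}$ on the finite cube, so we may assume $S(z):=\boldsymbol{v}\cdot z+c\neq 0$ for all $z\in\{0,1\}^m$. Let $A:=\sum_i|\boldsymbol{v}_i|+|c|+1$. I set $\boldsymbol{w}:=(\boldsymbol{v},\alpha,\beta)$, so that $\alpha$ multiplies the second-to-last bit and $\beta$ the last bit, and I set $b:=c-\alpha$, with $\alpha:=2A$ and $\beta:=4A$. I then verify three properties.
\begin{enumerate}
\item On $z10$ the score is $\boldsymbol{v}\cdot z+\alpha+c-\alpha=S(z)$, so $f_{\boldsymbol{w},b}(z10)=f_{\boldsymbol{v},c}(z)$.
\item On any string ending in $00$ the score is at most $A-2A<0$, so the output is $0$.
\item On any string ending in $1$ the score is at least $-A+4A-2A>0$, so the output is $1$.
\end{enumerate}
All strings arising have length at least $d$, so the last $d$ bits are always defined. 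Consequently, the whole chain of thought on $z10$ is $y^{\iter}$ with $y=f_{\boldsymbol{v},c}(z)$. If $y=1$ the string keeps ending in $1$; if $y=0$ it ends in $00$ from then on. This holds for every $\iter\ge 2$.

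\textbf{Step 2: transferring the adversary.} Take an adversary forcing $\Omega(m^2)$ mistakes on any learner for $\{f_{\boldsymbol{v},c}|_{\{0,1\}^m}\}$, for instance one walking down a shattered Littlestone tree of that depth. The new adversary presents $z_t10$ whenever the old one presents $z_t$. The target of the old adversary is $f_{\boldsymbol{v},c}$, and the new target is the latched $f_{\boldsymbol{w},b}$. Under $\ct$-supervision the revealed trajectory is $y_t^{\iter}$, which is a deterministic function of $y_t$. Hence the feedback carries exactly the same information as in the base game, and the correct end-to-end answer is $y_t$.

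Any learner for $\cF_{d,\lin}^{\etoe-\iter}$, even with $\ct$ feedback, therefore yields a learner for $m$-dimensional thresholds on the cube with the same number of mistakes. Formally, the Littlestone tree for the base class, with every node label $z$ replaced by $z10$, is shattered by $\cF_{d,\lin}^{\ct-\iter}$ in the multiclass sense with sibling labels $0^\iter$ and $1^\iter$. Thus $\LD(\cF_{d,\lin}^{\etoe-\iter})\ge \Omega(m^2)$, and the $\ct$ statement follows from the simulation above.

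\textbf{Main obstacle.} The latch computation is routine. The substantive ingredient is the $\Omega(m^2)$ lower bound for halfspaces on the cube. A plain counting bound does not suffice for the Littlestone dimension, since $\log|\cF|$ only upper bounds it. So I will rely on the known Maass--Tur\'an construction, which builds an explicit shattered tree of depth $\Omega(m^2)$ by fixing coordinates one at a time and using $\Omega(m)$ adaptive binary-search-type queries per coordinate. If a direct citation is unavailable in the needed form, I would reconstruct that tree explicitly.
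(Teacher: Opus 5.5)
Your proposal is correct and follows the paper's proof essentially step for step. You use the same latch construction (weights $(\boldsymbol{v},\cdot,\cdot)$ with bias $c$ minus the second-to-last weight, yielding the correct first bit on $z10$, a dragged $0$ after $00$, and a dragged $1$ after a trailing $1$), and then transplant the known $\Omega(m^2)$ Littlestone lower bound for halfspaces on $\{0,1\}^m$ via $z\mapsto z10$, noting that the constant trajectory $y^{\iter}$ makes $\ct$ feedback uninformative; the paper cites \cite[Theorem B.2]{feldman2014sample} for that lower bound. The differences are cosmetic: your constants $2A,4A$ with $A=\sum_i|\boldsymbol{v}_i|+|c|+1$ replace the paper's $B,2A$ built from $\min_z$ and $\max_z$ of $\langle \boldsymbol{v},z\rangle+c$, and your perturbation of $c$ is harmless but unnecessary, since the first property holds with exact equality of scores.
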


The key is the following lemma, that provides the latch mechanism, in the price of reducing the dimension by $2$.

\begin{lemma}[Latch lemma] \label{lem:latch}
    Fix $d \geq 3$ and denote $m:= d-2$. Then, for all $\boldsymbol{v} \in \mathbb{R}^m$ and $c \in \mathbb{R}$, there exists $f_{\boldsymbol{w},b} \in \cF_{d, \lin}$  such that
    \[
    f_{\boldsymbol{w},b}^{\etoe-\iter}(z 1 0) = f_{\boldsymbol{v},c}(z)
    \]
    for all $z \in \{0,1\}^m$ and all $\iter \geq 2$.
\end{lemma}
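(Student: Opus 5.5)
The plan is to write down $(\boldsymbol{w},b)$ explicitly and then check three properties of $f_{\boldsymbol{w},b}$, as in the proof sketch of Lemma~\ref{lem:latch-sketch}: (i) the first generated bit on $z10$ equals $f_{\boldsymbol{v},c}(z)$; (ii) every string ending in $00$ is mapped to $0$; (iii) every string ending in $1$ is mapped to $1$.

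\textbf{Choice of parameters.} Index from the end of the string. On input $z10$, the bit $x[-1]=0$, the bit $x[-2]=1$, and $z$ occupies positions $-3,\dots,-d$. So I copy $\boldsymbol{v}$ into the first $m$ coordinates of $\boldsymbol{w}$, setting $\boldsymbol{w}[-i-2]:=\boldsymbol{v}[-i]$ for $i\in[m]$. The two free coordinates $\boldsymbol{w}[-2],\boldsymbol{w}[-1]$ and the bias $b$ remain to be chosen. Let
\[
A := 1+|c|+\sum_{i=1}^m |\boldsymbol{v}[i]|,
\]
so that $|\langle \boldsymbol{v}, y\rangle| < A$ for every $y\in\{0,1\}^m$, and also for every shorter suffix of such a $y$ (dropping terms only removes summands). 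I would take
\[
b := -2A,\qquad \boldsymbol{w}[-2] := c+2A,\qquad \boldsymbol{w}[-1] := 3A.
\]

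\textbf{Checking the three properties.} For (i), on $z10$ the score is $\langle \boldsymbol{v},z\rangle + \boldsymbol{w}[-2] + b = \langle\boldsymbol{v},z\rangle + c$. Hence $f_{\boldsymbol{w},b}(z10)=f_{\boldsymbol{v},c}(z)$. For (ii), a string ending in $00$ contributes nothing from its last two bits. The remaining (at most $m$) bits contribute less than $A$ in absolute value, so the score is $< A-2A<0$ and the output is $0$. For (iii), a string ending in $1$ has score at least $3A + \min\{0,\boldsymbol{w}[-2]\} - A - 2A$. Since $\boldsymbol{w}[-2]=c+2A>0$, this is at least $0$, so the output is $1$. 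Short strings with fewer than $d$ bits are covered by the same bounds, because missing positions simply drop terms.

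\textbf{The latch argument.} Conclude by induction along the chain of thought. Let $y:=f_{\boldsymbol{v},c}(z)$, which by (i) is the first generated bit on $z10$. If $y=0$, the current string ends in $00$, and by (ii) every subsequent appended bit is $0$, so the string keeps ending in $00$. If $y=1$, the string ends in $1$, and by (iii) every subsequent bit is $1$. So $f_{\boldsymbol{w},b}^{\ct-\iter}(z10)=y^\iter$, and in particular $f_{\boldsymbol{w},b}^{\etoe-\iter}(z10)=y$ for all $\iter\ge 2$.

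\textbf{Main obstacle.} The only delicate point is choosing the constants so that (ii) and (iii) hold uniformly over \emph{all} contexts, not just those of the form $z10$. This forces $b$ to dominate any contribution of $\boldsymbol{v}$ while $\boldsymbol{w}[-2]+b=c$ is preserved, and it forces $\boldsymbol{w}[-1]$ to dominate everything else. The choice $A$, $-2A$, $3A$ handles this with slack.
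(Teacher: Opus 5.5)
Your proposal is correct and follows essentially the same route as the paper: the paper also takes $\boldsymbol{w}=(\boldsymbol{v},B,2A)$ with bias $c-B$, so that the penultimate weight cancels the shifted bias on $z10$. It then verifies the same three properties (first bit correct, drag $0$ after $00$, drag $1$ after a trailing $1$) and runs the same latch induction. The only difference is in the constants: the paper uses the exact extremes $L,U$ of $\langle\boldsymbol{v},z\rangle+c$ over the cube ($B=\max\{0,U\}+1$, $A=B-L+1$), whereas you use the cruder bound $A=1+|c|+\sum_i|\boldsymbol{v}[i]|$ with $\boldsymbol{w}[-2]=c+2A$, $b=-2A$, $\boldsymbol{w}[-1]=3A$, which works equally well.
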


\begin{proof}
    Define
    \[
    L := \min_{z \in \{0,1\}^m} \langle \boldsymbol{v}, z \rangle + c, \quad U := \max_{z \in \{0,1\}^m} \langle \boldsymbol{v}, z \rangle + c
    \]
    and let $B := \max\{0, U\} +1, A:= B - L + 1$.
    We now define $f_{\boldsymbol{w},b} \in \cF_{d, \lin}$ that realizes the theorem.
    Let $s(x) := (u, p, q)$ be the last $d$ bits of an input $x \in \{0,1\}^\star$ for $f_{\boldsymbol{w},b}$, where $|u| = d-2, |p| = |q| =1$. Let
    \[
    \boldsymbol{w} := (\boldsymbol{v}, B, 2A), \quad b:= c- B.
    \]
    Therefore, for any input $x$ of length at least $d$ we have:
    \[
    f_{\boldsymbol{w},b}(x) = 1 \mleft[ \langle \boldsymbol{v}, u \rangle + B p + 2A q + c- B \geq 0 \mright].
    \]
    Now, we show that $f_{\boldsymbol{w},b}$ satisfies three desired properties:
    \begin{enumerate}
        \item \label{itm:first-correct} For all $z \in \{0,1\}^m$: $f_{\boldsymbol{w},b}(z 10) = f_{\boldsymbol{v},c}(z)$ (first  bit of $\ct$ is correct).
        \item \label{itm:drag-0} For any $x \in \{0,1\}^\star$ with $p = q = 0$, we have $f_{\boldsymbol{w},b}(x) = 0$ (always drag $0$).
        \item \label{itm:drag-1} For any $x \in \{0,1\}^\star$ with $q = 1$, we have $f_{\boldsymbol{w},b}(x) = 1$ (always drag $1$).
    \end{enumerate}
    The three items establish the statement of the lemma: The first item states that $f_{\boldsymbol{w},b}(z 10)$ returns the ``correct" bit $f_{\boldsymbol{v},c}(z)$. Then, since $f_{\boldsymbol{w},b}^{\etoe-\iter}$ appends the returned correct bit, precisely one of the following two states is reached: if $f_{\boldsymbol{v},c}(z) = 0$, then the last two bits become $0$, and the second item establishes that $0$ is dragged through all $\iter$ iterations. Otherwise, the last bit becomes $1$, and then the third item guarantees that $1$ is dragged through all $\iter$ iterations. So it remains to prove the three items.

    \paragraph{First item.}
    Let $z \in \{0,1\}^m$, and denote $x= z 1 0$. So
    \[
    \langle \boldsymbol{w}, s(x) \rangle + b =  \langle \boldsymbol{v}, z \rangle + B + c - B = \langle \boldsymbol{v}, z \rangle + c. 
    \]
    Therefore,
    \[
    f_{\boldsymbol{w},b}(x) = 1 \mleft [\langle \boldsymbol{w}, s(x) \rangle + b \geq 0  \mright]
    =
    1 \mleft [\langle \boldsymbol{v}, z \rangle + c \geq 0 \mright]
    =
    f_{\boldsymbol{v},c}(z).
    \]

    \paragraph{Second item.}
    Let $x$ such that $p=q=0$. Then
    \[
    \langle \boldsymbol{w}, s(x) \rangle + b =  \langle \boldsymbol{v}, u \rangle  + c - B \leq U-B < 0 , 
    \]
    where the first inequality is by definition of $U$, and the second inequality is by definition of $B$. Therefore, $f_{\boldsymbol{w},b}(x) = 0$ by definition.

    \paragraph{Third item.}
    Let $x$ such that $q=1$. Then
    \begin{align*}
    \langle \boldsymbol{w}, s(x) \rangle + b
    &=
    \langle \boldsymbol{v}, u \rangle + Bp +2A + c- B \\
    &\geq
    \langle \boldsymbol{v}, u \rangle + c +2A- B \\
    &\geq
    L +2A- B \\
    & = 
    L + 2(B - L + 1) - B \\
    &=
    B - L + 2 \\
    & >
    0.
    \end{align*}
    The equalities are by definitions. The first inequality is since $B > 0$ and thus $B p \geq 0$. The second inequality is by definition of $L$.
    The last inequality is since $B > U \geq L$.
    So, we have $f_{\boldsymbol{w},b}(x) = 1$ by definition.
\end{proof}

We may now prove Theorem~\ref{thm:linear-class-lower-bound}.
\begin{proof}[Proof of Theorem~\ref{thm:linear-class-lower-bound}]
    By \cite[Theorem B.2]{feldman2014sample}, we have $\LD(\cF_{m, \lin}) = \Theta(m^2)$ for all $m \geq 1$. As the latch lemma shows, for every function  $f_m \in \cF_{m, \lin}$ there is a function $f_d \in \cF_{d, \lin}$ where $d = m+2$, such that for all $z \in \{0,1\}^m$ and all $\iter \geq 1$ we have $f_d^{\etoe-\iter}(z 10) =  f_m(z)$. Therefore, the same adversarial strategy realizing the $\Omega(m^2)$ lower bound of \cite{feldman2014sample} can be implemented by $\cF_{d, \lin}^{\etoe-\iter}$, replacing any instance $z \in \{0,1\}^m$ with $z 10$, and any $f_m \in \cF_{m, \lin}$ with the appropriate $f_d \in \cF_{d, \lin}$. Therefore,
    \[
    \LD(\cF_{d, \lin}^{\etoe-\iter}) = \Theta(m^2) = \Theta(d^2)
    \]
    for all $\iter$. To see that the same lower bound holds even with $\ct$-supervision, note that the $\ct$ is just $\iter$ copies of the same bit, and so $\ct$-supervision is equivalent to only receiving the $\etoe$ output.
\end{proof}

The same technique allows to prove a lower bound for the statistical setting as well.
\begin{theorem}
    For all $d \geq 3$ and $\iter \geq 2$, any PAC-learner for $\cF_{d, \lin}^{\etoe-\iter}$ has sample complexity $\Omega(d)$, even with $\ct$-supervision.
\end{theorem}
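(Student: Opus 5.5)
The plan is to reduce to the standard PAC lower bound for linear thresholds in dimension $m := d-2$, using the Latch lemma (Lemma~\ref{lem:latch}) exactly as in the proof of Theorem~\ref{thm:linear-class-lower-bound}. The class $\cF_{m,\lin}$, viewed as functions on $\{0,1\}^m$, is the class of halfspaces on the Boolean cube. Its VC dimension is at least $m$: the standard basis vectors $e_1,\ldots,e_m$ are shattered by taking $\boldsymbol{v}[i] = \pm 1$ according to the desired label and $c = -1/2$. By the classical lower bound of \cite{ehrenfeucht1989general} (equivalently, the standard VC lower bound), any PAC learner for a class of VC dimension $D$ needs $\Omega(D)$ samples for constant $\eps,\delta$ (more precisely, $\Omega((D+\log(1/\delta))/\eps)$). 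So any PAC learner for $\cF_{m,\lin}$ on $\{0,1\}^m$ needs $\Omega(m) = \Omega(d)$ samples.

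Next, I would show that a $\ct$-PAC learner for $\cF_{d,\lin}^{\etoe-\iter}$ with sample complexity $m'(\eps,\delta)$ yields a PAC learner for $\cF_{m,\lin}$ with the same sample complexity. Let $\phi(z) := z10$ and take any distribution $D$ on $\{0,1\}^m$ and target $f_{\boldsymbol{v},c}$. By Lemma~\ref{lem:latch} there is $f_{\boldsymbol{w},b}\in\cF_{d,\lin}$ with $f_{\boldsymbol{w},b}^{\etoe-\iter}(z10) = f_{\boldsymbol{v},c}(z)$ for all $z$. Moreover, by the proof of the lemma (the two ``drag'' properties), its full chain of thought on $z10$ is the constant string $y^\iter$ with $y = f_{\boldsymbol{v},c}(z)$. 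Given a sample $(z_i,y_i)$ labeled by $f_{\boldsymbol{v},c}$, the reduction feeds the $\ct$-learner the examples $(z_i10,\, y_i^\iter)$. These are distributed exactly as an i.i.d. $\ct$-sample from the pushforward distribution $\phi_\# D$ with target $f_{\boldsymbol{w},b}$. The reduction then outputs $h(z) := \hat h(z10)$, where $\hat h$ is the learner's output. The error of $h$ under $D$ with respect to $f_{\boldsymbol{v},c}$ equals the error of $\hat h$ under $\phi_\# D$ with respect to $f_{\boldsymbol{w},b}^{\etoe-\iter}$. So the accuracy and confidence guarantees transfer, and $m'(\eps,\delta)\ge \Omega(d)$ follows from the first paragraph. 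Since the learner may be improper, this argument needs no properness assumption.

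The only step needing care is the claim that the $\ct$ supervision carries no extra information. This requires checking that the whole trajectory is determined by the final bit alone, and not merely that the final bit matches. This is exactly items~\ref{itm:drag-0} and~\ref{itm:drag-1} in the proof of Lemma~\ref{lem:latch}, so I expect no real obstacle. The remaining point is citing a VC lower bound that holds for improper learners, which the standard one does.
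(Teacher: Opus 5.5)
Your proposal is correct and follows the paper's route. You embed $\cF_{d-2,\lin}$ into $\cF_{d,\lin}^{\etoe-\iter}$ via $z\mapsto z10$ using the Latch lemma, note that the latched chain of thought is the constant string $y^\iter$ so $\ct$-supervision adds no information, and then apply $\VC(\cF_{d-2,\lin})\ge d-2$ together with the standard PAC lower bound. Your explicit black-box reduction (feed $(z_i10,\,y_i^\iter)$, output $z\mapsto \hat h(z10)$) is a somewhat more careful version of the paper's informal claim that shattering with latch functions suffices even under $\ct$-supervision.
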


\begin{proof}
    We prove that $\VC(\cF_{d, \lin}^{\etoe}) \geq m$ for $m:=d-2$. Since all functions we will use to shatter a set of size $m$ are implemented by the latch mechanism of Lemma~\ref{lem:latch}, those functions always copy the same final bit, and thus $\ct$-supervision is equivalent to only receiving the $\etoe$ output. Therefore, proving $\VC(\cF_{d, \lin}^{\etoe}) \geq m$ suffices for showing that $m$ is a lower bound on the sample complexity, even with $\ct$-supervision. The proof now follows essentially the same lines as the proof of Theorem~\ref{thm:linear-class-lower-bound}, only replacing the bound $\LD(\cF_{m, \lin}) = \Theta(m^2)$ used in the proof of Theorem~\ref{thm:linear-class-lower-bound} to the known bound $\VC(\cF_{m, \lin}) \geq m$. 
\end{proof}

\subsection{Bounds for Efficient learning} \label{sec:linear-computational}

We first show that efficient online $\etoe$-learning of autoregressive linear thresholds is impossible, conditioned on the following assumption; we refer the reader to, e.g., \cite{joshi2025theory} for more details. 

\begin{assumption}[Hardness of Learning $\mathsf{TC}^0$]
\label{ass:hardness-tc0}
There exist a constant $L>0$ and a polynomial $p(n)$ such that threshold circuits of depth $L$ and size $p(n)$ over $n$ binary inputs are not weakly PAC-learnable in $\operatorname{poly}(n)$ time, i.e., for some constants $\varepsilon<\tfrac12$ and $\delta<1$, there is no polynomial-time learner that achieves error at most $\varepsilon$ with confidence at least $1-\delta$.
\end{assumption}

Based on the above assumption, we show that learning end-to-end autoregressive linear classifiers is  impossible. The proof is based on a standard batch-to-online argument and using Theorem 4.4 in~\cite{joshi2025theory}. 

\begin{theorem}[Conditional hardness of efficient online $\etoe$-learning]
\label{thm:e2e-ltf-online-hard}
Assume Hardness Assumption~\ref{ass:hardness-tc0}. Then there is no possibly improper online learner such that, for every $d,\iter,n$, on every realizable online sequence of examples from $\{0,1\}^n$ labeled by some target in $\cF_{d,\lin}^{\etoe-\iter}$, the learner runs in time polynomial in $d,\iter,n$ per round and makes at most polynomially many mistakes in $d,\iter,n$.
\end{theorem}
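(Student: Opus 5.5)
We argue by contradiction via a standard online-to-batch conversion, reducing to Theorem 4.4 of \cite{joshi2025theory}. That theorem states that, under Assumption~\ref{ass:hardness-tc0}, $\cF_{d,\lin}^{\etoe-\iter}$ (with $d,\iter$ polynomial in $n$) is not efficiently weakly PAC-learnable, even improperly. It does so by embedding depth-$L$, size-$p(n)$ threshold circuits into the $\iter$-step autoregressive computation of a linear threshold over a sliding window.

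So suppose an online learner $\cA$ exists as in the statement. On every realizable sequence over $\{0,1\}^n$ labeled by a target in $\cF_{d,\lin}^{\etoe-\iter}$, it runs in time $\mathrm{poly}(d,\iter,n)$ per round and makes at most $K=\mathrm{poly}(d,\iter,n)$ mistakes. We will build from $\cA$ an efficient PAC learner for $\cF_{d,\lin}^{\etoe-\iter}$, and this contradicts Theorem 4.4 of \cite{joshi2025theory}.

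The conversion is the simplest one, which suffices because we only need weak learning with constant $\eps,\delta$. Draw i.i.d.\ examples $(x_i,y_i)$ from $D$ labeled by the target, and feed them to $\cA$ one at a time, keeping $\cA$'s current state. At each step, test the current hypothesis $h$ (i.e., $\cA$'s prediction rule frozen at its current state) on a fresh block of $k=O(\eps^{-1}\log(K/\delta))$ examples.
\begin{itemize}
\item If $h$ errs on none of them, output $h$.
\item Otherwise, feed one erring example to $\cA$. That example is a mistake, so this happens at most $K$ times.
\end{itemize}
An equivalent variant runs $\cA$ on $O(K/\eps)$ samples and outputs the hypothesis at a uniformly random round.

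For the analysis: any hypothesis with error greater than $\eps$ survives a block of $k$ fresh tests with probability at most $(1-\eps)^k\le\delta/(K+1)$. A union bound over the at most $K+1$ tested hypotheses then shows the output has error at most $\eps$ with probability at least $1-\delta$. The sample count is $O(K\eps^{-1}\log(K/\delta))$, and each evaluation of $h$ costs one $\cA$ round, so the total running time is polynomial in $d,\iter,n$. The output hypothesis can be represented as a circuit simulating $\cA$ from its saved state, which is an efficiently evaluatable improper hypothesis. Realizability of the online sequence holds because every fed example is labeled by the fixed target $f\in\cF_{d,\lin}^{\etoe-\iter}$.

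The main obstacle is matching parameters with Theorem 4.4 of \cite{joshi2025theory}. We must check that their hard instances use inputs of a fixed length $n$, and that $d$ and $\iter$ are polynomial in the circuit size. Then ``polynomial in $d,\iter,n$'' is polynomial in the TC$^0$ input size, and the reduction above yields a polynomial-time weak PAC learner for depth-$L$ threshold circuits, contradicting Assumption~\ref{ass:hardness-tc0}. If their reduction encodes a circuit input $w\in\{0,1\}^{n'}$ as a padded string of length $n=\mathrm{poly}(n')$, we compose with that encoding; the encoding is efficient and maps distributions to distributions, so weak learning transfers directly.
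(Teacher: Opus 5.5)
Your proposal is correct and follows the same route as the paper: it uses the standard online-to-batch conversion to turn a polynomial-mistake, polynomial-time online learner into a polynomial-time realizable PAC learner for $\cF_{d,\lin}^{\etoe-\iter}$ over $\{0,1\}^n$, which contradicts Theorem~4.4 of \cite{joshi2025theory}. Your test-and-feed conversion and the parameter-matching remarks spell out details the paper leaves implicit; note that the test-and-feed version relies on the learner being deterministic, which the paper assumes throughout.
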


\begin{proof}
Suppose such a learner exists. By the standard reduction from realizable online learning with polynomial mistake bound to realizable PAC learning, this yields a polynomial-time realizable PAC learner for $\cF_{d,\lin}^{\etoe-\iter}$ over $\{0,1\}^n$. This contradicts \cite[Theorem~4.4]{joshi2025theory}.
\end{proof}

In contrast, with $\ct$-supervision there is an efficient online learner matching the information-theoretic guarantee up to a logarithmic factor. We use the classical deterministic online learner for Boolean linear thresholds. We first give a reduction 

\begin{lemma}[Online reduction with $\ct$-supervision]
\label{lem:online-cot-reduction}
Let $\cF \subseteq \{f:\{0,1\}^\star \to \{0,1\}\}$ be a base class. Assume there is a deterministic online learner $\cA$ for $\cF$ such that:
\begin{enumerate}
    \item on every realizable sequence it makes at most $K$ mistakes;
    \item on every length-$L$ sequence, its total computation time is polynomial in $L$ and the total input length.
\end{enumerate}
Then for every $\iter\ge 1$ there is a deterministic online learner $\cB$ for $\cF^{\etoe-\iter}$ with $\ct$-supervision such that:
\begin{enumerate}
    \item on every realizable $\ct$-supervised sequence it makes at most $K$ mistakes;
    \item on every length-$n$ sequence, its total computation time is polynomial in $n,\iter,$ and the total input length.
\end{enumerate}
\end{lemma}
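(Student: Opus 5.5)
The plan is to simulate $\cA$ on the intermediate prompts of each chain of thought. The central idea is to use $\cA$ as a next-token predictor to roll out a full predicted chain of thought. We then feed $\cA$ only the single example at which the rollout first deviates from the revealed chain of thought. This way $\cA$ sees only examples on which it errs.

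The learner $\cB$ keeps a history $H$ of labeled examples $(x',y')\in\{0,1\}^\star\times\{0,1\}$ that have been fed to $\cA$, initially empty. The state of $\cA$ is always the state reached after processing $H$ in order. Since $\cA$ is deterministic, we can query its prediction on any instance $x'$ without committing to that query. One way is to copy its state; another, if we treat $\cA$ as a black box, is to re-run $\cA$ from scratch on $H$ and then on $x'$. Each round then proceeds as follows.
\begin{enumerate}
    \item On a round with prompt $x_t$, $\cB$ computes a rollout $\hat y_1,\ldots,\hat y_\iter$, where $\hat y_s$ is the (non-committed) prediction of $\cA$ in state $H$ on the instance $x_t\hat y_1\cdots\hat y_{s-1}$.
    \item $\cB$ outputs $\hat y_\iter$.
    \item When the true chain of thought $y=f_\star^{\ct-\iter}(x_t)$ is revealed, $\cB$ checks whether $\hat y=y$. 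If so, nothing is updated.
    \item Otherwise, let $s^\star$ be the first index with $\hat y_{s^\star}\neq y_{s^\star}$. Then $\cB$ appends the single example $(x_t y_1\cdots y_{s^\star-1},\, y_{s^\star})$ to $H$.
\end{enumerate}

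For correctness, note first that $\hat y_s=y_s$ for all $s<s^\star$. Hence the appended instance $x_t y_1\cdots y_{s^\star-1}$ is exactly the instance $x_t\hat y_1\cdots\hat y_{s^\star-1}$ on which $\cA$, in state $H$, predicted $\hat y_{s^\star}$. So when this example is presented to $\cA$ as its next round, $\cA$ errs on it. Moreover, every example in $H$ is labeled by $f_\star$ itself, since $y_{s^\star}=f_\star(x_t y_1\cdots y_{s^\star-1})$ by the definition of the chain of thought. Thus $H$ is a realizable sequence for $\cF$ on which $\cA$ errs in every round, and by assumption (1) we get $|H|\le K$. On the other hand, $\cB$ errs only if $\hat y_\iter\ne y_\iter$. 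In that case $\hat y\ne y$, so the round contributes exactly one new element to $H$. Therefore $\cB$ makes at most $|H|\le K$ mistakes. We never use the fact that $\cA$ sees only a subsequence of the rounds, because the guarantee of $\cA$ holds for every realizable sequence.

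For the running time, in each of the $n$ rounds we make $\iter$ queries to $\cA$ on instances of length at most $|x_t|+\iter$, together with at most one committed update. Even with the re-run-from-scratch implementation, each query costs time polynomial in $|H|+1\le n+1$ and in the total input length, which is at most $\sum_t(|x_t|+\iter)$. This gives total time polynomial in $n$, $\iter$ and the total input length. The only delicate point, and the one I expect to need the most care in the write-up, is the non-committed querying of $\cA$. It must be clear that the rollout uses a frozen state, so that the instance fed at $s^\star$ really is one on which $\cA$'s actual next prediction is wrong. Re-running from the stored history makes this rigorous without any assumption on $\cA$'s internals beyond determinism.
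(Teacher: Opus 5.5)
Your proof is correct, but it takes a somewhat different route from the paper's.

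The paper's learner $\cB$ stores the entire expanded transcript of all revealed next-token examples $(u_{s,j-1},z_{s,j})$. On round $t$, it rolls out by re-running $\cA$ on that transcript followed by $\cB$'s own predicted steps $(\hat u_{t,i-1},\hat z_{t,i})$ for $i<j$. Each final-answer mistake is then charged to the first deviating position $(t,j)$ of the single realizable expanded sequence, where $\cA$ itself must err.

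You instead make $\cB$ \emph{conservative}: the rollout uses a frozen history, and only the first-deviation example is ever stored. The bound then follows because $H$ is a realizable sequence on which $\cA$ errs at every step.

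The two design choices are coupled:
\begin{itemize}
\item Because the paper updates on everything, it must feed the self-predicted steps during the rollout. This makes the simulated prefix coincide with the true prefix up to the first deviation.
\item Because you update only at the first deviation, the frozen rollout is exactly what makes the stored example a genuine mistake of $\cA$.
\end{itemize}

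Your version has mild advantages. Every sequence passed to $\cA$ is realizable, with at most $\min\{K,n\}$ labeled examples plus one query. The paper's simulated sequences can have up to $n\iter$ examples, and after a deviation they include self-predicted labels inconsistent with $f_\star$. So each of your simulations is cheaper, and you only need $\cA$'s running-time guarantee on realizable inputs.

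The paper's version has $\cA$ see all of the supervision, and its charging argument needs no conservative-update observation.

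Both arguments in fact bound by $K$ the number of rounds with any chain-of-thought error, not just final-answer errors.
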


\begin{proof}
Fix a realizable $\ct$-supervised sequence $(x_t,z_t)_{t=1}^n$, where
$z_t=(z_{t,1},\ldots,z_{t,\iter})=f_\star^{\ct-\iter}(x_t)$ for some $f_\star\in\cF$.
For each round $s$, let $u_{s,0}:=x_s$, and for $j\in[\iter]$ let
$u_{s,j}:=u_{s,j-1}z_{s,j}$, where for a binary string $u$ and a bit
$b\in\{0,1\}$, $ub$ denotes the string obtained by appending $b$ to $u$.
Learner $\cB$ maintains the ordered transcript
\[
S_{t-1}:=\bigl((u_{1,0},z_{1,1}),\ldots,(u_{1,\iter-1},z_{1,\iter}),
\ldots,
(u_{t-1,0},z_{t-1,1}),\ldots,(u_{t-1,\iter-1},z_{t-1,\iter})\bigr).
\]

On round $t$, learner $\cB$ sets $\hat u_{t,0}:=x_t$. For $j=1,\ldots,\iter$, suppose $\hat z_{t,1},\ldots,\hat z_{t,j-1}$ have already been defined and $\hat u_{t,j-1}$ has been formed. Learner $\cB$ runs $\cA$ from scratch on the sequence
\[
S_{t-1},(\hat u_{t,0},\hat z_{t,1}),\ldots,(\hat u_{t,j-2},\hat z_{t,j-1}),
\]
where for $j=1$ this means just $S_{t-1}$, and lets $\hat z_{t,j}$ be the prediction on the next example $\hat u_{t,j-1}$; then it sets $\hat u_{t,j}:=\hat u_{t,j-1}\hat z_{t,j}$. Finally, it predicts $\hat z_{t,\iter}$.
After receiving the true chain-of-thought $z_t=(z_{t,1},\ldots,z_{t,\iter})$, it updates the transcript to
\[
S_t:=\bigl(S_{t-1},(u_{t,0},z_{t,1}),\ldots,(u_{t,\iter-1},z_{t,\iter})\bigr).
\]

Now consider the expanded sequence
\[
((u_{t,j-1},z_{t,j}))_{1\le t\le n,\ 1\le j\le \iter}.
\]
It is realizable by $f_\star$, since $z_{t,j}=f_\star(u_{t,j-1})$ for all $t,j$.
If $\cB$ makes a mistake on round $t$, let $j$ be the first index with $\hat z_{t,j}\neq z_{t,j}$.
Then $\hat u_{t,i}=u_{t,i}$ for all $i<j$, so the transcript used by $\cB$ to compute $\hat z_{t,j}$ is exactly the prefix of the expanded realizable sequence seen by $\cA$ just before processing $(u_{t,j-1},z_{t,j})$. Hence $\cA$ makes a mistake on that example. Therefore every final-answer mistake of $\cB$ is charged to a distinct mistake of $\cA$, and so $\cB$ makes at most $K$ mistakes.

Finally, the expanded sequence has length at most $n\iter$, and its total input length is
\[
\sum_{t=1}^n \sum_{j=1}^\iter |u_{t,j-1}|
=
\sum_{t=1}^n \sum_{j=1}^\iter (|x_t|+j-1)
\le
\iter \sum_{t=1}^n |x_t| + n\iter^2,
\]
which is polynomial in $n,\iter,$ and the total input length of the original sequence. The same bound holds for every simulated prefix sequence used by $\cB$, since $|\hat u_{t,j}|=|x_t|+j$ for all $t,j$. Since $\cB$ performs at most $n\iter$ simulations, and each simulation runs $\cA$ from scratch on a sequence of length at most $n\iter$ and polynomially bounded total input length, the total computation time of $\cB$ is polynomial in $n,\iter,$ and the total input length of the original sequence.
\end{proof}

To obtain our efficient algorithm, we use the following classic result for the base class.

\begin{lemma}[Maass--Tur\'an \cite{MaassTuran1994}]
\label{lem:online-boolean-ltf}
There exists a deterministic online learner for Boolean linear thresholds over $\{0,1\}^d$ such that:
\begin{enumerate}
    \item on every realizable sequence it makes at most $O(d^2\log d)$ mistakes;
    \item on every length-$m$ sequence, its total computation time is polynomial in $m$ and $d$.
\end{enumerate}
\end{lemma}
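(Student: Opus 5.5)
The Maass--Tur\'an lemma is a classical result that the paper cites rather than proves, so I would reconstruct its proof through the version-space / center-of-gravity approach of \cite{MaassTuran1994}.

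\textbf{Setup.} The learner keeps an explicit polytope $P_t \subseteq \mathbb{R}^{d+1}$ of consistent parameters $(\boldsymbol{w},b)$. Each labeled example $(x,y)$ with $x\in\{0,1\}^d$ adds one linear constraint:
\[
\langle \boldsymbol{w},x\rangle + b \geq 0 \quad \text{if } y=1,
\]
and the strict reverse inequality if $y=0$.

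\textbf{Normalization.} First I would show that we may restrict to integer weights bounded by $2^{O(d\log d)}$, and replace the strict inequality by a margin constraint (``$\le -1$'' versus ``$\ge 0$''). This follows from Cramer's rule applied to a vertex of the feasible region, whose defining matrix has $0/1$ entries, i.e.\ Hadamard's bound. The result is that the initial feasible region $P_0$ sits in a box of side $R=2^{O(d\log d)}$. Moreover, the set of consistent parameters for the target contains a small ball of radius $r=2^{-O(d\log d)}$, obtained by slightly shrinking a margin-normalized target.

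\textbf{Prediction and update.} On each instance $x$, the learner computes an approximate centroid $c_t$ of $P_t$ and predicts the label that $c_t$ assigns to $x$. If it errs, the true label's halfspace excludes $c_t$, so $P_{t+1}$ is $P_t$ cut by a halfspace that does not contain the centroid. By Gr\"unbaum's theorem (or its approximate-centroid version), this multiplies the volume by at most $1-1/e+o(1)$. Comparing the initial volume $R^{d+1}$ with the final volume of at least $r^{d+1}$ bounds the number of mistakes by
\[
O\bigl((d+1)\log(R/r)\bigr) = O(d^2\log d).
\]

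\textbf{Efficiency.} Each $P_t$ is described by $O(d)+t$ linear constraints with polynomial bit size. An approximate centroid can be computed in deterministic polynomial time. Maass and Tur\'an do this with an ellipsoid/John-ellipsoid-style deterministic substitute for the centroid that still guarantees a constant-fraction volume reduction, at the cost of a constant factor in the mistake bound. The total time over $m$ rounds is then polynomial in $m$ and $d$.

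\textbf{Main obstacle.} The hardest step is deterministic centroid approximation: exact centroids are \#P-hard and randomized sampling would violate determinism. I would first try the volumetric-center or ellipsoid-center cut, and accept an $O(d)$ loss factor only if it is absorbed by the bound. To keep the $d^2\log d$ bound, the center must guarantee a constant-factor volume drop per mistake, which the Vaidya volumetric center does; this is what I would invoke.
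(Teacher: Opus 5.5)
The paper does not prove this lemma. It imports it as a black box from Maass and Tur\'an and uses it only through Lemma~\ref{lem:online-cot-reduction} in Theorem~\ref{thm:cot-ltf-online-efficient}. Your reconstruction is the standard Maass--Tur\'an argument, and it does give both stated bounds, so the route is right. Its ingredients are:
the version space of parameters $(\boldsymbol{w},b)\in\mathbb{R}^{d+1}$;
the Cramer/Hadamard normalization, which gives $\log(R/r)=O(d\log d)$;
one cut excluding the current center per mistake;
a deterministic cutting-plane method that needs only $O(d\log(R/r))$ cuts.

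Two points need fixing.

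First, delete the claim in your efficiency paragraph that an approximate centroid can be computed deterministically in polynomial time. No deterministic polynomial-time algorithm with a Gr\"unbaum-type guarantee is known, which is exactly the obstacle you identify at the end. Also, the classical circumscribed ellipsoid method shrinks volume only by a factor $1-\Theta(1/d)$ per cut. That would give $O(d^3\log d)$ mistakes, not a constant-factor loss.

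Second, once you switch to Vaidya's volumetric center, the per-step Gr\"unbaum analysis of the full version space no longer applies. Vaidya's progress is amortized through the volumetric barrier, and his method also discards constraints. The clean argument uses his theorem as a black box: while the maintained polytope in dimension $d+1$ still contains a ball of radius $r$, at most $O(d\log(dR/r))$ separating cuts can be processed.
Update only on mistakes. Each mistake supplies a violated example constraint, and its weakened central version through the current point remains valid for the target's ball. Added constraints are valid and dropped constraints only enlarge the polytope, so the ball is never cut off.
Hence the number of mistakes is $O(d\log(dR/r))=O(d^2\log d)$. Per-round time is polynomial in $d$ and the bit size, which is polynomial in $d$.
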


\begin{theorem}[Efficient online $\ct$-learning for autoregressive linear thresholds]
\label{thm:cot-ltf-online-efficient}
Fix a horizon $n$. There exists a deterministic online learner for $\cF_{d,\lin}^{\etoe-\iter}$ with $\ct$-supervision such that, for every sequence $(x_t,z_t)_{t=1}^n$ satisfying $z_t=f_\star^{\ct-\iter}(x_t)$ for some $f_\star\in\cF_{d,\lin}$, the learner makes at most $O(d^2\log d)$ mistakes and its total computation time is polynomial in $n,d,\iter,$ and $\sum_{t=1}^n |x_t|$.
\end{theorem}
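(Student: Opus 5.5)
The plan is to instantiate Lemma~\ref{lem:online-cot-reduction} with the base class $\cF := \cF_{d,\lin}$ and a base learner $\cA$ built from the Maass--Tur\'an learner of Lemma~\ref{lem:online-boolean-ltf}. The one gap is that Lemma~\ref{lem:online-boolean-ltf} works on the cube $\{0,1\}^d$, while $\cF_{d,\lin}$ is defined on all of $\{0,1\}^\star$. So the main step is to construct a deterministic online learner for $\cF_{d,\lin}$ on $\{0,1\}^\star$ that makes $O(d^2\log d)$ mistakes and runs in polynomial time.

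\textbf{Handling short strings.} By definition, $f_{\boldsymbol{w},b}(x)$ depends only on the last $\min\{d,|x|\}$ bits of $x$.
\begin{itemize}
\item For $|x|\ge d$, feed the suffix $x[-d{:}]\in\{0,1\}^d$ to one copy of Maass--Tur\'an.
\item For $|x| = \ell < d$, the label is $1[\sum_{i=1}^{\ell}\boldsymbol{w}[-i]x[-i] + b \ge 0]$. Padding $x$ on the left with $d-\ell$ zeros gives the same value, since zero coordinates contribute nothing. So $f_{\boldsymbol{w},b}(x) = f_{\boldsymbol{w},b}(0^{d-\ell}x)$ for every $x$.
\end{itemize}
Hence the map $\pi(x) :=$ the last $d$ bits of $0^{d}x$ satisfies $f_{\boldsymbol{w},b}(x) = h_{\boldsymbol{w},b}(\pi(x))$, where $h_{\boldsymbol{w},b}$ is the ordinary Boolean threshold on $\{0,1\}^d$. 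No separate treatment of lengths is needed.

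\textbf{Defining $\cA$.} Let $\cA$ run the Maass--Tur\'an learner on the mapped sequence $(\pi(u_s), y_s)$. Any sequence realizable by $f_{\boldsymbol{w},b}\in\cF_{d,\lin}$ maps to a sequence realizable by $h_{\boldsymbol{w},b}$, so $\cA$ makes at most $K = O(d^2\log d)$ mistakes. Computing $\pi$ takes time linear in the input, so on a length-$L$ sequence the total time of $\cA$ is polynomial in $L$ and $d$, hence in $L$ and the total input length (we may assume $d$ is part of the input size).

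\textbf{Applying the reduction.} Lemma~\ref{lem:online-cot-reduction} now gives a deterministic $\ct$-supervised learner $\cB$ for $\cF_{d,\lin}^{\etoe-\iter}$. It makes at most $O(d^2\log d)$ mistakes, and over $n$ rounds its total time is polynomial in $n$, $\iter$, and $\sum_t|x_t|$, with $d$ entering through $\cA$'s running time.

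\textbf{Main obstacle.} The only delicate point is the polynomial-time bookkeeping: $\cA$'s time bound from Lemma~\ref{lem:online-boolean-ltf} is in terms of $m$ and $d$, while Lemma~\ref{lem:online-cot-reduction} asks for time polynomial in the sequence length and total input length. I would state explicitly that the reduction is applied with the polynomial also allowed to depend on $d$; the proof of Lemma~\ref{lem:online-cot-reduction} goes through verbatim under this change. Beyond that, the argument is routine composition.
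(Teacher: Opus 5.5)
Your proposal is correct and matches the paper's proof: the paper also composes the Maass--Tur\'an learner with the zero-padded suffix map $\phi$ (your $\pi$) to obtain an $O(d^2\log d)$-mistake, polynomial-time learner for $\cF_{d,\lin}$, and then invokes Lemma~\ref{lem:online-cot-reduction}. You also note explicitly that the reduction's time bound must be allowed to depend on $d$, which the paper leaves implicit; the lemma's proof indeed goes through unchanged with that allowance.
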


\begin{proof}
Let $\phi:\{0,1\}^\star\to\{0,1\}^d$ be the suffix map returning the last $d$ bits, padded with leading zeroes if needed. By definition of $\cF_{d,\lin}$, for every $f\in\cF_{d,\lin}$ there is a Boolean linear threshold $g_f:\{0,1\}^d\to\{0,1\}$ such that $f(x)=g_f(\phi(x))$ for all $x\in\{0,1\}^\star$.

Let $\cA$ be the learner from Lemma~\ref{lem:online-boolean-ltf}. We define a learner $\cA'$ for $\cF_{d,\lin}$ as follows: on an example $x$, learner $\cA'$ predicts whatever $\cA$ predicts on $\phi(x)$; after receiving the label $y$, learner $\cA'$ updates $\cA$ on the labeled example $(\phi(x),y)$. Since $f(x)=g_f(\phi(x))$, every realizable sequence for $\cF_{d,\lin}$ maps to a realizable sequence for Boolean linear thresholds over $\{0,1\}^d$. Hence $\cA'$ makes at most $O(d^2\log d)$ mistakes on every realizable sequence. Its total computation time on any length-$m$ sequence is polynomial in $m,d,$ and the total input length, since computing all suffixes $\phi(x)$ contributes only linear overhead.

Applying Lemma~\ref{lem:online-cot-reduction} to the base class $\cF_{d,\lin}$ and the learner $\cA'$ yields the claim.
\end{proof}

\section{Non-Littlestone base classes} \label{sec:non-lit}

\begin{theorem}\label{thm:alternating-horizons}
There exists a base class \(\cF\) over the binary alphabet \(\Sigma=\{0,1\}\) such that, for the pairwise distinct sequence
\[
\iter_i := i+4 \qquad (i\in \N),
\]
the following holds:
\[
i \text{ odd } \implies \LD(\cF^{\etoe-\iter_i}) = 0,
\qquad
i \text{ even } \implies \LD(\cF^{\etoe-\iter_i})=\infty.
\]
Moreover, for every odd \(i\), online learning with \(\ct\)-supervision has a finite mistake bound, while for every even \(i\), online learning with \(\ct\)-supervision is impossible.
\end{theorem}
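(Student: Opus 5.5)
We build \(\cF\) so that every generator first writes a fixed-length \emph{preamble} of deterministic bits, and only afterwards acts on a rich, non-Littlestone pattern that is visible exclusively at even horizons. Take a class \(\mathcal{H}\subseteq\{0,1\}^{\N}\) with \(\LD(\mathcal{H})=\infty\) (for instance thresholds on \(\N\)), and encode each \(n\in\N\) by a self-delimiting prefix-free string \(\langle n\rangle\), e.g.\ \(1^n0\). For each \(h\in\mathcal{H}\) define \(f_h\) through the parity of the number of bits appended after the encoded instance. Concretely, on inputs of the form \(\langle n\rangle 0^k\) with \(k\ge 0\), set \(f_h(\langle n\rangle 0^k)=0\) whenever \(k+1\) is not an even horizon \(\iter_i\), and set \(f_h(\langle n\rangle 0^k)=h(n)\) when \(k+1=\iter_i\) with \(i\) even. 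On every other input, set \(f_h\) to \(0\).

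With this definition the generation is simple to trace. Starting from \(x=\langle n\rangle\), the generator keeps appending \(0\) until the step numbered \(\iter\). At that step it outputs \(h(n)\) if \(\iter\) is an even-indexed horizon and \(0\) otherwise.

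The proof then has three steps.

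\emph{Step 1: the parity bookkeeping.} Check that the chain never leaves the family \(\langle n\rangle 0^k\) before its final step. Check also that instances not of the form \(\langle n\rangle 0^k\) always produce all zeros: a string \(x\) that is not a prefix of any \(\langle n\rangle 0^k\) continues with \(0\)s, and prefix-freeness of the encoding gives this. Together these give \(f_h^{\etoe-\iter_i}(\langle n\rangle)=h(n)\) for even \(i\), and \(f_h^{\etoe-\iter_i}(x)=0\) for all \(x\) when \(i\) is odd.

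\emph{Step 2: the odd horizons.} For odd \(i\), the class \(\cF^{\etoe-\iter_i}\) is a single function, so its Littlestone dimension is \(0\). In fact every \(\ct\) trajectory is the all-zero string, so \(\cF^{\ct-\iter_i}\) is also a singleton and \(\ct\)-learning makes no mistakes.

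\emph{Step 3: the even horizons.} For even \(i\), the map \(n\mapsto\langle n\rangle\) embeds \(\mathcal{H}\) into \(\cF^{\etoe-\iter_i}\), so \(\LD=\infty\). Impossibility with \(\ct\)-supervision follows because on the instances \(\langle n\rangle\) the \(\ct\) is always \(0^{\iter_i-1}h(n)\). The extra supervision therefore carries no information beyond the \(\etoe\) label, and the adversary from Theorem~\ref{thm:online-characterization} that forces unboundedly many mistakes on \(\mathcal{H}\) still works.

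\emph{Main obstacle.} The hard part is the interference between horizons. A single \(f_h\) has to behave correctly for \emph{every} horizon at once, and a chain of length \(\iter\) visits the intermediate inputs \(\langle n\rangle 0^k\) for all \(k<\iter\). If \(h(n)=1\) at some intermediate step, a \(1\) would be appended, and the later steps would then leave the family \(\langle n\rangle 0^k\). That is why the plan keys the behaviour to the exact step index \(k+1\) and places the rich bit only at the final step. I must verify that for a chain of length \(\iter\), the only step that can emit \(h(n)\) is the one with \(k+1=\iter\). Steps with \(k+1<\iter\) must all emit \(0\), which requires that no \emph{earlier} even horizon triggers. But that fails, because \(\iter_2<\iter_4\).

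The fix I would try first is to let the rich bit be overwritten afterwards. Put \(f_h(\langle n\rangle 0^{k}1 w)=0\) for every suffix \(w\), and use the encoding and the outputs so that the final symbol at horizon \(\iter\) is determined solely by the step \(\iter\) itself. Concretely, the output at step \(\iter\) should depend only on \(h(n)\) and on the parity of \(i\), where \(\iter=\iter_i\) is recovered from the length of the tail. The tail is \(0^{a}1 0^{b}\) in this case, and its total length is \(\iter-1\), so \(\iter\) can still be recovered. I would then redefine \(f_h\) on such tails: \(f_h(\langle n\rangle t)=h(n)\) if \(|t|+1\) is an even-indexed horizon, and \(0\) otherwise, independently of the contents of \(t\). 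This makes Step 1 a clean length-based induction.
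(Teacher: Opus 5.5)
There is a genuine gap, and it lies in the odd horizons rather than in the interference you flagged. $\LD(\cF^{\etoe-\iter})=0$ requires $f_h^{\etoe-\iter}(x)$ to be independent of $h$ for \emph{every} $x\in\{0,1\}^\star$. The adversary may start the chain at any string, including members of your own family and prefixes of codewords.

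In your final definition, $f_h(\langle n\rangle t)$ depends only on $h(n)$ and $|t|$. So from $x=\langle n\rangle 0$ the $s$-th generated bit is $h(n)$ exactly when $s+1$ is an even-indexed horizon. At the odd horizon $\iter_1=5$ the last step has $s+1=6=\iter_2$. Hence $f_h^{\etoe-5}(\langle n\rangle 0)=h(n)$, and $\LD(\cF^{\etoe-5})\ge\LD(\mathcal{H})=\infty$.

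Likewise, $x=1^j$ is a default string. It appends $0$, becomes $\langle j\rangle$, and then runs with a clock shifted by one, giving $f_h^{\etoe-7}(1^j)=h(j)$. Your Step~1 only treats strings that are not prefixes of family members, so prefix-freeness does not help. Your original (unfixed) definition fails on the same kind of query, e.g.\ $f_h^{\etoe-7}(1^j)=h(j)$ when the even horizons start at $6$.

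The root cause is that your $h$-dependent outputs are keyed to the absolute tail length. A query started part-way along a chain shifts the effective horizon. A shift by one turns $\iter_i$ into $\iter_{i+1}$, which has the opposite parity. Since $\iter_i=i+4$ runs through consecutive integers, no horizon-dependent triggering rule of this kind can survive.

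What is missing is a way to make the generator's free choice happen only at a \emph{trigger} string that no chain can enter from another string, with everything afterwards independent of the generator. The paper takes $u_{m,n}=0^m10^n1$. This string ends in $1$, while its predecessor $0^m10^n$ is a default string that appends $0$, so $u_{m,n}$ can only ever be a starting string. The paper sets $f_\alpha(u_{m,n})=\alpha_{m,n}$ and then runs an $\alpha$-independent delay line: $f_\alpha(u_{m,n}b0^t)=0$ for $t<2m-2$ and $f_\alpha(u_{m,n}b0^{2m-2})=b$, after which only zeros appear.

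Because the free bit is now written into the string, every start other than some $u_{m,n}$ has a trajectory determined by the string alone. From $u_{m,n}$ itself, the only $\alpha$-dependent steps are $1$ and $2m$, and neither is an odd $\iter\ge5$. At the even horizon $2m$ one gets $f_\alpha^{\etoe-2m}(u_{m,n})=\alpha_{m,n}$, with $\ct$ equal to $\alpha_{m,n}0^{2m-2}\alpha_{m,n}$. This gives infinite Littlestone dimension, and $\ct$ carries no extra information. Giving each even horizon $2m$ its own family indexed by $m$ also removes the cross-horizon interference you were trying to engineer around.
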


\begin{proof}
For every \(m\ge 2\) and \(n\ge 0\), define the string
\[
u_{m,n} := 0^m 1 0^n 1.
\]
For every binary array \(\alpha=(\alpha_{m,n})_{m\ge 2,\; n\ge 0}\in \{0,1\}^{\{2,3,\ldots\}\times (\N\cup\{0\})}\), define a next-token generator
\[
f_\alpha:\{0,1\}^\star \to \{0,1\}
\]
as follows:
\[
f_\alpha(x)=
\begin{cases}
\alpha_{m,n}
& \text{if } x=u_{m,n} \text{ for some } m\ge 2,\ n\ge 0,\\[1mm]
0
& \text{if } x=u_{m,n} b 0^t \text{ for some } m\ge 2,\ n\ge 0,\ b\in\{0,1\},\ 0\le t<2m-2,\\[1mm]
b
& \text{if } x=u_{m,n} b 0^{2m-2} \text{ for some } m\ge 2,\ n\ge 0,\ b\in\{0,1\},\\[1mm]
0
& \text{otherwise.}
\end{cases}
\]
Let
\[
\cF := \{f_\alpha : \alpha\in \{0,1\}^{\{2,3,\ldots\}\times (\N\cup\{0\})}\}.
\]

We first analyze the even horizons. Fix \(m\ge 2\), \(n\ge 0\), and \(\alpha\). Let \(b:=\alpha_{m,n}\). Starting from the input \(u_{m,n}\), the generated bits are
\[
b,\underbrace{0,\ldots,0}_{2m-2\text{ times}},b.
\]
Therefore,
\[
f_\alpha^{\etoe-2m}(u_{m,n})=\alpha_{m,n}.
\]
Hence, for every fixed \(m\ge 2\), the set
\[
X_m:=\{u_{m,n}:n\ge 0\}
\]
shows that \(\cF^{\etoe-2m}\) has infinite Littlestone dimension. Indeed, let \(d\in\N\), and consider the complete binary tree of depth \(d\) in which every node at depth \(j\) is labeled by \(u_{m,j-1}\). Given any branch-label sequence \(y_1,\ldots,y_d\in\{0,1\}\), choose \(\alpha\) so that
\[
\alpha_{m,j-1}=y_j \qquad \text{for all } j=1,\ldots,d,
\]
and define the remaining coordinates of \(\alpha\) arbitrarily. Then
\[
f_\alpha^{\etoe-2m}(u_{m,j-1}) = y_j
\qquad\text{for all } j=1,\ldots,d.
\]
Thus this tree is shattered by \(\cF^{\etoe-2m}\). Since \(d\) is arbitrary,
\[
\LD(\cF^{\etoe-2m})=\infty
\qquad\text{for every } m\ge 2.
\]

Moreover, this lower bound is unchanged under \(\ct\)-supervision. Indeed, for every queried instance \(u_{m,n}\), if \(b:=\alpha_{m,n}\), then the revealed \(2m\)-step chain is
\[
f_\alpha^{\ct-2m}(u_{m,n}) = b0^{2m-2}b.
\]
In particular, the revealed \(\ct\) is completely determined by the final label
\[
f_\alpha^{\etoe-2m}(u_{m,n}) = b.
\]
Hence, on the adversarial sequence above, a learner with \(\ct\)-supervision receives no more information than an \(\etoe\)-learner, so the same lower bound applies.

We now turn to odd horizons. Fix an odd \(\iter\ge 5\). We claim that all functions in \(\cF^{\etoe-\iter}\) coincide, and therefore \(\LD(\cF^{\etoe-\iter})=0\).

Fix \(x\in\{0,1\}^\star\) and two arrays \(\alpha,\beta\). We show that
\[
f_\alpha^{\etoe-\iter}(x)=f_\beta^{\etoe-\iter}(x).
\]

We will use the following simple observation: for every \(m\ge 2\), \(n\ge 0\), \(b\in\{0,1\}\), and \(r\ge 0\), the string
\[
u_{m,n} b 0^{2m-2} b 0^r
\]
is of none of the special forms appearing in the definition of \(f_\alpha\). Indeed, if it were of the form \(u_{m',n'}\) or \(u_{m',n'} c0^t\), then necessarily \(m'=m\) and \(n'=n\), since the first two occurrences of \(1\) already determine the prefix \(u_{m,n}\). But then the remaining suffix \(b0^{2m-2}b0^r\) cannot be empty, and also cannot equal \(c0^t\): if \(b=1\) then it contains two \(1\)'s, while if \(b=0\) it equals \(0^{2m+r}\), which would force \(c=0\) and \(t=2m+r-1>2m-2\). Thus no such string is special.

We now distinguish three cases.

\paragraph{Case 1: \(x=u_{m,n}\) for some \(m\ge 2\), \(n\ge 0\).}
Let \(b:=\alpha_{m,n}\). Under \(f_\alpha\), the first generated bit is \(b\), the next \(2m-2\) generated bits are \(0\), and the \(2m\)-th generated bit is again \(b\). After these \(2m\) steps, the current string is
\[
u_{m,n} b 0^{2m-2} b.
\]
By the observation above, from this point on, only the default rule applies, so all subsequent generated bits are \(0\). Therefore the only possibly \(\alpha\)-dependent generated bits occur at steps \(1\) and \(2m\). Since \(\iter\ge 5\) is odd it holds that \(\iter\neq 1\) and \(\iter\neq 2m\), and hence \(f_\alpha^{\etoe-\iter}(x)\) is independent of \(\alpha\). In particular,
\[
f_\alpha^{\etoe-\iter}(x)=f_\beta^{\etoe-\iter}(x).
\]

\paragraph{Case 2: \(x=u_{m,n} b 0^t\) for some \(m\ge 2\), \(n\ge 0\), \(b\in\{0,1\}\), \(0\le t\le 2m-2\).}
Starting from \(x\), the next \(2m-2-t\) generated bits are \(0\), then the next generated bit is \(b\), and after that the current string has the form
\[
u_{m,n} b 0^{2m-2} b.
\]
By the observation above, from this point on, all subsequent generated bits are \(0\). Therefore, the entire future trajectory from \(x\) is determined by the string \(x\) itself, and in particular is independent of \(\alpha\). Hence
\[
f_\alpha^{\etoe-\iter}(x)=f_\beta^{\etoe-\iter}(x).
\]

\paragraph{Case 3: \(x\) is of neither of the above two forms.}
Then \(f_\alpha(x)=f_\beta(x)=0\). As long as the autoregressive trajectory stays outside the set
\[
\{u_{m,n}:m\ge 2,\ n\ge 0\}
\;\cup\;
\{u_{m,n} b0^t : m\ge 2,\ n\ge 0,\ b\in\{0,1\},\ 0\le t\le 2m-2\},
\]
both functions continue to output \(0\). If the trajectory never enters this set, then clearly
\[
f_\alpha^{\etoe-\iter}(x)=f_\beta^{\etoe-\iter}(x)=0.
\]
Otherwise, let \(s\ge 1\) be the first time such a string is reached. Since all bits generated before time \(s\) are \(0\), the reached string cannot be of the form \(u_{m,n}\), because every \(u_{m,n}\) ends with \(1\). Hence at time \(s\) the trajectory reaches a string covered by Case 2, and from that point on the continuation is independent of \(\alpha\). Therefore
\[
f_\alpha^{\etoe-\iter}(x)=f_\beta^{\etoe-\iter}(x).
\]

We have proved that for every odd \(\iter\ge 5\), all functions in \(\cF^{\etoe-\iter}\) are identical. Hence
\[
|\cF^{\etoe-\iter}|=1,
\qquad\text{so}\qquad
\LD(\cF^{\etoe-\iter})=0.
\]
Therefore, for every odd \(\iter\ge 5\), online learning with \(\ct\)-supervision is also possible, since \(\ct\)-supervision is only more informative than \(\etoe\)-supervision.

Finally, for the pairwise distinct sequence \(\iter_i=i+4\), if \(i\) is even then \(\iter_i\) is even and at least \(6\), so
\[
\LD(\cF^{\etoe-\iter_i})=\infty,
\]
and moreover, the same impossibility holds even for online learning with \(\ct\)-supervision. If \(i\) is odd then \(\iter_i\) is odd and at least \(5\), so
\[
\LD(\cF^{\etoe-\iter_i})=0<\infty,
\]
and therefore online learning with \(\ct\)-supervision is also possible.
\end{proof}

	\section{Stochastic Autoregressive Lower Bound} \label{sec:stochastic}
	
	We now consider a stochastic variant of the autoregressive model.
	
	\begin{definition}[Stochastic autoregressive next-token generators]
		A \emph{stochastic base class} is a class
		\[
		\mathcal{G} \subseteq \{g:\{0,1\}^\star \to \Delta(\{0,1\})\},
		\]
		where $\Delta(\{0,1\})$ denotes the set of distributions over $\{0,1\}$. For $g \in \mathcal{G}$ and $x\in\{0,1\}^\star$, define $\bar g(x)$ to be the random string obtained by sampling $Y \sim g(x)$ and outputting $\operatorname{append}(x,Y)$. For $M \in \N$, define $g^{\ct-M}$ and $g^{\etoe-M}$ exactly as in the deterministic case, except that these are now induced distributions obtained by composing $\bar g$ for $M$ many steps. As before, write $\mathcal{G}^{\etoe-M} := \{g^{\etoe-M} : g \in \mathcal{G}\}$.
	\end{definition}
	
	In the associated online prediction problem, on round $t$ the learner receives an instance $x_t \in \{0,1\}^\star$, predicts a bit $\hat y_t \in \{0,1\}$, and then observes a random label $Y_t$, sampled from the relevant target distribution. In the direct next-token problem we have $Y_t \sim g_\star(x_t)$, while in the $\etoe$ problem we have $Y_t \sim g_\star^{\etoe-M}(x_t)$, for some unknown $g_\star \in \mathcal{G}$. For a learner $\cA$, target $g_\star$, and instance sequence $(x_t)_{t=1}^T$, define the expected regret relative to the optimal prediction 
	\[
	\operatorname{Reg}_T(\cA,g_\star,(x_t))
	:=
	\Ex\!\left[\sum_{t=1}^T \ind[\hat y_t \neq Y_t]\right]
	-
	\sum_{t=1}^T \min_{a \in \{0,1\}} \Pr[a \neq Y_t].
	\]
	
	\begin{theorem}
		\label{thm:stochastic-e2e-separation}
		For every odd $M \geq 1$ there exists a stochastic base class $\mathcal{G} = \{g_{-},g_{+}\}$ such that:
		\begin{enumerate}
			\item there exists an online learner for the direct next-token problem on $\mathcal{G}$ whose expected regret is $O(1)$ on every instance sequence;
			\item for every (possibly randomized) online learner $\cA$ for $\mathcal{G}^{\etoe-M}$, there exists a target $g_\star \in \mathcal{G}$ such that, even on the constant sequence $x_t = \emptyset$, its expected regret is $\Omega(2^M)$ at some horizon $T = \Theta(4^M)$.
		\end{enumerate}
	\end{theorem}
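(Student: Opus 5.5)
The plan is to build the two generators so that the end-to-end label on the empty prompt is the parity of $M$ independent biased coins, and then run a two-point Le Cam argument. Under $g_-$ and $g_+$ a single next token has constant bias ($\pm 1/4$), but the parity of $M$ such coins has bias only $2^{-M-1}$. The sign of that tiny bias flips between $g_-$ and $g_+$ exactly because $M$ is odd.

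\emph{Construction.} Write $\operatorname{par}(x)$ for the XOR of the bits of $x$, and let $q_\pm := 1/2 \mp 1/4$. Define
\[
g_\pm(x) := \text{law of } \operatorname{par}(x) \oplus Z, \qquad Z \sim \Ber(q_\pm),
\]
for every $x \in \{0,1\}^\star$. On input $x$ with $|x|<M-1$ one could instead use plain $\Ber(q_\pm)$; the point is only the last step. Starting from $x_t=\emptyset$, let $Y_1,\ldots,Y_{M-1}$ be the first $M-1$ generated tokens and write $\operatorname{par}(Y_{<M})$ for their XOR. I would arrange that:
\begin{itemize}
\item $Y_1,\ldots,Y_{M-1}$ are i.i.d.\ $\Ber(q_\pm)$ (using the plain-coin rule on short inputs);
\item the final token is $Y_M=\operatorname{par}(Y_{<M})\oplus Z_M$, so $Y_M$ is the XOR of $M$ i.i.d.\ $\Ber(q_\pm)$ coins.
\end{itemize}
By the piling-up lemma, $\Pr[Y_M=1]=\tfrac12\bigl(1-(1-2q_\pm)^M\bigr)=\tfrac12\bigl(1-(\pm\tfrac12)^M\bigr)$. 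Since $M$ is odd, the two targets give $\Ber(1/2 \mp 2^{-M-1})$, a gap of $\delta = 2^{-M}$ with opposite optimal predictions.

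\emph{Item 1 (direct next-token problem).} On any instance $x_t$, the learner knows the deterministic part of the rule: $\operatorname{par}(x_t)$, or $0$ on short inputs. So $W_t := Y_t \oplus \operatorname{par}(x_t)$ (resp.\ $Y_t$) is an i.i.d.\ $\Ber(q_\star)$ sample, whatever the instance sequence. The learner predicts the majority estimate of the sign of $q_\star - 1/2$ from $W_1,\ldots,W_{t-1}$, XORed back with the known part. A wrong sign costs $O(1)$ regret per round, since the bias is constant $1/4$. By Hoeffding, the probability of a wrong sign at round $t$ is $e^{-\Omega(t)}$. Summing the geometric series gives $O(1)$ expected regret.

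\emph{Item 2 (the $\etoe$ lower bound).} Take $x_t=\emptyset$ for all $t$, so the observations are i.i.d.\ $P_\pm=\Ber(1/2\mp\delta/2)$. Per round, predicting the wrong majority label costs exactly $\delta$ regret, so
\[
\operatorname{Reg}_T(\cA,g_\pm,\cdot)=\delta\sum_{t=1}^T\Pr_\pm[\hat y_t \ne a_\pm],
\]
where $a_\pm$ is the optimal label under $g_\pm$.

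The main step is Le Cam's two-point method. The learner's prediction $\hat y_t$ is a (randomized) function of $t-1$ samples. Hence
\[
\Pr_+[\hat y_t\ne a_+]+\Pr_-[\hat y_t\ne a_-]\ \ge\ 1-\mathrm{TV}\bigl(P_+^{\otimes (t-1)},P_-^{\otimes (t-1)}\bigr).
\]
By Pinsker and $\mathrm{KL}(P_+\|P_-)=O(\delta^2)$, this total variation is at most $O(\sqrt{t}\,\delta)$. Choose $T=c\,4^M=c/\delta^2$ with $c$ a small constant. Then the right-hand side is at least $1/2$ for every $t\le T$. Summing over $t\le T$, the two regrets add up to at least $\delta T/2=\Omega(2^M)$. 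Therefore one of the two targets $g_\star\in\{g_-,g_+\}$ forces expected regret $\Omega(2^M)$ at horizon $T=\Theta(4^M)$.

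\emph{Main obstacle.} The hard part is the definition itself. It must make the end-to-end bias exactly a product of $M$ signs, so that odd $M$ separates $g_-$ from $g_+$ (for even $M$ the two end-to-end laws would coincide). At the same time, every single next-token distribution must be learnable with $O(1)$ regret for arbitrary instance sequences, and this is what the "known parity offset" trick secures. The KL/TV computation afterwards is routine.
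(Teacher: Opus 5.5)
Your proof is correct and follows the paper's approach. In both, the end-to-end label on $\emptyset$ is the XOR of $M$ i.i.d.\ coins of bias $\pm 1/4$, hence $\Ber(\frac12 \pm 2^{-M-1})$ with the signs swapped for odd $M$; item 1 follows from Hoeffding on the offset-corrected labels, and item 2 from a two-point bound (you use Le Cam with Pinsker where the paper uses Bretagnolle--Huber).

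The only real difference is the generator. The paper's rule $b(x)\oplus Z$, with $b(x)$ the \emph{last} bit, produces the XOR structure directly and works for all $M$ at once. Your whole-string parity rule needs the plain-coin patch on inputs of length $<M-1$, and that patch is mandatory, not optional as ``one could instead'' suggests. With parity applied on all inputs one gets $Y_M = Z_{M-1}\oplus Z_M$ (for $M\ge 3$), which has the same law under $g_-$ and $g_+$.
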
 In particular, the $\etoe$ regret can be $\Omega\left(2^M\right)$, much larger than $\log M$ which is the upper bound growth of $\etoe$-learning with a fixed Littlestone dimension base class (given in Theorem~\ref{thm:e2e-upper-bound}) in the standard (deterministic functions) model. 
	
	\begin{proof}
		For $x \in \{0,1\}^\star$, let $b(x)$ denote the last bit of $x$, with the convention $b(\emptyset)=0$. For $\sigma \in \{-1,+1\}$ define
		\[
		g_\sigma(x)
		=
		\text{law of } b(x) \oplus Z,
		\qquad
		Z \sim \Ber\!\left(\frac12 + \frac{\sigma}{4}\right).
		\]
		Thus $g_{-}$ flips the current last bit with probability $1/4$, whereas $g_{+}$ flips it with probability $3/4$.
		
		We first prove the first item in the theorem, that learning the direct next token is trivial. Fix a target $g_\sigma$ and an arbitrary instance sequence $(x_t)_{t=1}^T$. Define $Z_t := Y_t \oplus b(x_t)$. Then $Z_1,Z_2,\ldots$ are i.i.d.\ $\Ber(\frac12+\frac{\sigma}{4})$, independently of the instance sequence. Consider the learner that predicts arbitrarily on round $1$, and on round $t \geq 2$ computes the empirical mean $\bar Z_{t-1} := \frac{1}{t-1}\sum_{s=1}^{t-1} Z_s$, sets $\hat\sigma_t = +1$ if $\bar Z_{t-1} \geq \frac12$ and $\hat\sigma_t=-1$ otherwise, and then predicts $b(x_t)$ if $\hat\sigma_t=-1$ and $1-b(x_t)$ if $\hat\sigma_t=+1$. If $\hat\sigma_t=\sigma$, this is the Bayes-optimal prediction for $g_\sigma(x_t)$; otherwise the regret at round $t$ is exactly $3/4-1/4=1/2$. Hence,
		\[
		\operatorname{Reg}_T
		\leq
		\frac12 + \frac12 \sum_{t=2}^T \Pr[\hat\sigma_t \neq \sigma].
		\]
		Since the true mean is either $1/4$ or $3/4$, Hoeffding's inequality gives $\Pr[\hat\sigma_t \neq \sigma] \leq e^{-(t-1)/8}$, and therefore $\operatorname{Reg}_T = O(1)$.
		
	We now prove the second item. If $M=1$, then $g_\sigma^{\etoe-1}=g_\sigma$. On the constant sequence $x_t=\emptyset$, the two targets induce $\Ber(1/4)$ and $\Ber(3/4)$. On round $1$, if the learner predicts $1$ with probability $p$, then its expected regret is $p/2$ under $g_{-}$ and $(1-p)/2$ under $g_{+}$. Thus one of the two targets yields regret at least $1/4$ at horizon $T=1$, which is $\Omega(2^M)$ for $M=1$. So it remains to consider odd $M \geq 3$.
	
	Fix such an odd $M$, and again consider the constant sequence $x_t=\emptyset$. Let $q_m^\sigma$ be the probability that after $m$ autoregressive steps the current last bit equals $1$ under $g_\sigma$, starting from $\emptyset$. The process is a two-state Markov chain, and
	\[
	q_{m+1}^\sigma
	=
	\left(\frac12+\frac{\sigma}{4}\right)
	+
	\left(1-2\left(\frac12+\frac{\sigma}{4}\right)\right) q_m^\sigma,
	\qquad
	q_0^\sigma = 0.
	\]
	Solving this recursion gives $q_m^\sigma = \frac{1-(-\sigma/2)^m}{2}$. Since $M$ is odd, $q_M^\sigma = \frac12 + \sigma 2^{-M-1}$. Let $\delta_M := 2^{-M-1}$. Hence on input $\emptyset$, the final answer under $g_{-}^{\etoe-M}$ is $\Ber(\frac12-\delta_M)$, while under $g_{+}^{\etoe-M}$ it is $\Ber(\frac12+\delta_M)$. The Bayes-optimal predictions are therefore $0$ and $1$ respectively, and using the wrong one incurs regret $2\delta_M$ per round.
	
	Fix any randomized learner $\cA$. Its prior-averaged expected regret under the uniform prior on $\{g_{-},g_{+}\}$ is the expectation, over the learner's internal randomness, of the corresponding quantity for the induced deterministic learner. It therefore suffices to prove the lower bound for deterministic learners.
	
Fix such a deterministic learner. On round $t$, its prediction is a function of the first $t-1$ observed labels. Let $P_-^{t-1}$ and $P_+^{t-1}$ denote the distributions of these $t-1$ labels under $\Ber(\frac12-\delta_M)$ and $\Ber(\frac12+\delta_M)$ respectively, and let $A_t$ be the event that the learner predicts $1$ on round $t$. Since the Bayes-optimal prediction is $0$ under $g_{-}$ and $1$ under $g_{+}$, the prior-averaged probability of predicting the wrong Bayes-optimal bit at round $t$ is
\[
\frac12 P_-^{t-1}(A_t) + \frac12 P_+^{t-1}(A_t^c).
\]
By the Bretagnolle--Huber inequality \cite[Theorem 14.2]{lattimore2020bandit},
\[
P_-^{t-1}(A_t) + P_+^{t-1}(A_t^c)
\geq
\frac12 \exp\!\left(-\mathrm{KL}(P_-^{t-1},P_+^{t-1})\right),
\]
where $\mathrm{KL}(P,Q)$ denotes the Kullback--Leibler divergence from $P$ to $Q$. Therefore the prior-averaged probability of predicting the wrong Bayes-optimal bit is at least
\[
\frac14 \exp\!\left(-\mathrm{KL}(P_-^{t-1},P_+^{t-1})\right).
\]

It remains to bound the divergence. Since the labels are i.i.d.,
\[
\mathrm{KL}(P_-^{t-1},P_+^{t-1})
=
(t-1)\,\mathrm{KL}\!\left(\Ber\!\left(\frac12-\delta_M\right),\Ber\!\left(\frac12+\delta_M\right)\right).
\]
Moreover,
\[
\mathrm{KL}\!\left(\Ber\!\left(\frac12-\delta\right), \Ber\!\left(\frac12+\delta\right)\right)
=
2\delta \log\!\left(\frac{1+2\delta}{1-2\delta}\right)
\leq
16\delta^2
\qquad
(\delta \leq 1/4),
\]
where the inequality uses $\log\!\left(\frac{1+u}{1-u}\right)\le 4u$ for $u \in [0,1/2]$. Hence
\[
\mathrm{KL}(P_-^{t-1},P_+^{t-1}) \leq 16(t-1)\delta_M^2.
\]
Thus, whenever $t \leq 1/(32\delta_M^2)$, we have $\mathrm{KL}(P_-^{t-1},P_+^{t-1}) \leq 1/2$, and so the prior-averaged probability of predicting the wrong Bayes-optimal bit is at least $\frac14 e^{-1/2}=\Omega(1)$. Since each such mistake contributes regret $2\delta_M$, the prior-averaged regret at round $t$ is $\Omega(\delta_M)$.
	
	Setting $T := \left\lfloor \frac{1}{32\delta_M^2} \right\rfloor$, we have $T=\Theta(4^M)$. Summing over rounds $t=1,\ldots,T$, the prior-averaged regret up to time $T$ is $\Omega(T\delta_M)=\Omega(1/\delta_M)=\Omega(2^M)$. Therefore at least one of the two targets $g_{-},g_{+}$ yields regret $\Omega(2^M)$ at horizon $T=\Theta(4^M)$.
	\end{proof}

\section*{Acknowledgments}
We thank Steve Hanneke and Shay Moran for their significant contribution to this work, and concretely for suggesting the class and the lower bound proof strategy in Theorem~\ref{thm:e2e-taxonomy-intro}.

Idan Mehalel is supported by the European Research Council (ERC) under the European Union’s Horizon 2022 research and innovation program (grant agreement No. 101041711), the Israel Science Foundation (grant number 2258/19), and the Simons Foundation (as part of the Collaboration on the Mathematical and Scientific Foundations of Deep Learning).
Ilan Doron-Arad is supported by grant NSF DMS-2031883 and Vannevar Bush Faculty Fellowship ONR-N00014-20-1-2826 (PI Mossel). 
	Elchanan Mossel is partially supported by NSF DMS-2031883, Vannevar Bush Faculty Fellowship ONR-N00014-20-1-2826, MURI N000142412742, and a Simons Investigator Award.

\bibliographystyle{alphaurl}
\bibliography{bib.bib}

\end{document}

\section{Questions/tasks}

\begin{enumerate}

    \item not very related - but check about the probabilistic model Elchanan suggested.

    \item Check how necessary are the conditions on the rates in Theorem~\ref{thm:e2e-taxonomy}. Can we break the theorem with a ``crazy" rate?

    \item Discuss unions of $T$ values. The case with an upper bound on the maximal $T$, and the case where there is no such upper bound.

    \item non-Littlestone base classes

    \item Can we characterize the terms that guarantee $\LD(\cF^{\etoe - T}) < \infty$? and when it is also sublinear in $T$? 
    can we guarantee this with some notion more relaxed than Ldim? what are the possible $\LD(\cF^{\etoe - T})$ values in terms of $T$? are there classes that behave differently between different values of $T$?
    
\textcolor{red}{some general thoughts: I think one way to make the story interesting is to focus on function classes such as we did in LTF: input not generally from $\Sigma^{*}$, but from $\Sigma^{\leq d}$; then, it is always the case that the $\LD(\cF) < \infty$ and we can focus on growth in terms of $T$. For LTF, the functions in the class treat the input modulu $d$ so naturally there is no dependence on $T$. I think we can have a result that for (almost) any function satisfying $f(T) \leq T$ there is a function that grows according to $f$. A convenient (tautological) way for me to look at the autoregressive model: It can be viewed as a layered DAG, each layer $t$ corresponds to the transition from $\Sigma^t$ to $\Sigma^{t+1}$, where a directed edge means the function chooses the next bit, and we consider length $T$ paths in this graphs to define the autoregresive functions.}

    \item Can we characterize the terms that guarantee $\ct$-learning in the online setting? are there classes that are $\ct$-learnable but not $\etoe$-learnable? 
    What can be guaranteed by some notion more relaxed than Ldim?
    \textcolor{red}{Examples of classes that are easily learnable in CoT but not e2e can be constructed by adding an "identification" gadget that reveal the identity of the function given the entire CoT but presrves the output after $T$ iterations.}

    \item Can we improve the simple $d^2$ bound for linear classifiers, or show that it is tight? can the bound be improved with $\ct$-supervision? \textcolor{blue}{COMPLETELY SOLVED}
    \item For linear classifiers, what if we assume that the data is linearly separated with margin $\gamma$ by the next token predictor? can we derive an autoregressive variant of the perceptron?
\textcolor{red}{Note that there is also $\tilde O(d^2)$ efficient for LTF on the binary $d$-cube I found in Theorem 3.3 in this paper https://igi-web.tugraz.at/PDF/49.pdf with alphabet size $n = 2$.
Could be worth to try generalizing to an efficient algorithm also for autoregressive, but my intuition says it may be impossible. A direction for computationally efficient algorithms can be, as Elchanan mentioned, from deep networks as each application of the threshold is essentially a non linear activation. Now I see we can probably use Theorem 4.4 in \cite{joshi2025theory} for that..
}

    \item What is the Ldim of linear classifiers over the d-cube? is this not known? \textcolor{blue}{Solved. Not by us.}
    \item Try to design an ``autoregressive perceptron". With and without margin. The mistake bound might be looser in the no-margin case, but the algorithm is probably more efficient. \textcolor{red}{I think this can be immediately shown to be impossible using a standard reduction from the hardness results of \cite{joshi2025theory} for the PAC variant, which does not have an efficient algorithm.}

    \item Extend the results to multiclass.

    \item Given a base class $\cF$, what can we say about the class $\bigcup_{T \in \N} \cF^{\etoe-T}$? It would be interesting to show this is much harder than for a fixed $T$ but give necessary and sufficient conditions for when it is actually not that hard. 
    \textcolor{red}{I think that with bounds on union of $\LD$ classes we can get at most $\log (T+ \LD(\cF))$ for this generalization compared to Theorem~\ref{thm:e2e-upper-bound}. Can we show that this is necessary for some function class?}
\end{enumerate}

We now extend Theorem~\ref{thm:e2e-taxonomy} to well-behaved sub-logarithmic rates. We use the technique of \cite{hanneke2026sample}, which utilizes a cartesian product of the class from Theorem~\ref{thm:e2e-taxonomy}. Given two classes $\cH_1 \subset \{0,1\}^{\cX_1},  \cH_2 \subset \{0,1\}^{\cX_2}$ and a domain $\cX$ such that $\cX_1 \cup \cX_2 \subset \cX$ and $\cX_1 \cap \cX_2 = \emptyset$, we define their \emph{cartesian product class} over $\cX$ by
\[
\cH_1 \uplus \cH_2 := \{h_v: v \in \cH_1 \times \cH_2\}
\]
where $h_v$ is defined by
\[
h_v(x) :=
\begin{cases}
    v_i(x) & \exists i \in \{1,2\}: x \in \cX_i,\\
    0 & \text{otherwise.}
\end{cases}
\]

In this paper, we use $\cX_1, \cX_2 \subset \{0,1\}^\star$ and $\cX, = \{0,1\}^\star$.
It is known that $\VC(\cH_1 \uplus \cH_2) = \VC(\cH_1) + \VC(\cH_2)$. We prove the analog lemma for the Littlestone dimension.

\begin{lemma}\label{lem:cartesian-lit-bound}
    Let $\cH_1 \subset \{0,1\}^{\cX_1},  \cH_2 \subset \{0,1\}^{\cX_2}$ such that $\cX_1 \cap \cX_2 = \emptyset$. Then
    \[
    \LD(\cH_1 \uplus \cH_2) = \LD(\cH_1) + \LD(\cH_2).
    \]
\end{lemma}

\begin{proof}
For the lower bound, take a tree $\tree_1$ of depth $\LD(\cH_1)$ which is shattered by $\cH_1$, and a tree $\tree_2$ of depth $\LD(\cH_2)$ which is shattered by $\cH_2$. Now, construct the following tree $\tree$ of depth $\LD(\cH_1) + \LD(\cH_2)$: take $\tree_1$, and concatenate $\tree_2$ from every leaf of $\tree_1$. We claim that $\tree$ is shattered by $\cH_1 \uplus \cH_2$. Indeed, take a branch $b$ of $\tree$, and let $b_1$ be the part that lies in $\tree_1$, and $b_2$ be the part that lie in $\tree_2$. So, there is a function $h_1 \in \cH_1$ who realizes $b_1$ and a function $h_2 \in \cH_2$ who realizes $b_2$. Thus $b$ is realized by $h_{(h_1, h_2)} \in \cH_1 \uplus \cH_2$.

For the upper bound, we describe a deterministic online learner $A$ for $\cH_1 \uplus \cH_2$ who makes at most $\LD(\cH_1) + \LD(\cH_2)$ mistakes. Let $A_1$ be a learner for $\cH_1$ with mistake bound $\LD(\cH_1)$, and let $A_2$ be a learner for $\cH_2$ with mistake bound $\LD(\cH_1)$. The learner maintains both learners $A_1,A_2$ as follows. Let $T_1$ be the set of rounds in which $x_t \in \cX_1$, and let $T_2$ be the set of rounds in which $x_t \in \cX_2$. For each $r \in \{0,1\}$, in every round of $T_r$, $A$ predicts as $A_r$ and then updates $A_r$ with the returned feedback. In any other round, $A$ predicts $0$.  So indeed $A$ makes at most $\LD(\cH_1) + \LD(\cH_2)$ many mistakes.
\end{proof}

Lemma~\ref{lem:cartesian-lit-bound} can of course be extended to hold for any number of finite classes. We are now ready to prove Theorem~\ref{thm:e2e-taxonomy}.

\begin{proof}[Proof of Theorem~\ref{thm:e2e-taxonomy}]
    Let $r$ be a well-behaved sub-logarithmic growth rate. Let $d\in \mathbb{R}$ such that $r(\iter) \leq d \log \iter$ for all $\iter \geq \iter_0$, and such that there exists $\iter \geq \iter_0$ for which $r(\iter) > 2d \log \iter$. Define for all $\iter$ the rate $\tilde{r}$ by
    \[
    \tilde{r}(\iter) := \frac{r(\iter)}{r(\iter_0)}.
    \]
\end{proof}

Theorem~\ref{lem:cot-bound-lit} also allows to infer an agnostic regret bound. We use the following know result.

\begin{theorem}[\cite{hanneke2023multiclass}]\label{thm:agnostic-multiclass-regret}
    Let $\cF \subset \cY^{\cX}$ be a function class, where there is no restriction on the size of $\cY$. Then there exists an agnostic online learner for $\cF$ with regret
    \[
    O \mleft( \LD(\cF) T \log T \mright).
    \]
\end{theorem}

Theorem~\ref{lem:cot-bound-lit} and Theorem~\ref{thm:agnostic-multiclass-regret} implies the following

\begin{corollary}\label{cor:e2e-agnostic-multiclass-regret}
    For any class $\cF$ (even with non-binary label set) and any $\iter$, there exists an agnostic online learner for $\cF^{\etoe-\iter}$ with regret
    \[
    O \mleft( \LD(\cF) T \log T \mright)
    \]
    for all sufficiently large $T$.
\end{corollary}